\documentclass[accepted]{uai2026} 
                      
\usepackage[american]{babel}

\usepackage{hyperref}
\usepackage{url}

\usepackage{amsmath,amssymb,amsthm,mathtools}
\usepackage{thm-restate}
\usepackage{thmtools} 

\usepackage{here}
\usepackage{graphicx}

\usepackage{booktabs}  
\usepackage{algorithm}
\usepackage[noend]{algpseudocode}

\usepackage{float}
\usepackage{wrapfig}

\usepackage{enumitem}

\newcommand \indep{\mathop{\perp\!\!\!\!\perp}}                                 
\newcommand \dep{\mathop{\not\perp\!\!\!\!\perp}}

\DeclareMathOperator{\E}{\mathbb{E}}

\DeclareMathOperator{\pr}{\mathrm{P}}
\DeclareMathOperator{\I}{\mathbf{I}}

\newcommand{\biX}{\textbf{\textit{X}}}
\newcommand{\bix}{\textbf{\textit{x}}}
\newcommand{\biY}{\textbf{\textit{Y}}}
\newcommand{\biy}{\textbf{\textit{y}}}
\newcommand{\biZ}{\textbf{\textit{Z}}}
\newcommand{\biz}{\textbf{\textit{z}}}
\newcommand{\biA}{\textbf{\textit{A}}}
\newcommand{\bia}{\textbf{\textit{a}}}
\newcommand{\biYhat}{\hat{\textbf{\textit{Y}}}}
\newcommand{\biyhat}{\hat{\textbf{\textit{y}}}}
\newcommand{\biE}{\textbf{\textit{E}}}

\newcommand{\biD}{\textbf{\textit{D}}}
\newcommand{\biW}{\textbf{\textit{W}}}

\newcommand{\biT}{\textbf{\textit{T}}}

\newcommand{\vDAG}{G_{\mathrm{v}}}

\newcommand{\vNode}{\mathcal{V}}
\newcommand{\cNode}{\mathcal{C}}

\newcommand{\cDir}{\mathcal{E}}

\newcommand{\cDAG}{G}
\newcommand{\cMEC}{[G]}
\newcommand{\cCPDAG}{\mathcal{G}}

\newcommand{\IA}{\mathcal{A}}
\newcommand{\IAs}{\mathbf{\mathcal{A}}}
\newcommand{\typeMarg}{\mathsf{marg}}
\newcommand{\typeCond}{\mathsf{cond}}
\newcommand{\typeNever}{\mathsf{never}}

\newcommand{\biC}{\textbf{\textit{C}}}

\newcommand{\npp}{M}
\newcommand{\ipp}{m}

\newcommand{\biP}{\textbf{\textit{P}}} 
\newcommand{\biQ}{\textbf{\textit{Q}}} 
\newcommand{\biR}{\textbf{\textit{R}}} 
\newcommand{\biV}{\textbf{\textit{V}}} 
\newcommand{\biU}{\mathbf{U}}

\newcommand{\myra}{i}
\newcommand{\myrb}{i\hspace{-1pt}i}
\newcommand{\myrc}{i\hspace{-1pt}i\hspace{-1pt}i}
\newcommand{\myrd}{i\hspace{-1pt}v}
\newcommand{\myre}{v}
\newcommand{\myrf}{v\hspace{-1pt}i}

\algnewcommand\algorithmicforeach{\textbf{for each}}
\algdef{S}[FOR]{ForEach}[1]{\algorithmicforeach\ #1\ \algorithmicdo}

\theoremstyle{plain}

\declaretheoremstyle[
  headpunct={:},
  headfont=\bfseries,          
  bodyfont=\itshape,           
  notefont=\normalfont,  
]{amsthmstyle}

\declaretheorem[
  style=amsthmstyle,
  name=Theorem,
  numberwithin=section
]{theorem}

\newtheorem{lemma}[theorem]{Lemma}

\theoremstyle{definition}
\newtheorem{definition}[theorem]{Definition}
\newtheorem{assumption}[theorem]{Assumption}
\theoremstyle{remark}
\newtheorem{remark}[theorem]{Remark}
\declaretheorem[
  style=amsthmstyle,
  sibling=theorem,
  name=Assumption,
  numberwithin=section
]{asmp-restate}

\declaretheorem[
  style=amsthmstyle,
  sibling=theorem,
  name=Theorem,
  numberwithin=section
]{thm-restate}

\usepackage{xcolor}
\definecolor{navyblue}{rgb}{0.0, 0.0, 0.5}
\hypersetup{
    colorlinks=true,
    allcolors=navyblue
}
\usepackage[capitalize,noabbrev]{cleveref}
\renewcommand{\eqref}[1]{(\ref{#1})}

\usepackage{natbib} 
\title{Fairness under Graph Uncertainty: 
Achieving Interventional Fairness \\
with Partially Known Causal Graphs over Clusters of Variables}

\author[1]{{Yoichi Chikahara}{}}
\affil[1]{%
    NTT Communication Science Laboratories\\
    Kyoto\\
    Japan
}

\begin{document}
\maketitle

\begin{abstract}
  Algorithmic decisions about individuals require predictions that are not only accurate but also fair with respect to sensitive attributes such as gender and race. Causal notions of fairness align with legal requirements, yet many methods assume access to detailed knowledge of the underlying causal graph, which is a demanding assumption in practice. 
  We propose a learning framework that achieves interventional fairness by leveraging a causal graph over \textit{clusters of variables}, which is substantially easier to estimate than a variable-level graph. 
  With possible \textit{adjustment cluster sets} identified from such a cluster causal graph, 
  our framework trains a prediction model by reducing the worst-case discrepancy between interventional distributions across these sets. 
  To this end, we develop a computationally efficient barycenter kernel maximum mean discrepancy (MMD)
  that scales favorably with the number of sensitive attribute values. 
  Extensive experiments show that our framework strikes a better balance between fairness and accuracy than existing approaches, 
  highlighting its effectiveness under limited causal graph knowledge.
\end{abstract}

\section{Introduction}\label{sec:intro}

Machine learning is increasingly used to automate high-stakes decisions about individuals, such as hiring and lending \citep{houser2019can,fuster2022predictably}. However, the substantial societal impact of such algorithmic decisions raises concerns about \textit{discrimination} with respect to sensitive attributes such as gender and race. Because discrimination is often legally assessed based on the causal relationship between a challenged decision and sensitive attributes, many causality-based fairness notions have been proposed \citep{kusner2017counterfactual,salimi2019interventional,wu2019pc}.

Nevertheless, achieving causality-based fairness remains challenging. Most existing methods assume access to the ground-truth causal graph over an individual’s features \citep{kusner2017counterfactual,chiappa2018path,salimi2019interventional,chikahara2021learning}. In the absence of perfect domain knowledge, this assumption forces us to solve the notoriously difficult problem of learning the entire causal graph structure from observational data—even though our ultimate objective is simply to ensure fairness.
Moreover, learning the full graph relies on restrictive functional assumptions about the data-generating process \citep{glymour2019review}, and their violations can undermine fairness guarantees.

To relax these demanding assumptions, recent methods leverage a \textit{completed partially directed acyclic graph} (CPDAG), which is identifiable from conditional independence relations \citep{li2024local,zuo2024interventional}. 
These methods aim to achieve a causality-based notion called \textit{interventional fairness} \citep{salimi2019interventional} 
by inferring interventional distributions of the predictions under interventions on sensitive features based on a CPDAG. However, estimating a CPDAG remains challenging in high-dimensional settings: it typically requires a large number of conditional independence tests, so estimation errors can accumulate and ultimately undermine the reliability of fairness guarantees.

	\begin{figure*}[t]
		\includegraphics[height=2.85cm]{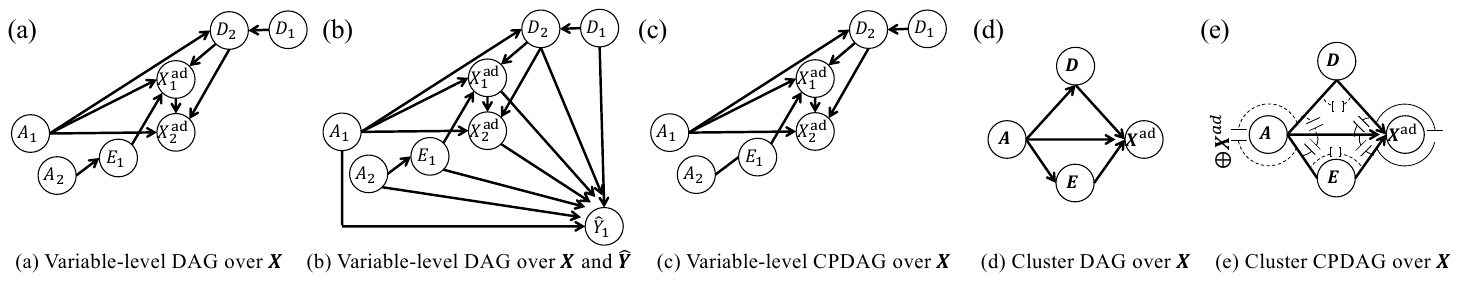}
		\centering 
			\caption{Causal graphs representing a scenario of hiring decisions: (a) DAG over $\biX$, (b) DAG over $\biX$ and $\biYhat$, (c) CPDAG over $\biX$, (d) cluster DAG over $\biX$, and (e) cluster CPDAG over $\biX$ with independence arcs and connection/separation marks.} 
		\label{fig:toy_var_graph}
	\end{figure*}

The goal of this paper is to ensure interventional fairness in more realistic scenarios where we only have access to a \textit{cluster CPDAG} \citep{anand2025causal}, i.e., a CPDAG defined over clusters of variables formed according to a user-specified partition. 
Since such clustering reduces the graph size,
cluster CPDAG inference needs 
substantially fewer conditional independence tests than variable-level CPDAG inference
\citep{tikka2023clustering,anand2025causal}. 
This crucial advantage motivates our research question: \textit{Can we achieve interventional fairness using a cluster CPDAG, despite the causal structure inside each cluster being unknown?}

We answer yes with the following \textbf{key contributions}:
\begin{itemize}
\item We develop a graphical algorithm to enumerate \textit{adjustment cluster sets}, at least one of which enables the identification of interventional distributions
for the true cluster DAG compatible with the cluster CPDAG.
\item We propose a learning framework that achieves interventional fairness by penalizing the \textit{worst-case} unfairness across these adjustment cluster sets. To efficiently measure the overall discrepancy across pairs of interventional distributions, we introduce a computationally efficient barycenter kernel maximum mean discrepancy (MMD) that scales favorably with both the number of sensitive attribute values and the sample size.
\item Through extensive experiments on synthetic and real-world datasets, we demonstrate that the proposed method attains a better trade-off between fairness and predictive accuracy than existing approaches.
\end{itemize}

\section{Preliminaries}\label{sec:preliminaries}

\subsection{Problem Setup} \label{subsec:setup}

We consider (possibly multi-output) supervised learning.
We train a model $h_{\theta}$ with parameters $\theta$ that outputs prediction $\biYhat$ of decision outcome(s) $\biY$ from each individual's features $\biX$, including discrete-valued sensitive features $\biA$.
	
	We seek parameters $\theta$ that achieve a good balance between prediction accuracy and fairness with respect to sensitive features $\biA$. Given $n$ training instances $\{(\textbf{\textit{x}}_i, \biy_i)_{i=1}^n\}$, our learning problem is formulated as 
	\begin{align}
	\underset{\theta}{\text{min}}
	\quad &\frac{1}{n}\sum_{i=1}^n \ell(\biy_i, h_{\theta}(\bix_i)) + \lambda g_{\theta} (\textbf{\textit{x}}_1, \dots, \textbf{\textit{x}}_n),
	\label{eq-obj}
	\end{align}		
	where $\ell$ is a loss function that measures the prediction error, $g_{\theta}$ is a penalty function that quantifies the unfairness of predictive model $h_\theta$, and $\lambda \ge 0$ is a hyperparameter.

We formulate penalty function $g_{\theta}$ based on the notion of interventional fairness \citep{salimi2019interventional}, which is defined using structural causal models (SCMs) \citep{pearl2009causality}.

\subsection{Fairness via Lens of Causality} \label{subsec:causal_notions}

Consider a scenario of making hiring decisions for physically demanding jobs \citep{chikahara2023making}. We predict $\biYhat = [\hat{Y}_1]$ from each applicant’s features $\biX = \{\biA, \biX^{\mathrm{ad}}, \biD, \biE\}$: gender and nationality $\biA = [A_1, A_2]$ (sensitive features), physical test scores $\biX^{\mathrm{ad}} = [X^{\mathrm{ad}}_1, X^{\mathrm{ad}}_2]$, medical test results $\biD = [D_1, D_2]$, and educational background $\biE = [E_1]$. Since the job requires physical strength, we treat the physical test scores $\biX^{\mathrm{ad}}$ as \textit{admissible} features, whose use in decision-making is \textbf{not} regarded as indirect discrimination with respect to the sensitive features $\biA$.

When causal relationships among features $\biX$ are expressed by the \textit{causal graph} in \Cref{fig:toy_var_graph}(a), we can depict the data-generating process of $\biX$ and $\biYhat$ as in \Cref{fig:toy_var_graph}(b) by adding directed edges from each feature in $\biX$ to prediction $\biYhat$.

An SCM formalizes such data-generating process. 
For example, physical test score $X^{\mathrm{ad}}_1$ is determined by the values of its \textit{parents} in the causal graph in \Cref{fig:toy_var_graph}(b):
\begin{align}
X^{\mathrm{ad}}_1 = f_{X^{\mathrm{ad}}_1}(A_1, D_2, E_1, U_{X^{\mathrm{ad}}_1}), \label{eq-scm-xad1}
\end{align}
where $U_{X^{\mathrm{ad}}_1}$ is an exogenous noise, and
$f_{X^{\mathrm{ad}}_1}$ is a deterministic function.
Whereas the true forms of such deterministic functions as $f_{X^{\mathrm{ad}}_1}$  are unknown for observed features $\biX$, the prediction $\biYhat = [\hat{Y}_1]$ is given by the known function, as it is produced by a prediction model 
$h_{\theta}$:
\begin{align}
\biYhat = h_{\theta}(\biX, \biU_{\biYhat}), \label{eq-scm-yhat}
\end{align}
where $\biU_{\biYhat}$ is an exogenous noise term corresponding to the randomness of $h_{\theta}$ when it is a probabilistic model.

Interventional fairness is defined via an operation on an SCM called an \textit{intervention}. For instance, an intervention $do(X^{\mathrm{ad}}_1 = x^{\mathrm{ad}}_1)$ modifies the data-generating process of $X^{\mathrm{ad}}_1$ in \eqref{eq-scm-xad1} by forcing $X^{\mathrm{ad}}_1$ to take the fixed value $x^{\mathrm{ad}}_1$. 
Using such hypothetical modifications to the data-generating process, interventional fairness is defined as the equality of the \textit{interventional distributions} of $\biYhat$ under interventions on the sensitive features $\biA$ and the admissible features $\biX^{\mathrm{ad}}$:
\begin{definition}[\citet{salimi2019interventional}]
Prediction $\biYhat$ is \textit{interventionally fair} with respect to sensitive features $\biA$, if  
\begin{align}
\begin{aligned}  
&\pr(\biYhat = \biyhat \mid do(\biA = \bia), do(\biX^{\mathrm{ad}} = \bix^{\mathrm{ad}}))  \\
= &\pr(\biYhat = \biyhat \mid do(\biA = \bia'), do(\biX^{\mathrm{ad}} = \bix^{\mathrm{ad}})), 
\end{aligned}
\label{eq-int-fair} 
\end{align}
holds for all possible prediction values $\biyhat$, for any discrete values $\bia, \bia'$ of sensitive features $\biA$
and for any discrete values $\bix^{\mathrm{ad}}$ of admissible features $\biX^{\mathrm{ad}}$.
\label{def:int-fair}
\end{definition}
\begin{remark}[\textbf{Connection to other causality-based notions}]
  Unlike \textit{counterfactual fairness} \citep{kusner2017counterfactual,wu2019pc}, interventional fairness is
  (\myra) a \textit{group-level} notion---in contrast to \textit{individual-level} one, which is formalized via conditional distributions conditioned on all features $\biX$ of an individual---and
  (\myrb) defined using interventional distributions,
  not counterfactual distributions \citep{pearl2009causality}.
\end{remark}
An advantage of interventional fairness is that
the interventional distributions in Eq. \eqref{eq-int-fair} can be identified using partial causal graph knowledge, formalized as a CPDAG
\citep{maathuis2009estimating,fang2020ida,guo2022efficient}.

A CPDAG represents a \textit{Markov equivalence class} (MEC), a class of causal DAGs with the same conditional independence relations among variables.
It contains directed edges that are common to all DAGs in the class, and undirected edges whose orientations vary across the DAGs.
For example, the MEC containing the causal DAG in \Cref{fig:toy_var_graph}~(a) is given by the CPDAG in \Cref{fig:toy_var_graph}~(c),
where the directed edges in collider structures (e.g., $A_1 \rightarrow X^{\mathrm{ad}}_1 \leftarrow E_1$ with $A_1$ and $E_1$ being non-adjacent (i.e., \textit{unshielded})) remain directed, since their conditional independence relations differ from those in non-collider structures.

Although CPDAG inference only requires conditional independence tests, it can be challenging in high-dimensional setups due to the large number of tests required. 
This challenge motivates \textit{cluster causal graphs},
as their estimation needs exponentially fewer tests (with respect to the maximum node degree) and can further gain statistical power by leveraging modern multivariate testing \citep{anand2025causal}.

\subsection{Cluster Causal Graphs} \label{subsec:cluster_causal_graphs}

 	\begin{figure}[t]
		\includegraphics[height=5.6cm]{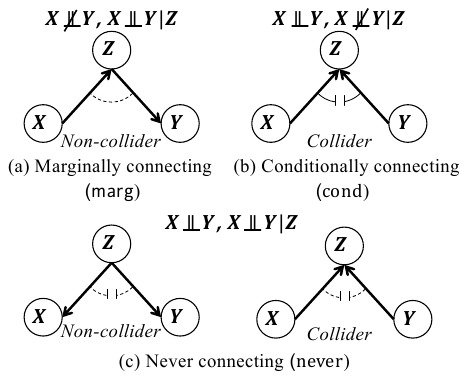}
		\centering 
			\caption{Independence arcs for cluster triplet $\langle \biX, \biZ, \biY \rangle$} 
		\label{fig:ind_arcs}
	\end{figure}

\paragraph{Cluster DAGs} 
A cluster graph $\cDAG$ is constructed 
by grouping nodes $\vNode$ 
in a variable-level DAG $\vDAG$ \citep{anand2023causal}.
Given a partition of $d$ clusters $\cNode = \{\biC_1, \dots, \biC_d\}$ of nodes $\vNode$, 
a cluster graph $\cDAG$ over $\cNode$ has a directed edge $\biC_i \rightarrow \biC_j$ between clusters $\biC_i, \biC_j \in \cNode$, if some $V_i \in \biC_i$ and $V_j \in \biC_j$ have an edge $V_i \rightarrow V_j$ in $\vDAG$.
For instance, the causal graph in \Cref{fig:toy_var_graph}~(a) is converted to the cluster DAG in \Cref{fig:toy_var_graph}~(d).
Following \citet{anand2025causal}, we assume that the resulting cluster graph is a DAG;
this assumption is feasible for many applications 
by leveraging domain knowledge to design an appropriate partition, called an \textit{admissible partition}.

Even under acyclicity,
cluster-level edges alone are \textbf{insufficient} to determine 
conditional independencies among the clusters.
For example, 
as illustrated in \Cref{fig:ind_arcs}~(b) and (c; right),
collider structure $\biX \rightarrow \biZ \leftarrow \biY$ can 
entail different relations
$\biX \dep \biY \mid \biZ$ and $\biX \indep \biY \mid \biZ$, 
depending on whether 
variable-level collider structure $X_i \rightarrow Z_k \leftarrow Y_j$ exists in DAG $\vDAG$
for some variables $X_i \in \biX$, $Y_j \in \biY$, and $Z_k \in \biZ$.

\paragraph{Conditional Independence Representations}
\citet{anand2025causal} represent conditional independencies among clusters using three objects (\Cref{subsec:definition_cluster_causal_graphs}).
An \textit{independence arc} $\IA_{\langle \biX, \biZ, \biY \rangle}$ over an (unshielded) cluster triplet $\langle \biX, \biZ, \biY \rangle$ expresses the conditional independencies as the three possible states: 
\textit{marginally connecting} ($\typeMarg$), \textit{conditionally connecting} ($\typeCond$), and \textit{never connecting} ($\typeNever$).
As in \Cref{fig:ind_arcs},
they have different graphical implications:
While $\typeMarg$ and $\typeCond$ always imply non-collider and collider structures, respectively, $\typeNever$ can arise from either structure.
\textit{Connection} and \textit{separation marks} denote exceptional cases that arise from variable-level paths inside clusters.
A connection mark $\oplus \biC$ and a separation mark $\oslash \biC$ for some cluster set $\biC \subseteq \cNode$
are put on an independence arc $\IA_{\langle \biX, \biZ, \biY \rangle}$, if conditioning on $\biC$ renders $\biX$ and $\biY$ dependent and independent, respectively, which are otherwise not implied by the state of arc $\IA_{\langle \biX, \biZ, \biY \rangle}$.

\paragraph{Cluster CPDAGs} A cluster CPDAG expresses a \textit{cluster MEC}, a class of cluster DAGs with identical conditional independence relations among clusters. 
As in \Cref{fig:toy_var_graph}~(e),
it contains directed edges, independence arcs, and connection/separation marks shared in all cluster DAGs in the class, and undirected edges whose orientations differ in the class.

The \textit{causal learning over clusters} (CLOC) method infers a cluster CPDAG from the data \citep{anand2025causal}.
Compared to other methods for cluster graph inference  
\citep{segal2005learning,tikka2023clustering,wahl2024foundations},
it does not require restrictive assumptions about the underlying graph structure (e.g., the node connectivity within each cluster).

\section{Proposed Method} \label{sec:proposed_method}

We propose a framework for achieving interventional fairness 
using a cluster CPDAG inferred by the CLOC method.

\subsection{Challenges in Cluster CPDAGs} \label{subsec:challenges}

To ensure interventional fairness,
we design penalty function $g_{\theta}$ in \eqref{eq-obj} that reduces the discrepancy between interventional distributions in \eqref{eq-int-fair}.
However, when only a cluster CPDAG is available,
inferring these distributions is challenging.

If we have access to the variable-level causal DAG, 
we can infer interventional distributions 
by identifying adjustment variables $\biZ$ from the graph structure and marginalizing over $\biZ$ and
the remaining variables $\biX^{\mathrm{re}} = \biX \backslash \{\biA, \biX^{\mathrm{ad}}, \biZ\}$:
\begin{align}
    \begin{aligned}
    &\pr(\biYhat = \biyhat \mid do(\biA = \bia), do(\biX^{\mathrm{ad}} = \bix^{\mathrm{ad}}))  \\
    = &\E_{\biZ, \biX^{\mathrm{re}} \mid \bia, \bix^{\mathrm{ad}}}\left[\pr(\biYhat = \biyhat | \biA = \bia, \biX^{\mathrm{ad}} = \bix^{\mathrm{ad}}, \biZ, \biX^{\mathrm{re}}) \right].
    \end{aligned} 
    \label{eq-adjustment} 
\end{align}
This expectation can be estimated from the data distribution.

However,
since a cluster CPDAG represents multiple DAGs (in the cluster MEC), 
we cannot identify such a single adjustment set $\biZ$.
A possible solution is to enumerate adjustment sets $\biZ^{1}, \dots, \biZ^{\npp}$,
such that, for any cluster DAG in the cluster MEC, at least one set enables the adjustment in Eq. \eqref{eq-adjustment}. 

In case of variable-level CPDAGs,
we can efficiently enumerate such adjustment sets \citep{li2024local}, 
thanks to the desirable graphical properties
(e.g., chordality of \textit{chain components}, each connected with undirected edges \citep{guo2022efficient}).
However, cluster CPDAGs do not enjoy these ideal properties, and
their graph structures alone are insufficient to determine conditional independence relations among clusters, as described in \Cref{subsec:cluster_causal_graphs}.

For this reason, we develop a novel adjustment set enumeration algorithm that explicitly accounts for independence arcs and  separation/connection marks in a cluster CPDAG.

\subsection{Adjustment Set Enumeration} \label{subsec:adjustment_sets}

\subsubsection{Assumptions and Enumeration Strategy} \label{subsubsec:overview}

Inferring a cluster CPDAG over features $\biX$ with the CLOC algorithm \citep{anand2025causal} requires three assumptions:
\begin{assumption}
\label{asmp:dag}
The underlying variable-level causal graph over features $\biX$ is a DAG and contains no unobserved confounders between any pair of features in $\biX$.
\end{assumption}
\begin{assumption}
\label{asmp:cpdag}
A partition of clusters $\cNode =\{\biC_1, \dots, \biC_d\}$ is admissible: No directed cycle arises due to the partition. 
\end{assumption}
\begin{assumption}
\label{asmp:faithful}
Distribution $\pr(\cNode)$ is faithful to the underlying cluster DAG over $\cNode$ (See \Cref{subsec:assumption_cluster_causal_graphs} for details).
\end{assumption}
Violating these assumptions implies no guarantee to recover the true cluster CPDAG; however, we empirically observe that the resulting CPDAG is useful for enforcing fairness (\Cref{subsec-add-violate}).
Assuming acyclicity and no unobserved confounders,
which is standard in the literature on interventional fairness \citep{li2024local,zuo2024interventional},
is sufficient to apply the standard \textit{back-door adjustment}  \citep{pearl2009causality}.

For this reason, 
we seek adjustment sets $\biZ^{1}, \dots, \biZ^{\npp}$ such that, for the true cluster DAG, 
at least one of these sets 
\textit{d-separates} all \textit{back-door paths} from sensitive features $\biA$ to prediction $\biYhat$ 
(i.e., the paths with $\biA \leftarrow \dots$ connecting to $\biYhat$).
For this goal, we use the cluster-level d-separation:
\begin{definition}[\citet{anand2025causal}]
\label{def:d-sep}
A path $p = \langle \biC_1, \biC_2$, $\biC_3, \dots, \biC_d \rangle$ in a cluster DAG is d-separated by a set of clusters $\biZ \subset \cNode$ if and only if the sequence of independence arcs along the path, $\IA_{\langle \biC_1, \biC_2, \biC_3 \rangle}, \dots, \IA_{\langle \biC_{d-2}, \biC_{d-1}, \biC_d \rangle}$, contains independence arc $\IA_{\langle \biC_i, \biC_k, \biC_j \rangle}$ for $i, k, j$ that 
\begin{enumerate}
\item is $\typeMarg$ with (a) $\biC_k$ in $\biZ$ or (b) separation mark $\oslash \biC_x$ for some $\biC_x$ on path $p$, where $x \in \{1, \dots, d\}$.
\item is $\typeCond$ with (a) $\biC_k$ and its true descendant $\biC_{\mathrm{td}}$ (with directed path $\biC_k \rightarrow \dots \rightarrow \biC_{\mathrm{td}}$) not in $\biZ$, and (b) has no connection mark $\oplus \biC_x$ such that $\biC_x$ in $\biZ$ or (c) has separation mark $\oslash \biC_x$ for some $\biC_x$ on path $p$. \label{item:cond}
\item is $\typeNever$ and has no connection mark $\oplus \biC_x$ for $\biC_x$ in $\biZ$.
\end{enumerate}
\end{definition}
\Cref{def:d-sep} extends d-separation using independence arcs and marks.
Unlike the original one, 
even when triplet $\langle \biC_i, \biC_k, \biC_j \rangle$ forms a collider structure,
if arc $\IA_{\langle \biC_i, \biC_k, \biC_j \rangle}$ takes $\typeNever$ (\Cref{fig:ind_arcs}(c)),
then the path remains d-separated 
regardless of whether $\biC_k$ (or its descendants) is in $\biZ$.

For a cluster set to d-separate back-door paths from $\biA$ to $\biYhat$,
it suffices to d-separate back-door paths from $\biA$ to each feature in $\biX$ for cluster DAGs over features $\biX$,
as their corresponding cluster CPDAG can be augmented with prediction $\biYhat$, as shown in \Cref{fig:toy_var_graph}(b).
For simplicity, we consider back-door paths $\biA \leftarrow \cdots$; the same procedure applies to $\biX^{\mathrm{ad}} \leftarrow \cdots$. We obtain the adjustment sets in two steps:
\begin{enumerate}
\item \textbf{Parent Enumeration} (\Cref{subsubsec:enum_parents}): 
From possible DAG structures, 
we enumerate the parents of $\biA$, i.e., the clusters that appear first on back-door paths. If all parent sets contain no connection-marked cluster, they achieve adjustment; otherwise, we proceed to Step 2.
\item \textbf{Adjustment Set Completion} (\Cref{subsubsec:additional_clusters}): 
We iteratively augment each adjustment-set candidate with additional clusters needed to d-separate back-door paths.
\end{enumerate}

\subsubsection{Parent Set Enumeration} \label{subsubsec:enum_parents}

Sensitive features $\biA$ can have \textit{definite} and \textit{possible parents}.
A definite parent $\biP \in \mathrm{pa}(\biA)$ is a node with a directed edge $\biA \leftarrow \biP$ 
and is the parent of $\biA$ in all cluster DAGs in the cluster MEC.
By contrast, a possible parent $\biP$ is a node with an undirected edge $\biA - \biP$ that can be oriented as $\biA \leftarrow \biP$ in some but not all cluster DAGs in the cluster MEC.

Letting $\ipp = 1, \dots, \npp$ be the (arbitrary) index for such possible parent sets, 
the parents of $\biA$ are defined as the union of definite and possible parent sets:
\begin{align}
\biZ^{\ipp} \coloneqq \mathrm{pa}(\biA) \cup S^{\ipp}\ \mbox{for}\ \ipp = 1, \dots, \npp, \label{eq-adjustment-cluster-sets}
\end{align}
where $S^{1}, \dots, S^{\npp}$ are possible parent sets, 
with $\npp$ increasing with the number of undirected edges incident to $\biA$.

To enumerate parent sets $\biZ^1, \dots, \biZ^{\npp}$,
we obtain possible parents from the clusters adjacent to $\biA$ via undirected edges 
(i.e., \textit{siblings} $\mathrm{sib}(\biA, \cCPDAG)$).
We give necessary and sufficient conditions 
for cluster set $S$ to be a possible parent set:
\begin{restatable}{theorem}{pptheorem}
\label{thm:possible_parents}
A set of clusters $S \subseteq \mathrm{sib}(\biA, \cCPDAG) \coloneqq \{\biC \in \cNode \colon \biC - \biA \mbox{ in } \cCPDAG\}$ in a cluster CPDAG $\cCPDAG$ is a possible parent set of $\biA$ if and only if it satisfies the following two conditions:
\begin{enumerate}
    \item \textbf{Collider-structure compatibility}: All distinct $\biU,\biV\in S$ satisfy either (\myra) $\biU$ and $\biV$ are adjacent, or (\myrb) $\biU$ and $\biV$ are not adjacent and $\IA_{\langle \biU,\biA,\biV\rangle} = \typeNever$.
    \item \textbf{No back-path condition}: Let $\cCPDAG_{\rightarrow}$ be the directed subgraph of $\cCPDAG$ obtained by removing all undirected edges. For every $\biU \in S$ and every $\biV \in \mathrm{sib}(\biA, \cCPDAG) \backslash S$, there is no directed path from $\biV$ to $\biU$ in $\cCPDAG_{\rightarrow}$.
\end{enumerate}
\end{restatable}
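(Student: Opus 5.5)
We prove the two directions separately. Throughout, a set $S \subseteq \mathrm{sib}(\biA,\cCPDAG)$ is a possible parent set if some cluster DAG $\cDAG$ in the cluster MEC represented by $\cCPDAG$ has $\mathrm{pa}(\biA,\cDAG)\cap \mathrm{sib}(\biA,\cCPDAG)=S$. In such a $\cDAG$, every sibling in $S$ is oriented into $\biA$ and every sibling outside $S$ is oriented out of $\biA$.

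\textbf{Necessity.} Fix such a $\cDAG$.

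For condition 1, take distinct non-adjacent $\biU,\biV\in S$. Then $\biU\rightarrow\biA\leftarrow\biV$ is an unshielded collider in $\cDAG$. The arc $\IA_{\langle\biU,\biA,\biV\rangle}$ is shared by the whole class and is recorded in $\cCPDAG$, so it cannot be $\typeMarg$, because $\typeMarg$ always implies a non-collider (\Cref{fig:ind_arcs}). Suppose it were $\typeCond$. Since $\typeCond$ forces a collider in every member of the class, both edges would be directed into $\biA$ in $\cCPDAG$. That contradicts $\biU,\biV$ being siblings. Hence the arc is $\typeNever$.

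For condition 2, take $\biU\in S$ and $\biV\in\mathrm{sib}(\biA,\cCPDAG)\backslash S$. Then $\cDAG$ contains $\biA\rightarrow\biV$ and $\biU\rightarrow\biA$. A directed path $\biV\rightarrow\cdots\rightarrow\biU$ in $\cCPDAG_{\rightarrow}$ also exists in $\cDAG$. Together these would form the directed cycle $\biA\rightarrow\biV\rightarrow\cdots\rightarrow\biU\rightarrow\biA$, contradicting acyclicity (\Cref{asmp:dag,asmp:cpdag}).

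\textbf{Sufficiency.} This is the main obstacle. Given $S$ satisfying both conditions, we must construct a member of the cluster MEC in which exactly $S$ points into $\biA$. The plan has three steps.

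First, orient $\biU\rightarrow\biA$ for $\biU\in S$ and $\biA\rightarrow\biV$ for the other siblings. Then close under Meek-type propagation rules, adapted to clusters, which orient edges only when forced by acyclicity or by the $\typeMarg/\typeCond$ states of arcs.

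Second, show the resulting partially directed graph has no directed cycle. Any cycle through $\biA$ would have to enter $\biA$ from some $\biU\in S$ and leave it to some $\biV\notin S$. It would then contain a directed path from $\biV$ to $\biU$. We argue that this path already lies in $\cCPDAG_{\rightarrow}$, or that any undirected edges on it could have been oriented the other way. The first case is excluded by condition 2.

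Third, show that no new "wrong" collider-type configuration is created. The only new colliders at $\biA$ are among pairs in $S$. These are either shielded (condition 1(\myra)), or their arc is $\typeNever$ (condition 1(\myrb)). A $\typeNever$ arc is compatible with both collider and non-collider structures, so it entails the same cluster-level independencies. Every other arc, together with its connection and separation marks, is copied unchanged.

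Finally, we extend the remaining undirected edges acyclically. We must check that this extension creates no collider across a $\typeMarg$ arc and no non-collider across a $\typeCond$ arc. The key ingredient is that $\cCPDAG$ is closed under the cluster propagation rules of \citet{anand2025causal}: these rules force the orientation of any edge that would otherwise violate an arc state. Given that closure, standard arguments for local orientation in CPDAGs (in the style of Meek and of \citet{guo2022efficient}) should adapt, with unshielded-collider checks replaced by arc-state checks. The resulting DAG shares all arcs and marks with $\cCPDAG$, and so lies in the cluster MEC. Making this adaptation rigorous, without chordality of chain components, is the delicate part.
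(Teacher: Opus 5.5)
Your necessity half is correct and is essentially the paper's argument: a $\typeMarg$ arc forbids a collider, a $\typeCond$ arc would compel both edges into $\biA$, and a back-path closes a cycle.

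The sufficiency half has a genuine gap, and it cannot be closed as the statement stands. The failing step is your claim that a directed path from $\biV$ back to $\biU$ either already lies in $\cCPDAG_{\rightarrow}$ or uses edges that ``could have been oriented the other way''. Once the edges at $\biA$ are fixed, $\typeMarg$ arcs elsewhere force orientations of undirected edges. Without chordality, these forced edges can close a cycle.

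Concretely, let $\biA=\{a_1,a_2\}$, $\biP=\{p_1,p_2\}$, $\biQ=\{q\}$, $\biR=\{r\}$, with variable-level DAG $a_1\to r\to q\to p_2$ and $a_2\to p_1$.
\begin{itemize}
\item The cluster DAG is the chordless cycle $\biA\to\biP$, $\biA\to\biR\to\biQ\to\biP$.
\item Every variable of $\biA$ and of $\biP$ has a single neighbour, so no variable path passes through these clusters. Hence $\IA_{\langle\biP,\biA,\biR\rangle}=\IA_{\langle\biA,\biP,\biQ\rangle}=\typeNever$, while $\IA_{\langle\biP,\biQ,\biR\rangle}=\IA_{\langle\biQ,\biR,\biA\rangle}=\typeMarg$. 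There are no marks.
\item The variable-level DAG $p_1\to a_1$, $r\to a_2$, $q\to p_2$, $q\to r$ gives $\biP\to\biA\leftarrow\biR$, $\biQ\to\biP$, $\biQ\to\biR$. It has the same skeleton, arcs and marks. With no $\typeCond$ arc, \Cref{def:d-sep} depends only on arcs and marks, so the two DAGs have the same cluster-level d-separations.
\item Therefore $\biA-\biP$, $\biA-\biR$ and $\biQ-\biR$ are undirected in $\cCPDAG$, $\mathrm{sib}(\biA,\cCPDAG)=\{\biP,\biR\}$, and $\biR$ has no directed edge.
\end{itemize}
Now $S=\{\biP\}$ satisfies Condition 1 trivially and Condition 2 vacuously. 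Yet $\biP\to\biA\to\biR$ forces $\biR\to\biQ$ and then $\biQ\to\biP$, since no collider may sit at $\biR$ or $\biQ$ across a $\typeMarg$ arc. This gives the cycle $\biP\to\biA\to\biR\to\biQ\to\biP$. Put differently, an acyclic orientation of a chordless cycle needs a sink; the only admissible sinks are $\biA$ and $\biP$, and $\biP\to\biA\to\biR$ excludes both. Hence $\{\biP\}$ is not a possible parent set, and the ``if'' direction of the statement fails.

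The paper's own sufficiency proof has the same hole. After orienting the edges at $\biA$, it simply asserts that an extension in the cluster MEC exists because no constraint local to $\biA$ is violated. So no adaptation of Meek-style completeness arguments will rescue your final step.

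What is missing is a global consistency condition. For example, one could require that closing the imposed orientations under acyclicity and the $\typeMarg$/$\typeCond$ constraints yields neither a directed cycle nor a collider across a $\typeMarg$ arc, and then prove that such a closed graph extends to a member of the MEC. Conditions 1 and 2, being local to $\biA$, cannot supply this.
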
  
All proofs are provided in \Cref{proofs}.
Our conditions extend those for variable-level CPDAGs \citep{li2024local}. 
As with the original one, Condition 2 prevents the creation of directed cycles by orienting undirected edges.
A crucial difference lies in Condition 1, which addresses the additional challenge in cluster CPDAGs.
It clarifies which pairs of undirected edges incident to $\biA$ can be oriented as collider structures by taking into account the independence arcs.

If $\biZ^{\ipp}$ has no connection-marked cluster,
it achieves adjustment for some cluster DAGs in the cluster MEC:
\begin{restatable}{theorem}{nocmptheorem}
For any cluster DAG $\cDAG$ in the cluster MEC, let $\biZ$ be the parent set of $\biA$ in $\cDAG$. 
If no cluster in $\biZ$ is annotated by a connection mark in any independence arc for all triples in cluster CPDAG $\cCPDAG$, then $\biZ$ d-separates every back-door paths from $\biA$ in $\cDAG$.
\end{restatable}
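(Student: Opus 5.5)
We argue directly from the cluster-level d-separation criterion of \Cref{def:d-sep}. Fix a cluster DAG $\cDAG$ in the cluster MEC and let $\biZ = \mathrm{pa}(\biA)$ in $\cDAG$. Take an arbitrary back-door path $p = \langle \biA, \biP, \biC_3, \dots, \biC_d\rangle$ with $\biA \leftarrow \biP$. Then $\biP \in \biZ$, and $\biP$ is the second node of $p$. The plan is to show that the first independence arc along $p$ that involves $\biP$ as a middle node, or some later arc, satisfies one of the three blocking conditions of \Cref{def:d-sep}.

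\textbf{Case 1: $p = \langle \biA, \biP \rangle$ has length one, i.e.\ $\biP$ is the endpoint.} The endpoint is in the conditioning set, and such a path is not considered open for the purposes of back-door adjustment. In the augmented graph with $\biYhat$, the path continues $\biP \rightarrow \biYhat$, so this reduces to Case 2.

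\textbf{Case 2: the path continues past $\biP$.} Consider the triple $\langle \biA, \biP, \biC_3\rangle$. Since $\biA \leftarrow \biP$, the triple is not a collider at $\biP$ at the cluster level. By the characterization of independence arc states (\Cref{fig:ind_arcs}), its arc $\IA_{\langle \biA,\biP,\biC_3\rangle}$ is therefore either $\typeMarg$ or $\typeNever$, never $\typeCond$, since $\typeCond$ implies a collider. If the triple is shielded ($\biA$ and $\biC_3$ adjacent), we instead treat the path as in the standard argument: the arc is defined only for unshielded triples, so we rely on the convention of \citet{anand2025causal} that shielded triples behave as non-colliders of type $\typeMarg$ when the middle node is a non-collider. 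We then split on the two possible states.
\begin{itemize}[leftmargin=*]
\item If the arc is $\typeMarg$, condition 1(a) of \Cref{def:d-sep} holds because the middle node $\biP \in \biZ$. The path is d-separated.
\item If the arc is $\typeNever$, condition 3 requires that the arc carry no connection mark $\oplus \biC_x$ with $\biC_x \in \biZ$. By hypothesis, no cluster of $\biZ$ is connection-marked on any arc of $\cCPDAG$. Since arcs and marks are shared by all DAGs in the cluster MEC, the same holds in $\cDAG$, so condition 3 holds and the path is blocked.
\end{itemize}

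\textbf{Main obstacle.} The delicate step is justifying that the $\typeMarg$ case is not undone by marks. Condition 1 is stated disjunctively, so (a) alone suffices, and a connection mark does not appear in condition 1 at all. One therefore has to check that the semantics of \citet{anand2025causal} do not let a connection mark reopen a $\typeMarg$ arc. If the intended definition does allow reopening through some $\oplus \biC_x$ with $\biC_x \in \biZ$, the hypothesis again excludes it.

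A second point needs care: the hypothesis speaks of arcs in $\cCPDAG$, while we need marks in $\cDAG$. I would invoke the fact, recalled in \Cref{subsec:cluster_causal_graphs}, that arcs and marks are invariant across the cluster MEC. Finally, since the path $p$ was arbitrary, every back-door path from $\biA$ is d-separated by $\biZ$.
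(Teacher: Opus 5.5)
Your proof is correct and follows the paper's argument essentially step for step. On any back-door path the first triple $\langle \biA, \biP, \biQ \rangle$ has $\biP$ as a non-collider, so its arc cannot be $\typeCond$; the $\typeMarg$ case is then closed by Condition 1(a) of \Cref{def:d-sep} and the $\typeNever$ case by Condition 3 together with the no-connection-mark hypothesis. The one inaccuracy is your shielded-triple remark: the appendix handles shielded triples through manipulated unshielded triplets, not through a default-$\typeMarg$ convention. This does not affect the argument, since your case split already covers both $\typeMarg$ and $\typeNever$.
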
  
Otherwise, 
we need to complete $\biZ^{\ipp}$ by cluster addition.

\begin{algorithm}[t]
\caption{Adjustment Cluster Completion $(\biZ^{\ipp}, \biA, \cCPDAG)$}
\begin{algorithmic}[1]
\Require $\biZ^{\ipp}$, $\biA$, and cluster CPDAG $\cCPDAG$ over $\biX$
\State $\biZ \gets \biZ^{\ipp}$ in Eq. \eqref{eq-adjustment-cluster-sets}; $\mathrm{Processed}\gets\emptyset$
\State $\biZ_{\oplus}\gets \biZ \cap \{\biC_x\ \mbox{in}\ \oplus\biC_x\ \mbox{in}\ \IA_{\langle\biP, \biQ, \biR \rangle}\ \mbox{for all triples in}\ \cCPDAG\}$
\State $\mathrm{Queue} \gets \{(\biP,\biQ)\colon \biP \in \biZ_{\oplus}; 
\biQ \ \mbox{adjacent to}\ \biP;\biQ \neq \biA\}$
\While{$\mathrm{Queue}\neq\emptyset$}
  \State pop $(\biP,\biQ)$ from $\mathrm{Queue}$
  \If{$(\biP,\biQ)\in \mathrm{Processed}$} \State \textbf{continue} \EndIf
  \State $\mathrm{Processed}\gets \mathrm{Processed}\cup\{(\biP,\biQ)\}$
  \ForEach{$\biR$ adjacent to $\biQ$ in $\cCPDAG$ and $\biR\neq \biP$}
    \If{$\IA_{\langle\biP, \biQ, \biR \rangle}=\typeMarg$}
      \State $\biZ\gets \biZ\cup\{\biQ\}$
    \ElsIf{$\IA_{\langle\biP, \biQ, \biR \rangle}=\typeNever$}
      \State $\biZ\gets \biZ\cup\{\biQ\}$
      \If{Some $\biP_x \in \biZ$ in $\oplus \biC_x$ in $\IA_{\langle\biP, \biQ, \biR \rangle}$}
        \State push $(\biQ,\biR)$ into $\mathrm{Queue}$
      \EndIf
\ElsIf{$\IA_{\langle\biP, \biQ, \biR \rangle}=\typeCond$}
  \State $\biE \gets \{\biQ\}\cup\textsc{PossDesc}(\cCPDAG,\biQ)$; $\biE \gets \biZ\cap \biE$
  \If{$\biE \neq\emptyset$ and $\biZ$ has no cluster in $\oslash \biC_x$ in $\IA_{\langle\biP, \biQ, \biR \rangle}$}
      \State \Return $\mathrm{Unidentifiable}$
  \EndIf
\EndIf
  \EndFor
\EndWhile
\If{\textsc{FinalCert}$(\biZ,\biA,\cCPDAG)$\footnotemark returns $\mathrm{Unidentifiable}$}
  \State \Return $\mathrm{Unidentifiable}$
\EndIf
\State \Return $(\biZ,\mathrm{OK})$
\end{algorithmic}
 \label{alg:add-adj-cluster-set}
\end{algorithm}
\footnotetext{\textsc{FinalCert}$(\biZ,\biA,\cCPDAG)$ reruns lines 2--19 of \Cref{alg:add-adj-cluster-set} with the final $\biZ$ fixed, without updating $\biZ$ in lines 11 and 13. It returns $\mathrm{Unidentifiable}$ if the condition in line 18 is violated.}

\subsubsection{Adjustment Cluster Completion} \label{subsubsec:additional_clusters}

\Cref{alg:add-adj-cluster-set} shows our queue-based propagation procedure for augmenting each set $\biZ$ with the clusters required for d-separation. Using a queue initialized with the parents of $\biA$ that appear in connection marks, it repeatedly inspects local triple annotations and expands $\biZ$ whenever the annotations indicate that additional conditioning clusters are needed to achieve d-separation.
The returned set $\biZ$ is a valid adjustment set for some cluster DAGs in the cluster MEC:
\begin{restatable}{theorem}{cmptheorem}
Fix an enumerated parent set $\biZ^{\ipp}$. If \Cref{alg:add-adj-cluster-set} returns $(\biZ, \mathrm{OK})$ for this input set, then $\biZ$ d-separates every back-door path from $\biA$ for all cluster DAGs in the cluster MEC that realize the parent set $\biZ^{\ipp}$. 
\end{restatable}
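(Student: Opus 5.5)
Fix a cluster DAG $\cDAG$ in the cluster MEC whose parent set of $\biA$ is $\biZ^{\ipp}$, and take an arbitrary back-door path $p = \langle \biA, \biP_1, \biC_2, \dots \rangle$ in $\cDAG$ with $\biP_1 \in \biZ^{\ipp} \subseteq \biZ$. The argument is by contradiction: suppose $p$ is not d-separated by the returned $\biZ$ in the sense of \Cref{def:d-sep}. Then every independence arc along $p$ is ``open'' given $\biZ$. All arcs, marks, and the adjacency structure used by \Cref{alg:add-adj-cluster-set} are shared by every member of the cluster MEC, so the local decisions of the algorithm apply verbatim to $\cDAG$.

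\textbf{Base case.} Look at the first arc $\IA_{\langle \biA, \biP_1, \biC_2\rangle}$. Because $\biP_1 \to \biA$ in $\cDAG$, this triple is not a collider.
\begin{itemize}
\item If the arc is $\typeMarg$ (the non-collider case), then $\biP_1 \in \biZ$ closes it by clause~1(a).
\item If the arc is $\typeNever$, clause~3 closes it unless some connection mark $\oplus\biC_x$ on it has $\biC_x \in \biZ$.
\item The $\typeCond$ state is impossible here, since $\typeCond$ implies a collider structure.
\end{itemize}
So the arc is open only if a connection-marked cluster lies in $\biZ$. Restricting the queue to $\biZ_{\oplus}$ is justified by the preceding theorem: when no cluster of $\biZ^{\ipp}$ carries a connection mark, the first arc already d-separates every back-door path. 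Otherwise the pair $(\biP_1,\biC_2)$ is in the initial queue.

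\textbf{Inductive step.} I would prove the following invariant by induction along $p$: for each consecutive pair $(\biC_{k-1},\biC_k)$ reached on an open prefix of $p$, that pair is eventually pushed into and processed from $\mathrm{Queue}$. When the pair is processed, the loop in line~8 examines the next cluster $\biR = \biC_{k+1}$, and there are three cases.
\begin{itemize}
\item \textbf{$\typeMarg$.} The algorithm adds $\biC_k$ to $\biZ$, so this arc is closed by clause~1(a). This contradicts openness.
\item \textbf{$\typeNever$.} The algorithm adds $\biC_k$. The arc can be open only through a connection mark with its cluster in $\biZ$. In exactly that case the algorithm pushes $(\biC_k,\biC_{k+1})$, which propagates the invariant.
\item \textbf{$\typeCond$.} By clause~2 the arc is open only if $\biC_k$ or one of its true descendants lies in $\biZ$ (or a connection mark is active) and no separation mark applies. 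True descendants in $\cDAG$ are contained in $\textsc{PossDesc}(\cCPDAG,\biC_k)$. Hence the test in line~18 would have fired and returned $\mathrm{Unidentifiable}$, contradicting the output $\mathrm{OK}$.
\end{itemize}
Because $p$ is finite, the path eventually reaches a closed arc, which is the desired contradiction.

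\textbf{Main obstacle: monotonicity of $\biZ$.} The algorithm grows $\biZ$ while it runs, so an arc judged open or closed earlier may change status under the final $\biZ$. This is especially dangerous for $\typeCond$ arcs, where adding a later cluster can open a collider, and for connection marks that become active late. My plan is to lean on \textsc{FinalCert}. Its rerun of lines 2--19 with the final $\biZ$ held fixed is exactly the same case analysis as above, now evaluated at the final $\biZ$. So I would run the induction on the \textsc{FinalCert} pass rather than the growing pass. Two facts remain to be checked:
\begin{itemize}
\item In the fixed pass, every $\typeMarg$ and $\typeNever$ middle cluster encountered was already added in the first pass. This holds because the set of processed pairs in the fixed pass is contained in that of the growing pass.
\item Separation marks $\oslash\biC_x$ referring to clusters on $p$ are handled consistently with clause~2(c).
\end{itemize}
I expect the delicate part to be showing that the queue reachability in the fixed pass covers every open prefix of $p$, including paths that revisit a pair already in $\mathrm{Processed}$.
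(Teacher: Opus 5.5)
Your skeleton is the paper's. You block the first arc as in the preceding theorem, then follow $p$ through the queue: $\typeMarg$ arcs are closed by the added middle cluster, $\typeNever$ arcs with an active connection mark propagate, and open $\typeCond$ arcs are excluded via the output $\mathrm{OK}$ and the fact that $\textsc{PossDesc}(\cCPDAG,\biC_k)$ contains every true descendant. You are right that the growing $\biZ$ is the delicate point, but your remedy fails, because the inclusion you rely on goes the wrong way. \textsc{FinalCert} re-executes line~2 with the final $\biZ$, so $\biZ_{\oplus}$ and the initial queue can only grow, and its push test is evaluated against the larger $\biZ$. Hence every pair processed in the growing pass is also processed in the fixed pass, not conversely. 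The extra pairs are exactly the dangerous ones. Suppose a $\typeNever$ arc on $p$ was processed before the cluster named in its connection mark entered $\biZ$. That arc is open under the final $\biZ$, yet its successor pair was never pushed in the growing pass, so the next middle cluster need not have been added. The fixed pass does inspect that successor triple, but \textsc{FinalCert} neither adds middle clusters nor flags $\typeMarg$ or $\typeNever$ arcs. If that triple is $\typeMarg$, it stays open and your induction reaches no contradiction. Closing this needs a genuinely new invariant, e.g.\ that a processed triple's status cannot change as $\biZ$ grows, or that such late additions cannot occur. The paper's proof does not supply one; it reads each push decision as if it were made against the final $\biZ$.

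Three further steps do not follow as written.

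(i) In the base case, openness of $\IA_{\langle\biA,\biP_1,\biC_2\rangle}$ only says that the cluster named in one of its connection marks lies in $\biZ$. By the definition of connection marks, that cluster is a child of $\biP_1$, not $\biP_1$ itself. But $(\biP_1,\biC_2)$ enters the initial queue only if $\biP_1\in\biZ_{\oplus}$, and nothing forces it to be pushed later.

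(ii) The $\typeNever$ case gives propagation, not a contradiction, and no line of the algorithm returns $\mathrm{Unidentifiable}$ when propagation runs off the end of $p$. So ``because $p$ is finite, the path eventually reaches a closed arc'' is exactly what remains to be shown. A path all of whose arcs are $\typeNever$ with a marked cluster in $\biZ$ is open under \Cref{def:d-sep}, and the output $\mathrm{OK}$ does not exclude it.

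(iii) In the $\typeCond$ case, line~18 is suppressed as soon as any separation-mark cluster of the arc lies in $\biZ$, whereas clause~2(c) needs that cluster to lie on $p$. So ``line 18 would have fired'' fails when the cluster is in $\biZ$ but off $p$. The connection-mark part of that case is fine, since marked clusters are children of the middle cluster and hence lie in $\textsc{PossDesc}$.

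The paper's proof makes the same three moves. It splits on whether the first parent is itself connection-marked, equates a separation-mark cluster in $\biZ$ with clause~2(c), and ends with ``or otherwise returns $\mathrm{Unidentifiable}$''. So none of these can be imported from it; each needs its own argument before the theorem follows.
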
  

While Condition~\ref{item:cond} in the cluster-level d-separation (\Cref{def:d-sep}) involves \textit{true} descendants, line 17 approximates them as \textit{possible} descendants $\textsc{PossDesc}(\cCPDAG,\biQ)$, which we define as the set of clusters reachable from $\biQ$ by orienting undirected edges away from $\biQ$. This approximation is \textbf{inevitable} because a cluster CPDAG contains undirected edges, making the true descendant set unidentifiable. 
To provide a safeguard against such unidentifiable cases,
\Cref{alg:add-adj-cluster-set} explicitly returns $\mathrm{Unidentifiable}$ (line 19).

\paragraph{Graph Refinement.}
In these $\mathrm{Unidentifiable}$ cases,
we refine the cluster CPDAG 
by splitting clusters that appear in connection marks into singleton nodes, 
and then obtain adjustment sets 
using the refined graph. 
This refinement incurs an additional cost for re-estimating the cluster CPDAG; 
however, it is negligible in practice compared with variable-level CPDAG estimation because the graph size
is typically much smaller 
(see \Cref{subsec-add-graph-inference} for the time complexity).

The number of refinement steps depends on the provided cluster CPDAG (e.g., the node degrees) and is at most the number of clusters $d$,
where we obtain a variable-level CPDAG and hence can always identify valid adjustment sets.
Refinement can also help reduce the number of undirected edges and adjustment sets $\npp$, as  
discussed in \Cref{subsec-add-dense}.

\subsection{Penalizing Worst-Case Unfairness} \label{subsec:penalty_function}

Using the (augmented) adjustment cluster sets $\biZ^{1}, \dots, \biZ^{\npp}$, we now construct the unfairness penalty $g_{\theta}$ in Eq. \eqref{eq-obj}.

Because it is uncertain which $\biZ^{1}, \dots, \biZ^{\npp}$ yields valid adjustment for the true cluster DAG, we consider the \textit{worst-case} unfairness over these sets, 
defined as the maximum of the discrepancy between interventional distributions.
For each $\biZ^{\ipp}$ we aggregate kernel MMDs \citep{gretton2012kernel} between the interventional distribution pairs in \eqref{eq-int-fair}, and then take the maximum across $\ipp = 1, \dots, \npp$:
\begin{align*}
    \max_{\ipp} \sum_{\bix^{\mathrm{ad}}} \sum_{\bia, \bia'} \mathrm{MMD}_{\ipp}(\pr_{\biYhat| do(\bia), do(\bix^{\mathrm{ad}})}, \pr_{\biYhat| do(\bia'), do(\bix^{\mathrm{ad}})}), 
\end{align*}
where  $\mathrm{MMD}_{\ipp}(\cdot,\cdot)$ denotes the kernel MMD between the corresponding interventional distributions identified via $\biZ^{\ipp}$.

Unfortunately, this maximization is computationally prohibitive, requiring $\mathcal{O}(\npp\, N_{\biA}^2\, N_{\biX^{\mathrm{ad}}}\, n^2)$ time, where $N_{\biA}$ and $N_{\biX^{\mathrm{ad}}}$ are the cardinalities of the sensitive and admissible feature values,
and $n$ is the sample size.
To address this issue, 
below we introduce our estimation strategies 
and derive a computationally efficient penalty function $g_{\theta}$.

\subsubsection{Towards Efficient MMD Estimation} \label{subsubsec:efficient_estimation}

Our estimation strategies are twofold. 
First, we reduce the quadratic time complexity $\mathcal{O}(N^2_{\biA})$ to $\mathcal{O}(N_{\biA})$ by 
decomposing the summation over pairwise MMDs for sensitive feature values $\bia$ and $\bia'$ as the sum of \textit{barycenter} MMDs:
\begin{align}
&\sum_{\bia, \bia'} \mathrm{MMD}_{\ipp}(\pr_{\biYhat| do(\bia), do(\bix^{\mathrm{ad}})}, \pr_{\biYhat| do(\bia'), do(\bix^{\mathrm{ad}})}) \label{eq-int-mmd} \\
=&c \sum_{\bia} \mathrm{MMD}_{\ipp}(\pr_{\biYhat| do(\bia), do(\bix^{\mathrm{ad}})}, \frac{1}{N_{\biA}} \sum_{\bia'} \pr_{\biYhat| do(\bia'), do(\bix^{\mathrm{ad}})}), \nonumber
\end{align}
where $c = 2 N_{\biA}$ is a constant,
and $\frac{1}{N_{\biA}} \sum_{\bia'} \pr_{\biYhat| do(\bia'), do(\bix^{\mathrm{ad}})}$ is the mixture distribution corresponding to the barycenter of distributions $\{\pr(\biYhat| do(\bia), do(\bix^{\mathrm{ad}}))\}_{\bia}$.
The equality holds from the fact that the MMD is a distance metric in the inner product space, called the reproducing kernel Hilbert space (RKHS); see \Cref{dev:barycenter_mmd} for the derivation.

Second, we decrease the time for computing each MMD from $\mathcal{O}(n^2)$ to $\mathcal{O}(n d_{\mathrm{RFF}})$ 
by approximating feature mapping $\phi(\cdot)$ of kernel function $k(\biy, \biy') = \langle \phi(\biy), \phi(\biy')\rangle$
as a $d_{\mathrm{RFF}}$-dimensional vector of random Fourier features (RFFs) \citep{rahimi2007random}.
The MMD in \eqref{eq-int-mmd} is defined as
the distance in the RKHS $\mathcal{H}_k$ between \textit{kernel mean embeddings}: 
\begin{align}    
\| \mu_{\biYhat\mid do(\bia), do(\bix^{\mathrm{ad}})} - \mu_{\biYhat\mid do(\bia'), do(\bix^{\mathrm{ad}})} \|_{\mathcal{H}_{k}}, \label{eq-def-mmd}
\end{align} 
where kernel mean embedding $\mu_{\biYhat\mid do(\bia), do(\bix^{\mathrm{ad}})}$ is defined as
\begin{align}
\mu_{\biYhat\mid do(\bia), do(\bix^{\mathrm{ad}})} \coloneqq
\E_{\biYhat\mid do(\biA = \bia), do(\biX^{\mathrm{ad}} = \bix^{\mathrm{ad}})}[\phi(\biYhat)]. \label{eq-def-kme}
\end{align}
RFFs approximate feature mapping $\phi(\cdot)$ in \eqref{eq-def-kme} as a vector:
\begin{align*}
 \phi_{\mathrm{RFF}}(\biy) = 
 \left[\cos(\omega_1^\top \biy + b_1), \dots, \cos(\omega_{d_{\mathrm{RFF}}}^\top \biy + b_{d_{\mathrm{RFF}}})\right]^\top, 
\end{align*}
where $\{\omega_i\}_{i=1}^{d_{\mathrm{RFF}}}$ and $\{b_i\}_{i=1}^{d_{\mathrm{RFF}}}$ are randomly sampled coefficients; with 
the Gaussian kernel $k$ with bandwidth $\gamma > 0$,
they are sampled from the Gaussian distribution $\mathcal{N}(0, \gamma^{-2})$ and the uniform distribution $\mathcal{U}[0,2\pi]$, respectively.

With these two strategies, we can efficiently compute the worst-case MMD with $\mathcal{O}(\npp N_{\biA} N_{\biX^{\mathrm{ad}}} n d_{\mathrm{RFF}})$.

\subsubsection{Penalty Function Formulation} \label{subsubsec:penalty_formulation}

To estimate the barycenter MMD in \eqref{eq-int-mmd},
we develop a weighted estimator of the kernel mean embedding in \eqref{eq-def-kme}, 
by combining \textit{inverse probability weighting} (IPW) and RFFs. 

Recall that interventional distribution $\pr(\biYhat|do(\bia), do(\bix^{\mathrm{ad}}))$ is identified using a valid adjustment set $\biZ$ as the conditional expectation in \eqref{eq-adjustment}.
Using IPW, this can be estimated as
\begin{align}
    \begin{aligned}
    &\E_{\biZ, \biX^{\mathrm{re}} \mid \bia, \bix^{\mathrm{ad}}}\left[\pr(\biYhat = \biyhat | \biA = \bia, \biX^{\mathrm{ad}} = \bix^{\mathrm{ad}}, \biZ, \biX^{\mathrm{re}}) \right], \\
    = &\E\left[\frac{\I(\bia, \bix^{\mathrm{ad}})}{\pi_{\bia, \bix^{\mathrm{ad}}}(\biZ)} \E_{\biX^{\mathrm{re}} \mid \biA, \biX^{\mathrm{ad}}, \biZ} \left[ \pr(\biYhat = \biyhat \mid \biA,  \biX^{\mathrm{ad}}, \biZ, \biX^{\mathrm{re}}) \right] \right],
    \end{aligned} 
    \label{eq-IPW} 
\end{align}
where $\I(\bia, \bix^{\mathrm{ad}})$ is the indicator function that takes $1$ if $\biA = \bia$ and $\biX^{\mathrm{ad}} = \bix^{\mathrm{ad}}$ for discrete $\biA, \biX^{\mathrm{ad}}$; otherwise 0, 
$\pi_{\bia, \bix^{\mathrm{ad}}}(\biZ) \coloneqq \pr(\biA=\bia, \biX^{\mathrm{ad}} = \bix^{\mathrm{ad}} \mid \biZ)$ is the probability known as the \textit{joint propensity score}, 
and $\pr(\biYhat = \biyhat \mid \bia,  \bix^{\mathrm{ad}}, \biZ, \biX^{\mathrm{re}})$ is the distribution of $\biYhat$ given by predictive model $h_{\theta}$. 
We derive Eqs. \eqref{eq-adjustment} and \eqref{eq-IPW} in \Cref{dev:adjustment_formula}.

As with \eqref{eq-IPW}, 
the kernel mean embedding in Eq. \eqref{eq-def-kme}
can be estimated as an empirical weighted average of RFF vectors:
\begin{align}
    \hat{\mu}_{\biYhat\mid do(\bia), do(\bix^{\mathrm{ad}})} = \frac{1}{n} \sum_{i=1}^n w_i \phi_{\mathrm{RFF}}(\biyhat_i), \label{eq-empirical-kme}
\end{align}
where $\biyhat_i = h_{\theta}(\bix_i)$ is prediction, and $w_i$ is an IPW weight:
\begin{align}
w_i = \frac{\I(\biA_i = \bia, \biX^{\mathrm{ad}}_i = \bix^{\mathrm{ad}})}{\pi_{\bia, \bix^{\mathrm{ad}}}(\biz^{\ipp}_i)}, \label{eq-importance-weight}
\end{align}
where $\pi_{\bia, \bix^{\mathrm{ad}}}(\biz^{\ipp}_i) = \pr(\biA=\bia, \biX^{\mathrm{ad}} = \bix^{\mathrm{ad}} \mid \biZ^{\ipp}=\biz^{\ipp}_i)$ is the joint propensity score computed by fitting $\pi_{\bia, \bix^{\mathrm{ad}}}$ to the training data beforehand;
we can apply IPW weight stabilization techniques like clipping/smoothing \citep{chikahara2024differentiable}, 
as empirically tested in \Cref{subsec-add-sensitivity}.

By computing the barycenter kernel MMD in \eqref{eq-int-mmd} with weighted estimators \eqref{eq-empirical-kme},
we obtain our penalty function as
\begin{align}
  \begin{aligned}
    &g_{\theta}(\bix_1, \dots, \bix_n)  \\
    = &\max_{\ipp=1,\dots,\npp} \sum_{\bix^{\mathrm{ad}}} \sum_{\bia} \| \hat{\mu}_{\biYhat\mid do(\bia), do(\bix^{\mathrm{ad}})} - \hat{\bar{\mu}}_{\biYhat\mid do(\bix^{\mathrm{ad}})} \|^2_{2}, 
  \end{aligned}
    \label{eq-unfairness-penalty}
\end{align}
where $\hat{\bar{\mu}}_{\biYhat\mid do(\bix^{\mathrm{ad}})} = \frac{1}{N_{\biA}} \sum_{\bia} \hat{\mu}_{\biYhat\mid do(\bia), do(\bix^{\mathrm{ad}})}$ is the kernel mean embedding corresponding to the empirical barycenter. 
Since $\max_{\ipp=1,\dots,\npp}$ is non-differentiable,
we approximate it as the \textit{Mellowmax} function \citep{asadi2017alternative}, empirically improving performance (\Cref{subsec-add-sensitivity}).

\begin{table*}[t]
  \centering
  \caption{Root mean squared error (RMSE) and unfairness for linear (top) and nonlinear (bottom) datasets on test data}
  \label{table:synth_rmse_unfairness}
  \scshape

  \scalebox{0.93}{%
  \begin{tabular}{lcccccc}
    \toprule
    & \multicolumn{2}{c}{$d$=5 ($d_v$=15)} & \multicolumn{2}{c}{$d$=10 ($d_v$=30)} & \multicolumn{2}{c}{$d$=15 ($d_v$=45)} \\
    \textsc{Linear}
    & \textsc{RMSE}$\downarrow$ & \textsc{Unfairness}$\downarrow$
    & \textsc{RMSE}$\downarrow$ & \textsc{Unfairness}$\downarrow$
    & \textsc{RMSE}$\downarrow$ & \textsc{Unfairness}$\downarrow$ \\
    \midrule
    \textsc{Oracle} 
      & $0.957 \pm 0.037$ & $0.000 \pm 0.000$
      & $0.977 \pm 0.044$ & $0.000 \pm 0.000$
      & $1.069 \pm 0.143$ & $0.000 \pm 0.000$ \\
    \midrule
    \textsc{Full} 
      & $0.523 \pm 0.195$ & $0.259 \pm 0.129$
      & $0.483 \pm 0.195$ & $0.229 \pm 0.136$
      & $0.554 \pm 0.210$ & $0.081 \pm 0.117$ \\
    \midrule
    \textsc{Unaware} 
      & $0.774 \pm 0.154$ & $0.071 \pm 0.059$
      & $0.774 \pm 0.154$ & $0.062 \pm 0.079$
      & $0.793 \pm 0.120$ & $0.046 \pm 0.104$ \\
    \textsc{No-DesCs} 
      & $0.739 \pm 0.140$ & $0.064 \pm 0.089$
      & $0.739 \pm 0.140$ & $0.065 \pm 0.100$
      & $0.699 \pm 0.135$ & $0.033 \pm 0.028$ \\
    $\epsilon$-\textsc{IFair} 
      & $0.647 \pm 0.030$ & $0.069 \pm 0.031$
      & $0.663 \pm 0.062$ & $0.071 \pm 0.011$
      & $0.671 \pm 0.044$ & $0.040 \pm 0.011$ \\
    $\ell$-\textsc{IFair} 
      & $0.875 \pm 0.028$ & $0.069 \pm 0.081$
      & $0.964 \pm 0.049$ & $0.072 \pm 0.052$
      & $0.764 \pm 0.049$ & $0.052 \pm 0.052$ \\
    \textbf{\textsc{C-IFair}} 
      & $0.643 \pm 0.127$ & $0.060 \pm 0.054$
      & $0.660 \pm 0.123$ & $0.056 \pm 0.052$
      & $0.669 \pm 0.171$ & $0.020 \pm 0.036$ \\
    \bottomrule
  \end{tabular}%
  } 
  \scalebox{0.93}{%
  \begin{tabular}{lcccccc}
    \toprule
    & \multicolumn{2}{c}{$d$=5 ($d_v$=15)} & \multicolumn{2}{c}{$d$=10 ($d_v$=30)} & \multicolumn{2}{c}{$d$=15 ($d_v$=45)} \\
    \textsc{Nonlinear}
    & \textsc{RMSE}$\downarrow$ & \textsc{Unfairness}$\downarrow$
    & \textsc{RMSE}$\downarrow$ & \textsc{Unfairness}$\downarrow$
    & \textsc{RMSE}$\downarrow$ & \textsc{Unfairness}$\downarrow$ \\
    \midrule
    \textsc{Oracle} 
      & $1.008 \pm 0.030$ & $0.000 \pm 0.000$
      & $1.055 \pm 0.069$ & $0.000 \pm 0.000$
      & $1.000 \pm 0.073$ & $0.000 \pm 0.000$ \\
    \midrule
    \textsc{Full} 
      & $0.914 \pm 0.066$ & $0.121 \pm 0.073$
      & $0.925 \pm 0.055$ & $0.026 \pm 0.026$
      & $0.928 \pm 0.058$ & $0.021 \pm 0.007$ \\
    \midrule
    \textsc{Unaware} 
      & $0.978 \pm 0.058$ & $0.069 \pm 0.073$
      & $0.957 \pm 0.061$ & $0.027 \pm 0.038$
      & $1.014 \pm 0.055$ & $0.024 \pm 0.017$ \\
    \textsc{No-DesCs} 
      & $0.978 \pm 0.061$ & $0.084 \pm 0.113$
      & $0.950 \pm 0.066$ & $0.022 \pm 0.028$
      & $1.022 \pm 0.069$ & $0.020 \pm 0.008$ \\
    $\epsilon$-\textsc{IFair} 
      & $0.938 \pm 0.006$ & $0.077 \pm 0.009$
      & $0.953 \pm 0.026$ & $0.027 \pm 0.000$
      & $0.962 \pm 0.016$ & $0.023 \pm 0.001$ \\
    $\ell$-\textsc{IFair} 
      & $0.987 \pm 0.065$ & $0.076 \pm 0.042$
      & $0.982 \pm 0.041$ & $0.023 \pm 0.010$
      & $1.042 \pm 0.041$ & $0.017 \pm 0.010$ \\
    \textbf{\textsc{C-IFair}} 
      & $0.929 \pm 0.051$ & $0.066 \pm 0.046$
      & $0.948 \pm 0.051$ & $0.021 \pm 0.035$
      & $0.960 \pm 0.036$ & $0.010 \pm 0.012$ \\
    \bottomrule
  \end{tabular}%
  } 
\end{table*}

 	\begin{figure*}[t]
		\includegraphics[height=3.8cm]{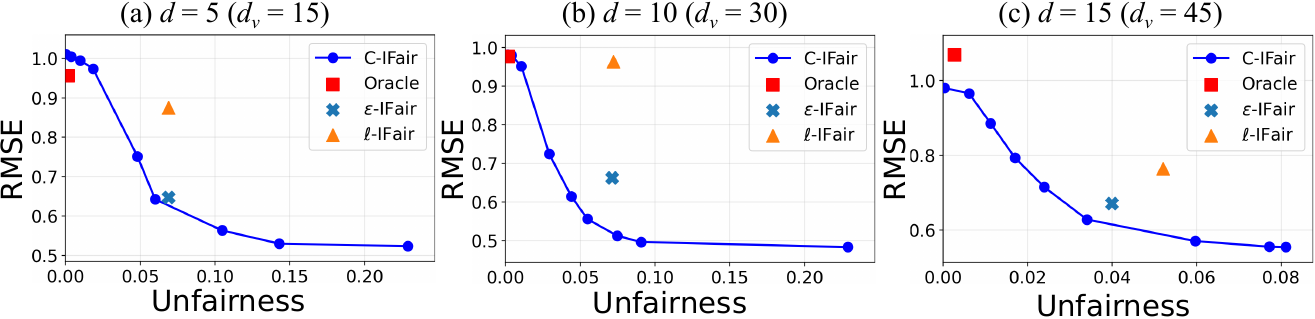}
		\centering 
			\caption{Performance on linear datasets when varying $\lambda$ from $0$ to $200$: 
      (a): $d = 5$, (b): $d = 10$, and (c): $d = 15$ clusters.} 
		\label{fig:tradeoff}
	\end{figure*}

\section{Experiments} \label{sec:experiments}

\paragraph{Baselines.} We compare our cluster interventional fairness (\textbf{C-IFair}) method with six baselines.
(\myra) \textbf{Full} uses all features without fairness constraints.
(\myrb) \textbf{Unaware} uses all features except the sensitive features $\biA$.
(\myrc) \textbf{No-DesCs} uses all features except \textit{definite descendant clusters} of $\biA$ (i.e., clusters that are descendants of $\biA$ in all cluster DAGs represented by the cluster CPDAG).
(\myrd) $\epsilon$\textbf{-IFair} \citep{zuo2024interventional} penalizes the kernel MMD estimated using a variable-level CPDAG.
(\myre) $\ell$\textbf{-IFair} \citep{li2024local} penalizes unfairness based on a local adjustment set search over a variable-level CPDAG.
(\myrf) \textbf{Oracle} assumes access to the true variable-level DAG and uses all non-descendant \textit{variables} of $\biA$.

\subsection{Synthetic Data Experiments} \label{subsec:synthetic_experiments}

\begin{table*}[t]
  \centering
  \caption{Area Under the Curve (AUC) and unfairness for real-world datasets on the held-out test set using 20 random seeds}
  \label{table:real_auc_unfairness}
  \scshape
  \scalebox{0.95}{%
  \begin{tabular}{lcccccc}
    \toprule
    & \multicolumn{2}{c}{\textsc{Adult}} & \multicolumn{2}{c}{\textsc{German}} & \multicolumn{2}{c}{\textsc{OULAD}} \\
    \cmidrule(lr){2-3}\cmidrule(lr){4-5}\cmidrule(lr){6-7}
    & \textsc{AUC}$\uparrow$ & \textsc{Unfairness}$\downarrow$
    & \textsc{AUC}$\uparrow$ & \textsc{Unfairness}$\downarrow$
    & \textsc{AUC}$\uparrow$ & \textsc{Unfairness}$\downarrow$ \\
    \midrule
    \textsc{Oracle} 
      & $0.709 \pm 0.009$ & $0.000 \pm 0.000$
      & $0.582 \pm 0.004$ & $0.000 \pm 0.000$
      & $0.628 \pm 0.015$ & $0.000 \pm 0.000$ \\
    \midrule
    \textsc{Full} 
      & $0.874 \pm 0.004$ & $0.030 \pm 0.024$
      & $0.762 \pm 0.060$ & $0.173 \pm 0.011$
      & $0.697 \pm 0.024$ & $0.061 \pm 0.001$ \\
    \midrule
    \textsc{Unaware} 
      & $0.805 \pm 0.049$ & $0.018 \pm 0.014$
      & $0.695 \pm 0.061$ & $0.101 \pm 0.013$
      & $0.654 \pm 0.019$ & $0.004 \pm 0.000$ \\
    \textsc{No-DesCs} 
      & $0.815 \pm 0.007$ & $0.026 \pm 0.022$
      & $0.726 \pm 0.059$ & $0.085 \pm 0.008$
      & $0.654 \pm 0.020$ & $0.004 \pm 0.000$ \\
    $\epsilon$-\textsc{IFair} 
      & $0.812 \pm 0.006$ & $0.015 \pm 0.012$
      & $0.756 \pm 0.008$ & $0.082 \pm 0.000$
      & $0.641 \pm 0.005$ & $0.005 \pm 0.002$ \\
    $\ell$-\textsc{IFair} 
      & $0.764 \pm 0.002$ & $0.015 \pm 0.008$
      & $0.750 \pm 0.035$ & $0.151 \pm 0.011$
      & $0.639 \pm 0.011$ & $0.003 \pm 0.001$ \\
    \textbf{\textsc{C-IFair}} 
      & $0.829 \pm 0.021$ & $0.014 \pm 0.010$
      & $0.760 \pm 0.061$ & $0.065 \pm 0.008$
      & $0.660 \pm 0.018$ & $0.001 \pm 0.000$ \\
    \bottomrule
  \end{tabular}%
  }
\end{table*}

\paragraph{Settings.} We use linear and nonlinear synthetic datasets generated from variable-level causal DAGs. We randomly sample a causal DAG with $d_v = 3d$ variables (for each $d \in \{5,10,15\}$) from an Erd\H{o}s--R\'enyi (ER) model with an expected degree of 2. Using the resulting DAGs and corresponding linear or nonlinear SCMs, we generate 20 datasets with sample size $n=5000$, each of which is split into training (80\%), validation (10\%), and test (10\%) sets.

By applying CLOC to each dataset, we infer a cluster CPDAG over $d$ clusters, using a randomly generated partition with 3 variables per cluster. We then select one cluster consisting of binary variables as the sensitive cluster $\biA$. 
In our main experiments, we set the admissible cluster $\biX^{\mathrm{ad}}$ to be empty; results with $\biX^{\mathrm{ad}}$ are provided in \Cref{subsec-add-admissible}.

To evaluate unfairness, we estimate the MMD on the test set by sampling interventional datasets with the true SCM.

\paragraph{Performance Comparison.} \Cref{table:synth_rmse_unfairness} reports the mean and standard deviation of RMSE and unfairness over $20$ datasets.

Among the baselines (excluding \textbf{Oracle} and \textbf{Full}), our \textbf{C-IFair} performs best on average in both RMSE and unfairness, highlighting the effectiveness of our cluster-graph-based framework. 
While maintaining high accuracy, it achieves fairness closest to \textbf{Oracle}, which requires the true variable-level DAG—an unrealistic requirement in practice.

\textbf{Unaware} and \textbf{No-DesCs} exhibit high RMSE and unfairness, implying that discarding potentially predictive features is insufficient to remove indirect effects 
due to graph uncertainty.
$\epsilon$\textbf{-IFair} and $\ell$\textbf{-IFair} produce less fair predictions than our \textbf{C-IFair} in high-dimensional setups ($d=15$),
indicating that inferred variable-level CPDAGs can be unreliable in such setups. 
We provide supporting evidence for this claim in \Cref{subsec-add-graph-inference} 
in comparison with cluster CPDAG inference, following the procedure of \citet{anand2025causal}.

\paragraph{Accuracy--Fairness Trade-Off.} To assess the effect of penalty function $g_{\theta}$, we vary the parameter $\lambda$ from $0$ to $200$.

\Cref{fig:tradeoff} shows results on the linear datasets. As expected, increasing $\lambda$ reduces unfairness at the cost of higher RMSE,
illustrating how the choice of penalty parameter $\lambda$ adjusts 
the fairness--accuracy trade-off in \textbf{C-IFair}.

\subsection{Real-World Data Experiments} \label{subsec:real_experiments}

\paragraph{Settings.} We use three standard benchmarks, the Adult, German credit, and OULAD datasets. 
To evaluate unfairness in the absence of ground-truth SCMs, 
we compute the weighted MMD estimator in \eqref{eq-int-mmd} 
by leveraging the variable-level DAGs used in prior works (\Cref{subsec-realdata}).

\paragraph{Performance Comparison.} \Cref{table:real_auc_unfairness} presents the AUC and unfairness on the three datasets.
Again, our \textbf{C-IFair} achieves the best average performance in both AUC and unfairness among all baselines (except \textbf{Oracle} and \textbf{Full}), thus underscoring its practical utility for real-world applications.

\paragraph{Fairness Illustration.}
We compare the inferred interventional distributions over prediction $\hat{Y} \in \{0, 1\}$ on the Adult dataset.
Following \citet{zuo2024interventional}, we estimate probability $\pr(\hat{Y} = 1 \mid do(\biA = \bia))$ for different values $\bia$ by approximating the SCM via conditional density estimation.

\Cref{fig:int-diff} shows the probability difference from baseline value $\bia = [0,0]$.
Our \textbf{C-IFair} makes sufficiently fair predictions with respect to sensitive features $\biA$ (gender and race),
thus highlighting its effectiveness in practical applications.

\subsection{Additional Experiments} \label{subsec:summary_add_experiments}

Results in \Cref{sec-add-experiments} support the following key findings:
\begin{itemize}
  \item \textbf{In presence of admissible clusters:} Our \textbf{C-IFair} performs best in these challenging setups (\Cref{subsec-add-admissible}).
  \item \textbf{Dense graphs:} Despite the change in number of adjustment sets $\npp$, our \textbf{C-IFair} works best (\Cref{subsec-add-dense}).
  \item \textbf{\textit{Inadmissible} partitions:} Empirically, the inferred graphs can still enforce fairness (\Cref{subsec-add-violate}). 
\end{itemize}

\section{Related Work} \label{sec:related_work}

	\begin{figure}[t]
		\includegraphics[height=3.8cm]{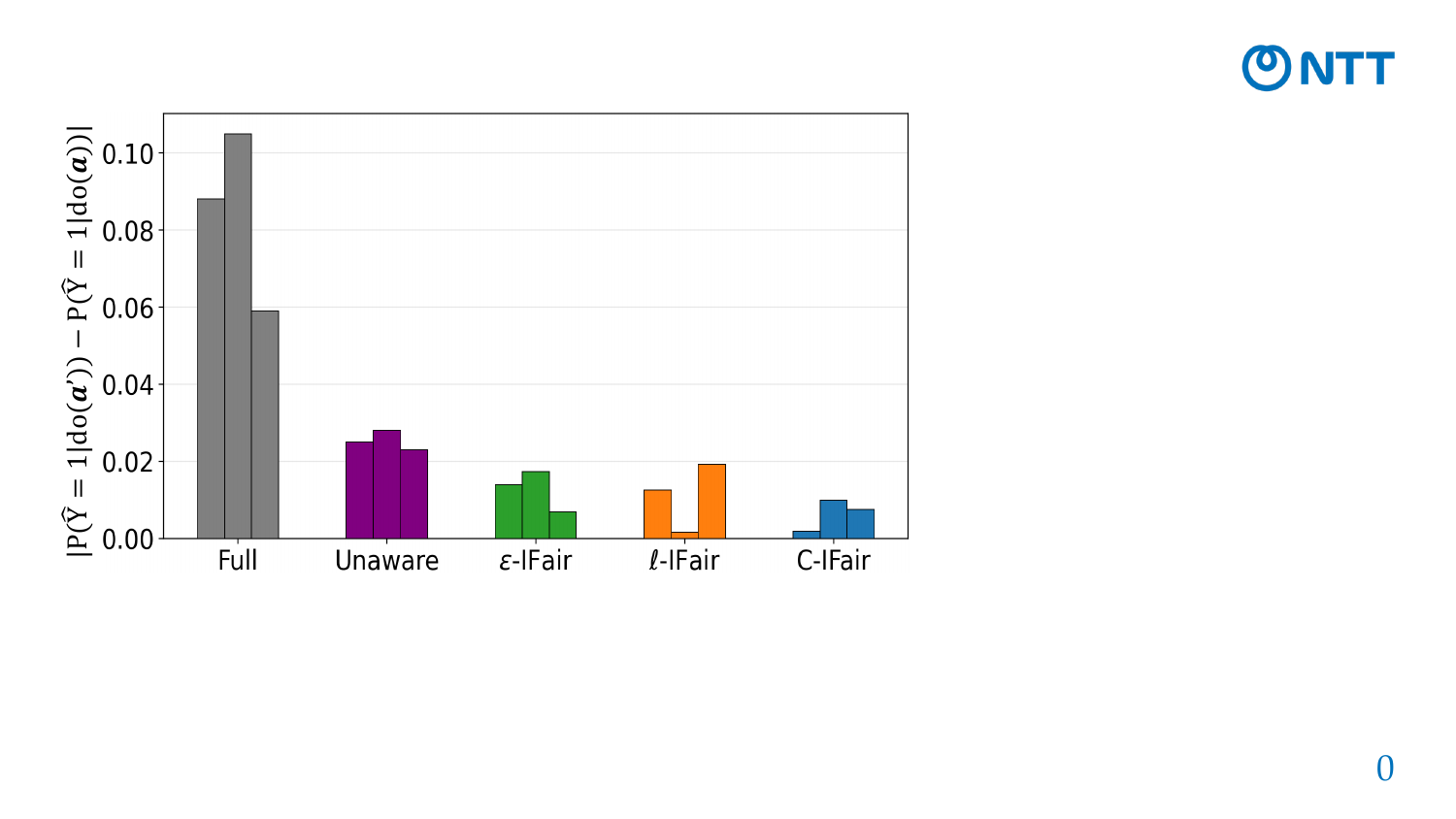}
		\centering 
			\caption{Probability difference between intervened values $\bia'=[0,1], [1,0], [1,1]$ and $\bia=[0,0]$ on Adult dataset} 
		\label{fig:int-diff}
	\end{figure}

  \paragraph{Fairness under Causal Graph Uncertainty.}
Achieving fairness under causal graph uncertainty has been a long-standing challenge. Prior work has addressed counterfactual fairness using manually designed candidate graphs \citep{russell2017worlds,chikahara2023making}. $\epsilon$\textbf{-IFair} \citep{zuo2024interventional} was the first to leverage a CPDAG, but it assumes that no feature is connected to a sensitive feature via an undirected edge. $\ell$\textbf{-IFair} \citep{li2024local} relaxes this assumption, yet it still requires accurate CPDAG estimation.

Our \textbf{C-IFair} advances this line of work by providing a principled framework for leveraging cluster-level causal structure, which can be reliably inferred from the data.

\paragraph{Cluster Causal Graphs.}
While causal discovery has traditionally focused on standard causal graphs over individual variables \citep{spirtes2000causation,shimizu2006linear,hoyer2009nonlinear,zhang2009identifiability,chikahara2018causal,glymour2019review,chikahara2026moment,ong2026metacadi},
there is growing interest in cluster causal graphs, which aim to capture macro-level causal relations among groups of variables.
Early work focused on Bayesian networks over groups of variables, called \textit{module networks} \citep{segal2005learning,parviainen2017learning}. 
Building on the SCM framework, \citet{anand2023causal} introduced cluster DAGs as causal graphs whose nodes are user-specified clusters of variables. 
Subsequent work has studied theoretical foundations and practical discovery methods for such cluster causal graphs \citep{niu2022learning,wahl2023vector,tikka2023clustering,wahl2024foundations,li2026local}.

Our learning framework builds on the CLOC method developed by \citet{anand2025causal}, which learns a cluster CPDAG without imposing restrictive assumptions on the graph structure within each cluster. 
This property is crucial for our goal of achieving interventional fairness under limited causal knowledge. 
A remaining limitation is the admissible partition assumption (Assumption~\ref{asmp:cpdag}), 
which requires the user-specified partition not to induce directed cycles at the cluster level; we discuss this assumption further in \Cref{app:cluster-formation}.
Recent theoretical results on the cluster DAGs have shown the possibility of relaxing this assumption \citep{yvernes2026relaxing}.
Incorporating such promising results into fairness-aware prediction is an important direction for future work.

\section{Conclusion} \label{sec:conclusion}

We propose a learning framework for achieving interventional fairness from the cluster CPDAG structure,
whose inference requires substantially fewer conditional independence tests than the variable-level CPDAG.
To overcome the identification challenges in cluster CPDAGs, 
we establish a graphical algorithm for identifying valid adjustment sets.
By computing the maximum MMD over these sets,
we enforce interventional fairness in the presence of graph uncertainty.

A promising direction for future work is to extend our framework to incorporate additional domain knowledge about the graph structure, just as the extension of CPDAG-based methods to 
maximally oriented partial directed acyclic graphs (MPDAGs) 
\citep{li2024local, zuo2024interventional}. 
Although extending cluster CPDAGs to cluster MPDAGs is highly non-trivial, it would further reduce the uncertainty in causal structures and improve the fairness-accuracy trade-off.



\begin{acknowledgements} 
  This work was supported by JST ACT-X (JPMJAX23CF).
  The author thanks the anonymous reviewers for their constructive comments, which helped improve the clarity and presentation of the paper.
\end{acknowledgements}

\bibliography{main}

\newpage

\onecolumn

\title{Fairness under Graph Uncertainty: 
Achieving Interventional Fairness \\
with Partially Known Causal Graphs over Clusters of Variables\\(Supplementary Material)}
\maketitle

\appendix

\section{Background on Cluster Causal Graphs} \label{sec:background_cluster_causal_graphs}

\subsection{Notions for Variable-Level Graphs} \label{subsec:graphical_notions}

We first present some graphical notions used in this paper.

\paragraph{Variable-level d-separation.} 
A triplet of nodes $\langle V_i, V_k, V_j \rangle$ in a variable-level DAG $\vDAG$ is \textit{active} relative to a set of nodes $\biZ$ if either 
(\myra) $V_k$ is a collider or any of its descendants are in $\biZ$, or (\myrb) 
$V_k$ is a non-collider and $V_k \notin \biZ$. 
A path $p$ is \textit{active} given $\biZ$ if every
triplet on $p$ is active relative to $\biZ$. Otherwise, $p$ is \textit{inactive}.
Given a variable-level causal DAG $\vDAG$, two node sets $\biX$ and $\biY$ are \textit{d-separated} by a set of nodes $\biZ$, if there is no active path between $\biX$ and $\biY$ given $\biZ$.
We denote d-separation in a variable-level causal DAG $\vDAG$ by $(\biX, \biY \mid \biZ)_{\vDAG}$.

\paragraph{Definite Colliders and Definite Non-Colliders in Variable-level CPDAGs.} A variable-level CPDAG has directed and undirected edges, where directed edges are common to all DAGs represented by the CPDAG, while undirected edges indicate uncertainty in edge directions in the DAGs.
A \textit{definite collider} and a \textit{definite non-collider} are a collider and a non-collider in all DAGs represented by the variable-level CPDAG, respectively.
They are determined for each \textit{shielded} triplet $\langle X, Z, Y \rangle$ (i.e., a triplet where nodes $X$ and $Y$ are adjacent to each other) and each \textit{unshielded} triplet $\langle X, Z, Y \rangle$ (i.e., a triplet where nodes $X$ and $Y$ are non-adjacent to each other).
$Z$ in a triplet $\langle X, Z, Y \rangle$ is a definite collider if $X \to Z \leftarrow Y$ in the CPDAG.
$Z$ is a definite non-collider if either
(\myra) at least one edge is out of it (i.e., $X \leftarrow Z - Y$ or $X - Z \rightarrow Y$), or 
(\myrb) both edges are undirected ($X - Z - Y$) and the triplet is unshielded.

\subsection{Definition of Cluster Causal Graphs} \label{subsec:definition_cluster_causal_graphs}

We present a brief overview of the definition of cluster causal graphs based on \citet{anand2025causal}.

\paragraph{Independence arcs.} Independence arcs describe the conditional independence relationships in an unshielded triplet of clusters $\langle \biC_i, \biC_k, \biC_j \rangle$ in a cluster DAG. For a shielded triplet $\langle \biC_i, \biC_k, \biC_j \rangle$, we consider the unshielded triplet formed by removing the edge between $\biC_i$ and $\biC_j$, which is referred to as a \textit{manipulated unshielded triplet} in \citet{anand2025causal}.

\begin{definition}[Independence arcs \citep{anand2025causal}]
For any unshielded triplet (and any manipulated unshielded triplet) of clusters $\langle \biC_i,\biC_k,\biC_j\rangle$ in a cluster DAG $\cDAG$ over clusters $\cNode=\{\biC_1,\dots,\biC_m\}$,
let $S$ be a (possibly empty) set of clusters
\[
S \subseteq \cNode \setminus \{\biC_i,\biC_j\}
\quad \text{such that} \quad
\biC_i \perp \biC_j \mid S,
\]
if at least one such separating set exists (if multiple exist, fix any one of them).

Define the independence arc $\IA_{\langle \biC_i,\biC_k,\biC_j\rangle}$ (drawn between the two edges incident to $\biC_k$ in the triplet) by:
\begin{enumerate}
    \item $\IA_{\langle \biC_i,\biC_k,\biC_j\rangle}=\typeMarg$ \quad if and only if \quad $\biC_k \in S$.
    \item $\IA_{\langle \biC_i,\biC_k,\biC_j\rangle}=\typeCond$ \quad if and only if \quad $\biC_k \notin S$ and $\biC_i \not\perp \biC_j \mid (S \cup \{\biC_k\})$.
    \item $\IA_{\langle \biC_i,\biC_k,\biC_j\rangle}=\typeNever$ \quad if and only if \quad $\biC_k \notin S$ and $\biC_i \perp \biC_j \mid (S \cup \{\biC_k\})$.
\end{enumerate}
\label{def:independence_arc}
\end{definition}

\paragraph{Separation and Connection Marks.} Separation and connection marks annotate independence arcs with $\typeMarg$ and $\typeCond$ and those with $\typeNever$, respectively, to indicate the exceptional cases where the corresponding conditional independence relationships do not hold.

Formally, separation marks are defined as follows.
\begin{definition}[Separation marks \citep{anand2025causal}]
Let $\vDAG$ be a variable-level DAG and $\cDAG$ be a cluster DAG with independence arcs as in \Cref{def:independence_arc}.
Consider a cluster path
\[
p_{\mathrm{c}}=\langle \biC_{1},\biC_{2},\dots,\biC_{n}\rangle
\quad (n\ge 4)
\]
and its arc trajectory
\[
a=\langle \IA_{\langle \biC_{1},\biC_{2},\biC_{3}\rangle},\ \IA_{\langle \biC_{2},\biC_{3},\biC_{4}\rangle},\ \dots,\ \IA_{\langle \biC_{n-2},\biC_{n-1},\biC_{n}\rangle}\rangle.
\]
Assume:
\begin{enumerate}
    \item no arc in $a$ is of type $\typeNever$;
    \item there exists \emph{no} d-connecting variable-level path in $\vDAG$ (relative to some set of clusters) that is analogous to $p_{\mathrm{c}}$;
    \item there exists a d-connecting variable-level path (relative to some set of clusters) analogous to the prefix path $\langle \biC_{1},\dots,\biC_{n-1}\rangle$;
    \item there exists a d-connecting variable-level path (relative to some set of clusters) analogous to the suffix path $\langle \biC_{2},\dots,\biC_{n}\rangle$.
\end{enumerate}
Then we place a \emph{separation mark} $\oslash_{\biC_{1}}$ on the last arc $\IA_{\langle \biC_{n-2},\biC_{n-1},\biC_{n}\rangle}$, and symmetrically place $\oslash_{\biC_{n}}$ on the first arc $\IA_{\langle \biC_{1},\biC_{2},\biC_{3}\rangle}$.
\label{def:separation_mark}
\end{definition}
In \Cref{def:separation_mark}, the concept of \textit{analogous paths} is used, which is defined as follows.
\begin{definition}[Analogous paths \citep{anand2025causal}]
Let $\vDAG$ be a variable-level DAG, and let $\cDAG$ be a cluster DAG over clusters
$\cNode=\{\biC_1,\dots,\biC_d\}$, where each cluster $\biC$ contains a set of variables in $\vDAG$.
Consider a path over clusters
\[
p_{\mathrm{c}}=\langle \biC_{1},\biC_{2},\dots,\biC_{n}\rangle
\]
in $\cDAG$, and a path over variables
\[
p_{\mathrm{v}}=\langle V_{1},V_{2},\dots,V_{m}\rangle
\]
in $\vDAG$.
We say that $p_{\mathrm{v}}$ is \emph{analogous} to $p_{\mathrm{c}}$ (and vice versa) if and only if:
\begin{enumerate}
    \item every variable on $p_{\mathrm{v}}$ belongs to one of the clusters appearing on $p_{\mathrm{c}}$;
    \item every cluster appearing on $p_{\mathrm{c}}$ contributes at least one variable to $p_{\mathrm{v}}$;
    \item the order along $p_{\mathrm{v}}$ is consistent with the order of clusters along $p_{\mathrm{c}}$ in the sense that,
    once $p_{\mathrm{v}}$ visits some variable in the last cluster $\biC_{t}$ (for any $t$), it never later visits a variable that lies in any earlier cluster $\biC_{s}$ with $s<t$ ($s, t \in \{1, \dots, n\}$).
\end{enumerate}
\label{def:analogous_paths}
\end{definition}

By contrast, connection marks are defined as follows.
\begin{definition}[Connection marks \citep{anand2025causal}]
Let $\vDAG$ be a variable-level DAG and $\cDAG$ be a cluster DAG with independence arcs.
Consider a triplet of clusters $\langle \biC_i,\biC_k,\biC_j\rangle$ in $\cDAG$ and its independence arc $\IA_{\langle \biC_i,\biC_k,\biC_j\rangle}$.
Assume that $\IA_{\langle \biC_i,\biC_k,\biC_j\rangle}\in\{\typeNever,\typeCond\}$.

If there exists a variable-level path in $\vDAG$ that passes through variables
\[
\langle V_i,\dots,V_k,\dots,V_j\rangle
\quad \text{with } V_i\in \biC_i,\ V_k\in \biC_k,\ V_j\in \biC_j,
\]
then for each collider variable $V_k' \in \biC_k$ that lies on such a path, collect the clusters that contain \emph{children} (equivalently, descendants via one outgoing step) of that collider along the same path:
\[
D \;=\; \bigcup \bigl\{\, \biC_d \in \cNode \setminus \{\biC_i,\biC_k,\biC_j\} \;:\; \exists\, V_d \in \biC_d \text{ with } V_d \in \mathrm{Ch}(V_k') \,\bigr\}.
\]
We then add the \emph{connection mark} $\oplus_{D}$ to the independence arc $\IA_{\langle \biC_i,\biC_k,\biC_j\rangle}$.
\label{def:connection_mark}
\end{definition}

\paragraph{Cluster Causal DAG with Independence Arcs.} Using the notions of independence arcs, separation marks, and connection marks, cluster causal DAGs with independence arcs are defined as follows.
\begin{definition}[Cluster Causal DAGs \citep{anand2025causal}]
Given a variable-level DAG $\vDAG$ and an admissible partition of clusters $\cNode = \{\biC_1, \dots, \biC_d\}$ of variables in $\vDAG$,
 a \textit{cluster causal DAG with independence arcs} 
 is defined as a DAG $\cDAG = (\cNode, \cDir, \IAs)$ over clusters $\cNode$ such that
 \begin{itemize}
  \item  An edge $\biC_i \rightarrow \biC_j$ is in $\cDir$ if there exist some $V_{i} \in \biC_i$ and $V_{j} \in \biC_j$ such that $V_{i} \rightarrow V_{j}$ is in $\vDAG$; 
  \item The set of independence arcs $\IAs$ is defined according to \Cref{def:independence_arc};
  \item The separation marks and connection marks are defined according to \Cref{def:separation_mark} and \Cref{def:connection_mark}, respectively.
 \end{itemize}
 \label{def:cluster_causal_dag}
\end{definition}

\paragraph{Cluster CPDAGs.} A cluster CPDAG with independence arcs is defined as the equivalence class of cluster causal DAGs with independence arcs that entail the same set of conditional independence relationships over clusters.
\begin{definition}[Cluster CPDAGs \citep{anand2025causal}]
Let $\cMEC$ be the cluster MEC of a cluster causal DAG $\cDAG$ over clusters $\cNode = \{\biC_1, \dots, \biC_d\}$.
The \textit{cluster CPDAG with independence arcs} for the cluster MEC $\cMEC$, denoted by $\cCPDAG$, is defined such that 
\begin{itemize}
\item $\cCPDAG$ has the same skeleton as $\cDAG$ and its any member of $\cMEC$;
\item A directed edge is present in $\cCPDAG$ if it is present in all members of $\cMEC$; otherwise, the edge is undirected;
\item An independence arc is present in $\cCPDAG$ if it is present in all members of $\cMEC$; otherwise, the independence arc is absent;
\item $\cCPDAG$ has the same separation marks and connection marks as $\cDAG$ and its any member of $\cMEC$. 
\end{itemize}  
\label{def:cluster_cpdag}
\end{definition}

\subsection{Assumption on  Cluster Causal Graphs} \label{subsec:assumption_cluster_causal_graphs}

The CLOC algorithm \citep{anand2025causal} for learning cluster CPDAGs from data relies on the faithfulness assumption at the cluster level (Assumption \ref{asmp:faithful}), which is defined as follows.
\begin{definition}[Cluster-level Faithfulness \citep{anand2025causal}] 
Let $\cDAG$ be a cluster DAG $\cDAG$ over clusters $\cNode$, and let $\pr(\cNode)$ be the joint distribution $\pr(\cNode)$ over $\cNode$ that is generated from an SCM compatible with a causal DAG $\vDAG$ compatible with $\cDAG$.
Then $\pr(\cNode)$ is \textit{faithful} to $\cDAG$ if conditional independence relationships across clusters $\cNode$ imply the cluster-level d-separation (\Cref{def:d-sep}) in $\cDAG$; that is, for any disjoint cluster sets $\biX, \biY, \biZ \subseteq \cNode$,
\begin{align*}
  (\biX \indep \biY \mid \biZ)_{\pr(\cNode)} \implies (\biX, \biY \mid \biZ)_{\cDAG}.
\end{align*}
\label{def:faithful}
\end{definition}

\section{Derivations} 

In this section, we provide the detailed derivations of the equations presented in the main paper.

\subsection{Barycenter Kernel MMD (Eq. (7))} \label{dev:barycenter_mmd}

In Eq. (7), we decompose the summation of kernel MMDs over sensitive feature value pairs $(\bia, \bia')$ into the sum of barycenter MMDs. We drop the subscript $\ipp$ for adjustment set $\biZ_{\ipp}$ for notational simplicity and restate Eq. (7) as
\begin{align*}
\sum_{\bia, \bia'} \mathrm{MMD}\left(\pr_{\biYhat| do(\bia), do(\bix^{\mathrm{ad}})}, \pr_{\biYhat| do(\bia'), do(\bix^{\mathrm{ad}})}\right)  
= 2 N_{\biA} \sum_{\bia} \mathrm{MMD}\left(\pr_{\biYhat| do(\bia), do(\bix^{\mathrm{ad}})}, \frac{1}{N_{\biA}} \sum_{\bia'} \pr_{\biYhat| do(\bia'), do(\bix^{\mathrm{ad}})}\right).
\end{align*}

\paragraph{Derivation.}
Fix $\bix^{\mathrm{ad}}$ and define
$P_{\bia} := \pr_{\biYhat\mid do(\bia),\, do(\bix^{\mathrm{ad}})}$.
Let $\mathcal{H}$ be the RKHS associated with the kernel and feature map
$\phi:\mathrm{supp}(\biYhat)\to \mathcal{H}$, and define the kernel mean embedding
\[
\mu_{\bia} \;:=\; \E_{\biYhat\sim P_{\bia}}\!\left[\phi(\biYhat)\right]\in\mathcal{H}.
\]
We use the (squared) RKHS distance representation of the kernel MMD:
\[
\mathrm{MMD}(P,Q)\;=\;\|\mu_P-\mu_Q\|_{\mathcal{H}}^{2}.
\]
Then
\[
\sum_{\bia,\bia'} \mathrm{MMD}(P_{\bia},P_{\bia'})
=
\sum_{\bia,\bia'} \|\mu_{\bia}-\mu_{\bia'}\|_{\mathcal{H}}^{2}.
\]
Expanding the squared norm gives
\begin{align*}
\sum_{\bia,\bia'} \|\mu_{\bia}-\mu_{\bia'}\|_{\mathcal{H}}^{2}
=
\sum_{\bia,\bia'}\Big(\|\mu_{\bia}\|_{\mathcal{H}}^{2}
+\|\mu_{\bia'}\|_{\mathcal{H}}^{2}
-2\langle \mu_{\bia},\mu_{\bia'}\rangle_{\mathcal{H}}\Big) 
=
2N_{\biA}\sum_{\bia}\|\mu_{\bia}\|_{\mathcal{H}}^{2}
-2\Big\|\sum_{\bia}\mu_{\bia}\Big\|_{\mathcal{H}}^{2}.
\end{align*}
Now define the barycenter (mixture) distribution
\[
\bar P \;:=\;\frac{1}{N_{\biA}}\sum_{\bia'} P_{\bia'}.
\]
By linearity of expectation, its kernel mean embedding is
\[
\bar\mu \;:=\;\mu_{\bar P}
=\E_{\biYhat\sim \bar P}[\phi(\biYhat)]
=\frac{1}{N_{\biA}}\sum_{\bia'}\mu_{\bia'}.
\]
Expanding the squared norm with $\bar\mu$ yields
\begin{align*}
\sum_{\bia}\|\mu_{\bia}-\bar\mu\|_{\mathcal H}^{2}
=\sum_{\bia}\Big(\|\mu_{\bia}\|_{\mathcal H}^{2}
-2\langle \mu_{\bia},\bar\mu\rangle_{\mathcal H}
+\|\bar\mu\|_{\mathcal H}^{2}\Big)
=\sum_{\bia}\|\mu_{\bia}\|_{\mathcal H}^{2}
-2\left\langle \sum_{\bia}\mu_{\bia},\bar\mu\right\rangle_{\mathcal H}
+N_{\biA}\|\bar\mu\|_{\mathcal H}^{2}.
\end{align*}
Since the inner product is given by
\begin{align*}
\left\langle \sum_{\bia}\mu_{\bia},\bar\mu\right\rangle_{\mathcal H}
=
\left\langle \sum_{\bia}\mu_{\bia},\frac1{N_{\biA}}\sum_{\bia'}\mu_{\bia'}\right\rangle_{\mathcal H}
=
\frac1{N_{\biA}}\left\|\sum_{\bia}\mu_{\bia}\right\|_{\mathcal H}^{2}
=
N_{\biA}\|\bar\mu\|_{\mathcal H}^{2},
\end{align*}
the sum of squared norms simplifies to
\begin{align*}
\sum_{\bia}\|\mu_{\bia}-\bar\mu\|_{\mathcal H}^{2}
=
\sum_{\bia}\|\mu_{\bia}\|_{\mathcal H}^{2}
-2N_{\biA}\|\bar\mu\|_{\mathcal H}^{2}
+N_{\biA}\|\bar\mu\|_{\mathcal H}^{2}
=
\sum_{\bia}\|\mu_{\bia}\|_{\mathcal H}^{2}
- N_{\biA}\|\bar\mu\|_{\mathcal H}^{2}.
\end{align*}
Hence we have
\begin{align*}
\sum_{\bia,\bia'} \|\mu_{\bia}-\mu_{\bia'}\|_{\mathcal{H}}^{2}
=
2N_{\biA}\sum_{\bia}\|\mu_{\bia}\|_{\mathcal{H}}^{2}
-2N_{\biA}^{2}\|\bar\mu\|_{\mathcal{H}}^{2} 
=
2N_{\biA}\sum_{\bia}\|\mu_{\bia}-\bar\mu\|_{\mathcal{H}}^{2},
\end{align*}
Finally, since $\|\mu_{\bia}-\bar\mu\|_{\mathcal{H}}^{2}=\mathrm{MMD}(P_{\bia},\bar P)$, we obtain
\[
\sum_{\bia,\bia'} \mathrm{MMD}(P_{\bia},P_{\bia'})
=
2N_{\biA}\sum_{\bia}\mathrm{MMD}\!\left(
P_{\bia},\frac{1}{N_{\biA}}\sum_{\bia'}P_{\bia'}
\right),
\]
which is Eq.~(7).

\subsection{Identification and Estimation of Interventional Distributions (Eqs. (5) and (10))} \label{dev:adjustment_formula}

We first restate Eqs. (5) and (10) below.

\begin{align*}
    \begin{aligned}
    \pr(\biYhat = \biyhat \mid do(\biA = \bia), do(\biX^{\mathrm{ad}} = \bix^{\mathrm{ad}}))  
    &= \E_{\biZ, \biX^{\mathrm{re}} \mid \bia, \bix^{\mathrm{ad}}}\left[\pr(\biYhat = \biyhat | \biA = \bia, \biX^{\mathrm{ad}} = \bix^{\mathrm{ad}}, \biZ, \biX^{\mathrm{re}}) \right] \\
    &= \E\left[\frac{\I(\bia, \bix^{\mathrm{ad}})}{\pi_{\bia, \bix^{\mathrm{ad}}}(\biZ)} \E_{\biX^{\mathrm{re}} \mid \biA, \biX^{\mathrm{ad}}, \biZ} \left[ \pr(\biYhat = \biyhat \mid \biA,  \biX^{\mathrm{ad}}, \biZ, \biX^{\mathrm{re}}) \right] \right].
    \end{aligned} 
\end{align*}

Below we provide the derivation of these equations.

\paragraph{Derivation.}
We derive Eq.~\eqref{eq-adjustment} and Eq.~\eqref{eq-IPW} from the observed distribution.
Let $\biZ$ be a valid adjustment set, and let $\biX^{\mathrm{re}}$ denote the remaining features except $\biA$, $\biX^{\mathrm{ad}}$, and $\biZ$.

\smallskip
\noindent\textbf{Step 1 (Eq.~\eqref{eq-adjustment}).}
By marginalizing out $\biZ$ and $\biX^{\mathrm{re}}$ under the intervention, we obtain
\begin{align}
&\pr(\biYhat=\biyhat \mid do(\biA=\bia),do(\biX^{\mathrm{ad}}=\bix^{\mathrm{ad}})) \nonumber \\
= &\sum_{\biz}\sum_{\bix^{\mathrm{re}}}
\pr(\biYhat=\biyhat\mid \biA=\bia,\biX^{\mathrm{ad}}=\bix^{\mathrm{ad}},\biZ=\biz,\biX^{\mathrm{re}}=\bix^{\mathrm{re}})\,
\pr(\biX^{\mathrm{re}}=\bix^{\mathrm{re}}\mid \biA=\bia,\biX^{\mathrm{ad}}=\bix^{\mathrm{ad}},\biZ=\biz)\,
\pr(\biZ=\biz) \nonumber\\
=
&\E_{\biZ}\!\left[
\E_{\biX^{\mathrm{re}}\mid \biA=\bia,\biX^{\mathrm{ad}}=\bix^{\mathrm{ad}},\biZ}\!
\Bigl[\pr(\biYhat=\biyhat\mid \biA=\bia,\biX^{\mathrm{ad}}=\bix^{\mathrm{ad}},\biZ,\biX^{\mathrm{re}})\Bigr]
\right]. \label{eq:dev_eq5}
\end{align}

\smallskip
\noindent\textbf{Step 2 (Eq.~\eqref{eq-IPW}).}
Define the joint propensity score by
\[
\pi_{\bia,\bix^{\mathrm{ad}}}(\biZ)\coloneqq \pr(\biA=\bia,\biX^{\mathrm{ad}}=\bix^{\mathrm{ad}}\mid \biZ),
\]
and assume $\pi_{\bia,\bix^{\mathrm{ad}}}(\biZ)>0$ almost surely.

\begin{lemma}[IPW identity with joint propensity score]
\label{lem:ipw_identity}
For any integrable function $H(\biA,\biX^{\mathrm{ad}},\biZ,\biX^{\mathrm{re}})$,
\begin{align}
\E\!\left[
\frac{\I(\biA=\bia,\biX^{\mathrm{ad}}=\bix^{\mathrm{ad}})}{\pi_{\bia,\bix^{\mathrm{ad}}}(\biZ)}
\;\E_{\biX^{\mathrm{re}}\mid \biA,\biX^{\mathrm{ad}},\biZ}\!\left[H(\biA,\biX^{\mathrm{ad}},\biZ,\biX^{\mathrm{re}})\right]
\right]
=
\E_{\biZ}\!\left[
\E_{\biX^{\mathrm{re}}\mid \biA=\bia,\biX^{\mathrm{ad}}=\bix^{\mathrm{ad}},\biZ}\!\left[H(\bia,\bix^{\mathrm{ad}},\biZ,\biX^{\mathrm{re}})\right]
\right]. \label{eq:ipw_identity}
\end{align}
\end{lemma}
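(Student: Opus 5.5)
We prove the IPW identity \eqref{eq:ipw_identity} directly with the tower property of conditional expectation. The idea is to condition on $\biZ$ first, so the joint propensity score in the denominator can be factored out, and then to show that the indicator's conditional expectation cancels it exactly.

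Define the inner conditional expectation as a function of the conditioning variables,
\[
g(\bia'',\bix'',\biz) \coloneqq \E_{\biX^{\mathrm{re}}\mid \biA=\bia'',\biX^{\mathrm{ad}}=\bix'',\biZ=\biz}\!\left[H(\bia'',\bix'',\biz,\biX^{\mathrm{re}})\right].
\]
The integrand on the left-hand side is then $\I(\biA=\bia,\biX^{\mathrm{ad}}=\bix^{\mathrm{ad}})\, g(\biA,\biX^{\mathrm{ad}},\biZ)/\pi_{\bia,\bix^{\mathrm{ad}}}(\biZ)$. On the event where the indicator equals one, we may replace $g(\biA,\biX^{\mathrm{ad}},\biZ)$ by $g(\bia,\bix^{\mathrm{ad}},\biZ)$. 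After this substitution the integrand is the indicator times a function of $\biZ$ alone.

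Next apply the tower property, conditioning on $\biZ$. Since $g(\bia,\bix^{\mathrm{ad}},\biZ)/\pi_{\bia,\bix^{\mathrm{ad}}}(\biZ)$ is $\biZ$-measurable, it comes out of the inner conditional expectation, leaving $\E[\I(\biA=\bia,\biX^{\mathrm{ad}}=\bix^{\mathrm{ad}})\mid \biZ]$. By definition this equals $\pi_{\bia,\bix^{\mathrm{ad}}}(\biZ)$. Positivity, $\pi>0$ almost surely, lets this factor cancel the denominator, leaving $\E_{\biZ}[g(\bia,\bix^{\mathrm{ad}},\biZ)]$, which is exactly the right-hand side.

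The main delicacy is technical rather than conceptual. First, positivity is needed for the ratio to be well defined. Second, we must check integrability of $\I\cdot g/\pi$ so that the tower property applies. This follows from integrability of $H$: the conditional expectation of $|\I g/\pi|$ given $\biZ$ is at most $\E[|g(\bia,\bix^{\mathrm{ad}},\biZ)|]$, which is finite by Jensen's inequality applied to $H$. For continuous $\biZ$ the argument is the same, with sums replaced by integrals against the law of $\biZ$. Taking $H=\pr(\biYhat=\biyhat\mid\cdot)$ then yields Eq.~\eqref{eq-IPW} from Eq.~\eqref{eq:dev_eq5}.
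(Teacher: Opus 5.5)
Your argument is the paper's: condition on $\biZ$, pull the $\biZ$-measurable factor out, and use $\E[\I(\biA=\bia,\biX^{\mathrm{ad}}=\bix^{\mathrm{ad}})\mid\biZ]=\pi_{\bia,\bix^{\mathrm{ad}}}(\biZ)$ to cancel the denominator. The only difference is that you replace $g(\biA,\biX^{\mathrm{ad}},\biZ)$ by $g(\bia,\bix^{\mathrm{ad}},\biZ)$ before conditioning, whereas the paper does so inside the conditional expectation.

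One caveat concerns your integrability remark. The assumption $\E[|H|]<\infty$ only controls $\E_{\biZ}[\pi_{\bia,\bix^{\mathrm{ad}}}(\biZ)\,|g(\bia,\bix^{\mathrm{ad}},\biZ)|]$, not $\E_{\biZ}[|g(\bia,\bix^{\mathrm{ad}},\biZ)|]$. So Jensen alone does not give finiteness when $\pi$ can approach $0$, and both sides can then be infinite or undefined. You need strict overlap ($\pi_{\bia,\bix^{\mathrm{ad}}}\ge\epsilon>0$) or bounded or nonnegative $H$. This is harmless in the application, since there $H=\pr(\biYhat=\biyhat\mid\cdot)\in[0,1]$.
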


\begin{proof}
By the law of iterated expectations,
\begin{align*}
\E\!\left[
\frac{\I(\biA=\bia,\biX^{\mathrm{ad}}=\bix^{\mathrm{ad}})}{\pi_{\bia,\bix^{\mathrm{ad}}}(\biZ)}
\;\E_{\biX^{\mathrm{re}}\mid \biA,\biX^{\mathrm{ad}},\biZ}\!\left[H\right]
\right] 
=
&\E_{\biZ}\!\left[
\E\!\left[
\left.
\frac{\I(\biA=\bia,\biX^{\mathrm{ad}}=\bix^{\mathrm{ad}})}{\pi_{\bia,\bix^{\mathrm{ad}}}(\biZ)}
\;\E_{\biX^{\mathrm{re}}\mid \biA,\biX^{\mathrm{ad}},\biZ}\!\left[H\right]
\;\right|\;\biZ
\right]\right] \\
=
&\E_{\biZ}\!\left[
\frac{1}{\pi_{\bia,\bix^{\mathrm{ad}}}(\biZ)}
\E\!\left[
\left.
\I(\biA=\bia,\biX^{\mathrm{ad}}=\bix^{\mathrm{ad}})
\;\E_{\biX^{\mathrm{re}}\mid \biA,\biX^{\mathrm{ad}},\biZ}\!\left[H\right]
\;\right|\;\biZ
\right]\right].
\end{align*}
Conditioned on $\biZ$, the indicator selects the event $(\biA,\biX^{\mathrm{ad}})=(\bia,\bix^{\mathrm{ad}})$, hence
\begin{align*}
\E\!\left[
\left.
\I(\biA=\bia,\biX^{\mathrm{ad}}=\bix^{\mathrm{ad}})
\;\E_{\biX^{\mathrm{re}}\mid \biA,\biX^{\mathrm{ad}},\biZ}\!\left[H\right]
\;\right|\;\biZ
\right] 
=
\pr(\biA=\bia,\biX^{\mathrm{ad}}=\bix^{\mathrm{ad}}\mid \biZ)\;
\E_{\biX^{\mathrm{re}}\mid \biA=\bia,\biX^{\mathrm{ad}}=\bix^{\mathrm{ad}},\biZ}\!\left[H(\bia,\bix^{\mathrm{ad}},\biZ,\biX^{\mathrm{re}})\right].
\end{align*}
Since $\pr(\biA=\bia,\biX^{\mathrm{ad}}=\bix^{\mathrm{ad}}\mid \biZ)=\pi_{\bia,\bix^{\mathrm{ad}}}(\biZ)$, the factor cancels out, yielding \eqref{eq:ipw_identity}.
\end{proof}

By applying Lemma~\ref{lem:ipw_identity} to
$H(\biA,\biX^{\mathrm{ad}},\biZ,\biX^{\mathrm{re}})
=\pr(\biYhat=\biyhat\mid \biA,\biX^{\mathrm{ad}},\biZ,\biX^{\mathrm{re}})$,
the right-hand side of \eqref{eq:dev_eq5} is rewritten as
\begin{align}
\E\left[\frac{\I(\biA=\bia,\biX^{\mathrm{ad}}=\bix^{\mathrm{ad}})}{\pi_{\bia, \bix^{\mathrm{ad}}}(\biZ)}
\E_{\biX^{\mathrm{re}} \mid \biA, \biX^{\mathrm{ad}}, \biZ}
\left[
\pr(\biYhat = \biyhat \mid \biA,  \biX^{\mathrm{ad}}, \biZ, \biX^{\mathrm{re}})
\right]\right],
\end{align}
which is Eq.~\eqref{eq-IPW}.

\section{Proofs} \label{proofs}

\subsection{Theorem 3.5} \label{proof:possible_parents}

\pptheorem*

\subsubsection{Lemmas}

We first present two lemmas that will be used in the proof of Theorem 3.5.

\begin{lemma}[Marg forbids an unshielded collider]
Assume $\biU-\biA-\biV$ is an unshielded triple (i.e., $\biU$ and $\biV$ are non-adjacent)
and $\IA_{\langle \biU,\biA,\biV\rangle}=\typeMarg$.
Then no cluster DAG consistent with $\cCPDAG$ can contain the collider
$\biU\to \biA \leftarrow \biV$.
\label{lemma1}
\end{lemma}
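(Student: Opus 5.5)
Argue by contradiction. Suppose some cluster DAG $\cDAG$ consistent with $\cCPDAG$, i.e.\ a member of the cluster MEC, contains $\biU\to\biA\leftarrow\biV$ with $\biU,\biV$ non-adjacent and $\IA_{\langle \biU,\biA,\biV\rangle}=\typeMarg$. The goal is to show that the arc annotation forces $\biA$ into every set separating $\biU$ and $\biV$, while the collider orientation makes every set containing $\biA$ fail to separate them.

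\textbf{Step 1 (what $\typeMarg$ gives).} Because $\cCPDAG$ keeps an independence arc only when it is shared by all members of the class (\Cref{def:cluster_cpdag}), the arc $\IA_{\langle \biU,\biA,\biV\rangle}=\typeMarg$ is also the arc of $\cDAG$. By \Cref{def:independence_arc}, there is therefore a separating set $S$ with $\biU\indep\biV\mid S$ and $\biA\in S$.

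\textbf{Step 2 (the collider with $\biA\in S$ is incompatible).} I will show that no variable-level DAG $\vDAG$ realizing $\cDAG$ gives $\biU\indep\biV\mid S$. The edges $\biU\to\biA$ and $\biV\to\biA$ yield variables $U_i\to A_k$ and $V_j\to A_l$ in $\vDAG$. The hard case is $A_k\neq A_l$, where there may be no variable-level collider. The paper itself says (\Cref{fig:ind_arcs}) that such a cluster collider can be $\typeNever$ rather than $\typeCond$. So the pure cluster-level argument only excludes $\typeMarg$, and that is exactly what is needed: under a collider the arc is one of $\typeCond$ or $\typeNever$.

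To make this precise, use the characterization cited in the main text: $\typeMarg$ \emph{always} implies a non-collider structure (\Cref{fig:ind_arcs}, following \citet{anand2025causal}). At the variable level, $\biU\to\biA\leftarrow\biV$ means every edge between $\biU$ and $\biA$ points into $\biA$, and likewise for $\biV$. Any path from $\biU$ to $\biV$ through $\biA$ therefore enters $\biA$ with arrowheads at both ends. Such a path is either a chain of within-cluster edges with a collider somewhere inside $\biA$, or it is blocked.

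The plan is to argue via cluster-level d-separation (\Cref{def:d-sep}) together with faithfulness (Assumption~\ref{asmp:faithful}). A collider triple never satisfies the condition that defines the $\typeMarg$ label, namely "conditioning on $\biA$ is needed." Concretely, take the minimal separating set and drop $\biA$. Every path through $\biA$ is already blocked without conditioning on $\biA$ or its descendants, because the within-$\biA$ segment contains an unconditioned collider. Paths avoiding $\biA$ are unaffected by removing it. So $S\setminus\{\biA\}$ also separates, and the separating set can be chosen without $\biA$.

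\textbf{Step 3 (consistency of the label).} This leads to $\typeCond$ or $\typeNever$, contradicting the class-invariant $\typeMarg$ label.

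\textbf{Main obstacle.} The delicate point is that \Cref{def:independence_arc} fixes \emph{some} separating set, so the label could in principle depend on that choice. I will handle this by arguing that the separation status of $\biA$ is choice-independent under faithfulness. If $\biU\indep\biV\mid S$ with $\biA\in S$, the removal argument in Step 2 gives $\biU\indep\biV\mid S\setminus\{\biA\}$. This requires that no descendant of $\biA$ in $S$ reopens a path; paths reopened through descendants of $\biA$ would be open under $S$ as well. Hence a consistent choice yields $\biA\notin S$, and the arc cannot be $\typeMarg$.
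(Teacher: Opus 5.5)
\textbf{Gap.} Your Step~2 already cites the only ingredient the paper uses. Its proof is just the observation that, by Remark~1 of \citet{anand2025causal}, a $\typeMarg$ arc always implies a non-collider, so $\biU\to\biA\leftarrow\biV$ is excluded. Had you stopped there, you would have the paper's proof.

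The self-contained argument you build on top of it fails at the removal step. A variable-level $\biU$--$\biV$ path need not enter $\biA$ with arrowheads from both sides; it can pass through $\biA$ as a non-collider and exit into a child cluster of $\biA$. Take $U_1\to A_1\to W_1\leftarrow V_1$ and $V_1\to A_2$, with $\biU=\{U_1\}$, $\biA=\{A_1,A_2\}$, $\biV=\{V_1\}$, $\biW=\{W_1\}$. The partition is admissible, $\biU,\biV$ are non-adjacent, and $\biU\to\biA\leftarrow\biV$ is an unshielded cluster collider. The only $U_1$--$V_1$ path is $U_1\to A_1\to W_1\leftarrow V_1$. So $\emptyset$, $\{\biA\}$ and $\{\biA,\biW\}$ separate $\biU$ from $\biV$, but $\{\biW\}$ does not. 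Hence $S=\{\biA,\biW\}$ separates while $S\setminus\{\biA\}$ does not. Your claim that paths reopened through descendants of $\biA$ ``would be open under $S$ as well'' is false: under $S$ that path is blocked precisely by $A_1\in S$. Moreover, $\emptyset$ and $\{\biA\}$ both separate, so whether $\biA$ belongs to the chosen separating set genuinely depends on the choice. With the literal ``fix any one'' wording of \Cref{def:independence_arc}, picking $S=\{\biA\}$ labels this collider $\typeMarg$. No faithfulness argument can make the label choice-independent.

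\textbf{Repair.} You must delete $\biA$ \emph{together with} every non-ancestral cluster, not $\biA$ alone. If $S$ separates $\biU$ and $\biV$, so does the union $S'$ of those clusters of $S$ that are cluster-level ancestors of $\biU$ or $\biV$. Directed variable-level paths project onto directed cluster-level paths, so no variable of $S\setminus S'$ lies in $\mathrm{An}(\biU\cup\biV\cup S')$. A path avoiding $S'$ in the moral graph of that ancestral set would therefore be a path avoiding $S$ in the moral graph of $\mathrm{An}(\biU\cup\biV\cup S)$, a contradiction. Acyclicity together with $\biU\to\biA\leftarrow\biV$ gives $\biA\notin S'$, so no inclusion-minimal separating set contains $\biA$. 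The lemma thus holds once the arc's separating set is taken inclusion-minimal, as a PC-style search returns. The paper sidesteps the issue by deferring to the definitions and Remark~1 of \citet{anand2025causal}.

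Also, do not route this through \Cref{def:d-sep} and Assumption~\ref{asmp:faithful}. The former is defined via the arc labels themselves, so using it to derive a label is circular. The latter runs from independence to separation, which is the wrong direction. The path reasoning has to use variable-level d-separation in $\vDAG$.
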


\begin{proof}
  From \citet[Remark 1]{anand2025causal}, marginally-connecting independence arc  $\typeMarg$ always implies a non-collider structure. Hence, the triplet $\biU-\biA-\biV$ must form
a non-collider structure at the middle node $\biA$, and the collider structure $\biU\to \biA \leftarrow \biV$
is impossible.
\end{proof}

\begin{lemma}[A back-path forces a directed cycle]
Let $\cDAG$ be any cluster DAG that contains $\cDAG_{\rightarrow}$ as a subgraph.
If $\biU\to\biA$ and $\biA\to\biV$ are edges in $\cDAG$ and there exists a directed path
$\biV\to\cdots\to \biU$ in $\cDAG_{\rightarrow}$, then $\cDAG$ contains a directed cycle.
\label{lemma2}
\end{lemma}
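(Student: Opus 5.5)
We prove the statement by concatenating edges. Let $\cDAG$ be a cluster DAG containing $\cDAG_{\rightarrow}$ as a subgraph, and suppose it has edges $\biU\to\biA$ and $\biA\to\biV$. Suppose also that the directed path $\biV\to\cdots\to\biU$ lies in $\cDAG_{\rightarrow}$.

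First, every edge of that path is an edge of $\cDAG$, with the same orientation, because $\cDAG_{\rightarrow}\subseteq\cDAG$. This containment is what the hypothesis gives. It also holds automatically for members of the cluster MEC, since by \Cref{def:cluster_cpdag} the directed edges of $\cCPDAG$ appear in all members.

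Next, we form the closed walk
\[
\biA\to\biV\to\cdots\to\biU\to\biA
\]
in $\cDAG$. It is directed and returns to $\biA$. The only remaining check is that it contains a genuine directed cycle.

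The directed path from $\biV$ to $\biU$ does not pass through $\biA$. For the intended application, $\biU,\biV\in\mathrm{sib}(\biA,\cCPDAG)$, so the edges $\biU-\biA$ and $\biA-\biV$ are undirected in $\cCPDAG$. If the path visited $\biA$, it would use an edge directed into $\biA$ and later an edge directed out of $\biA$, both in $\cCPDAG_{\rightarrow}$. Either way, a simpler argument suffices: any closed directed walk in a graph contains a directed cycle. Take the first repeated vertex along the walk; the segment between its two occurrences is a directed cycle. Hence $\cDAG$ has a directed cycle.

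The main point needing care is degeneracy, for instance $\biU=\biV$ or a zero-length path. If $\biU=\biV$, then $\cDAG$ would contain both $\biU\to\biA$ and $\biA\to\biU$, which is already a 2-cycle. That is impossible anyway, since the skeleton is simple. So the general walk-to-cycle argument handles every case, and the lemma contradicts the acyclicity of $\cDAG$.
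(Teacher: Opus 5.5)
Your proof is correct and follows the same route as the paper: since every edge of $\cDAG_{\rightarrow}$ lies in $\cDAG$, concatenating $\biA\to\biV$, the path $\biV\to\cdots\to\biU$, and $\biU\to\biA$ gives a closed directed walk in $\cDAG$. Your extra observation that any closed directed walk contains a directed cycle covers the cases the paper leaves implicit, namely a path that revisits $\biA$ and the case $\biU=\biV$. With that step in place, your unproved aside that the path avoids $\biA$ is not needed and can be dropped.
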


\begin{proof}
All directed edges in $\cDAG_{\rightarrow}$ appear in $\cDAG$, hence the path
$\biV\to\cdots\to \biU$ exists in $\cDAG$. Together with $\biA\to\biV$ and $\biU\to\biA$,
we obtain the directed cycle $\biA\to\biV\to\cdots\to\biU\to\biA$.
\end{proof}

\subsubsection{Proof}

\begin{proof}[Proof of Theorem]
\emph{($\Rightarrow$)} Assume that $S$ is a possible parent set of $\biA$.
Then there exists a cluster DAG $G$ in the cluster MEC represented by $\cCPDAG$
such that $\biU\to\biA$ for all $\biU\in S$ and $\biA\to\biV$ for all
$\biV\in \mathrm{sib}(\biA,\cCPDAG)\setminus S$.

(1) Take distinct $\biU,\biV\in S$. If they are adjacent, we are done.
If they are non-adjacent, then $\biU\to\biA\leftarrow \biV$ is an unshielded collider in $G$.
By \Cref{lemma1}, we cannot have $\IA_{\langle\biU,\biA,\biV\rangle}=\typeMarg$.
Moreover, $\IA_{\langle\biU,\biA,\biV\rangle}=\typeCond$ would compel the collider
in every DAG in the MEC, implying both edges into $\biA$ are directed in $\cCPDAG$,
contradicting that $\biU,\biV$ are siblings (connected to $\biA$ by undirected edges).
Hence $\IA_{\langle\biU,\biA,\biV\rangle}=\typeNever$.

(2) Take $\biU\in S$ and $\biV\in \mathrm{sib}(\biA,\cCPDAG)\setminus S$.
If there were a directed path $\biV\to\cdots\to\biU$ in $\cCPDAG_{\rightarrow}$,
then \Cref{lemma2} would yield a directed cycle in $G$, contradicting acyclicity.
Thus no such path exists.

\emph{($\Leftarrow$)} Assume Conditions (1)--(2).
Orient each undirected edge incident to $\biA$ as $\biU\to\biA$ for $\biU\in S$
and $\biA\to\biV$ for $\biV\in \mathrm{sib}(\biA,\cCPDAG)\setminus S$.
Condition (1) ensures that any unshielded collider created at $\biA$ is allowed
(i.e., it occurs only when the corresponding independence arc is $\typeNever$),
so no independence-arc constraint at $\biA$ is violated.
Condition (2) rules out a back-path in $\cCPDAG_{\rightarrow}$ that would force a directed cycle
once we impose $\biU\to\biA$ and $\biA\to\biV$; see \Cref{lemma2}.

Since $\cCPDAG$ represents a nonempty cluster MEC and our imposed orientations do not contradict
the fixed constraints on independence-arc and acyclicity, there exists
at least one cluster DAG extension $G$ in the MEC realizing these orientations.
In that $G$, the parents of $\biA$ among its siblings are exactly $S$.
Hence $S$ is a possible parent set of $\biA$.
\end{proof}

\subsection{Theorem 3.6} \label{proof:no_connection_marked_parents}

\nocmptheorem*

\begin{proof}[Proof of Theorem]
Let $\cDAG$ be any cluster DAG in the cluster MEC, and let $\biZ$ be the parent set of $\biA$ in $\cDAG$.
Consider any back-door path from $\biA$ in $\cDAG$.
By definition, this path starts with an arrow into $\biA$; hence its first edge has the form
$\biA \leftarrow \biP$, where $\biP$ is a parent of $\biA$.
Since $\biZ$ is the parent set of $\biA$ in $\cDAG$, we have $\biP \in \biZ$.

Let $\biQ$ be the next cluster on this path.
Then the first triplet on the path is $\langle \biA,\biP,\biQ\rangle$.
Since the edge between $\biP$ and $\biA$ is $\biP\to \biA$, the middle cluster $\biP$ is a non-collider on this path.
By Lemma~1 of \citet{anand2025causal}, a conditionally connecting independence arc implies a collider structure.
Therefore, the independence arc $\IA_{\langle \biA,\biP,\biQ\rangle}$ cannot be $\typeCond$.
Thus, it is either $\typeMarg$ or $\typeNever$.

If $\IA_{\langle \biA,\biP,\biQ\rangle}=\typeMarg$, then the path is blocked at this first triplet by Condition~1(a) of \Cref{def:d-sep}, because $\biP\in\biZ$.
If $\IA_{\langle \biA,\biP,\biQ\rangle}=\typeNever$, then the path is blocked at this first triplet by Condition~3 of \Cref{def:d-sep}, because no cluster in $\biZ$ is annotated by a connection mark in any independence arc by assumption.
Hence every back-door path from $\biA$ is blocked by $\biZ$.
Therefore, $\biZ$ d-separates every back-door path from $\biA$ in $\cDAG$.
\end{proof}

\subsection{Theorem 3.7} \label{proof:connection_marked_parents}

\cmptheorem*

\begin{proof}[Proof of Theorem]
Fix any cluster DAG $\cDAG$ in the cluster MEC that realizes the parent set $\biZ^{\ipp}$.
We show that the returned set $\biZ$ d-separates every back-door path from $\biA$ in $\cDAG$.

Consider any back-door path from $\biA$.
Let $\biP$ be the first cluster after $\biA$ on this path, so that the first edge has the form
$\biA\leftarrow \biP$.
Since $\cDAG$ realizes the parent set $\biZ^{\ipp}$, we have $\biP\in \biZ^{\ipp}\subseteq \biZ$.

If $\biP$ does not appear in any connection mark, then the same argument as in the proof of Theorem~3.6 shows that the path is blocked at its first triplet.
Indeed, for the next cluster $\biQ$ on the path, the triplet $\langle \biA,\biP,\biQ\rangle$ has middle cluster $\biP$ as a non-collider.
By Lemma~1 of \citet{anand2025causal}, a $\typeCond$ arc implies a collider structure, so this arc cannot be $\typeCond$.
If it is $\typeMarg$, Condition~1(a) of \Cref{def:d-sep} blocks the path because $\biP\in\biZ$.
If it is $\typeNever$, Condition~3 of \Cref{def:d-sep} blocks the path because $\biP$ is not connection-marked.
Thus, it remains to consider back-door paths whose first parent is connection-marked.

Algorithm~\ref{alg:add-adj-cluster-set} initializes the queue with the pairs adjacent to such connection-marked clusters in $\biZ$.
Therefore, the first relevant triplet on any remaining back-door path is inspected by the queue-based procedure.
Whenever a $\typeNever$ triplet has a connection mark whose marked cluster is included in the current $\biZ$, the algorithm pushes the next pair into the queue.
Thus, by induction along the path, the algorithm continues to inspect subsequent triplets exactly in the cases where the current triplet does not by itself certify blocking under \Cref{def:d-sep}.

Now consider any triplet $\langle \biP,\biQ,\biR\rangle$ inspected by this procedure.

If $\IA_{\langle \biP,\biQ,\biR\rangle}=\typeMarg$, then Algorithm~\ref{alg:add-adj-cluster-set} adds $\biQ$ to $\biZ$.
Hence the final returned set contains $\biQ$, and the path is blocked at this triplet by Condition~1(a) of \Cref{def:d-sep}.
If a separation mark on this arc applies, Condition~1(b) also certifies blocking.

If $\IA_{\langle \biP,\biQ,\biR\rangle}=\typeCond$, then Condition~\ref{item:cond} of \Cref{def:d-sep} requires that neither $\biQ$ nor its true descendants are included in $\biZ$, unless a separation mark certifies separation.
The set $\textsc{PossDesc}(\cCPDAG,\biQ)$ used in the algorithm conservatively contains $\biQ$ and all possible descendants of $\biQ$ in the cluster MEC.
Therefore, every true descendant of $\biQ$ in $\cDAG$ is contained in $\textsc{PossDesc}(\cCPDAG,\biQ)$.
Since the algorithm returns $(\biZ,\mathrm{OK})$, line~19 and the final certification step do not return $\mathrm{Unidentifiable}$.
Thus, for every inspected $\typeCond$ triplet, either
$\biZ\cap \textsc{PossDesc}(\cCPDAG,\biQ)=\emptyset$,
or a separation mark on the arc certifies separation.
In the former case, neither $\biQ$ nor any true descendant of $\biQ$ is conditioned on, so Conditions~2(a) and 2(b) of \Cref{def:d-sep} hold.
In the latter case, Condition~2(c) of \Cref{def:d-sep} holds.

If $\IA_{\langle \biP,\biQ,\biR\rangle}=\typeNever$, then Condition~3 of \Cref{def:d-sep} blocks the path whenever no cluster appearing in a connection mark on this arc is included in $\biZ$.
If some cluster appearing in such a connection mark is included in $\biZ$, then this triplet alone does not certify blocking, and the algorithm pushes the next pair into the queue.
Thus the queue-based propagation continues until it reaches a triplet certified by one of the blocking clauses above, or otherwise returns $\mathrm{Unidentifiable}$.

Since Algorithm~\ref{alg:add-adj-cluster-set} returns $(\biZ,\mathrm{OK})$, the latter case never occurs.
Hence every back-door path from $\biA$ contains a triplet satisfying one of the blocking clauses in \Cref{def:d-sep}.
Therefore, $\biZ$ d-separates every back-door path from $\biA$ in $\cDAG$.
Because $\cDAG$ was arbitrary among the cluster DAGs in the cluster MEC that realize the parent set $\biZ^{\ipp}$, the claim follows.
\end{proof}

\section{Experimental Settings} \label{sec-exp-setup}

\subsection{Baselines} \label{subsec-baselines}

In our experiments, we compare the performance of our method (\textbf{C-IFair}) with six baselines:
\begin{itemize}[leftmargin=0.5cm]
  \item \textbf{Full}, which trains the prediction model with the full feature set without any fairness constraint.
  \item \textbf{Unaware}, which trains the prediction model using all features except the sensitive features $\biA$.
  \item \textbf{No-DesCs}, which trains the prediction model using all features except definite descendant clusters of sensitive features $\biA$. To choose such clusters, we first apply the CLOC algorithm \citep{anand2025causal} to learn the cluster CPDAG from data, and then identify the definite descendant clusters of $\biA$ as the clusters that can be reached from $\biA$ via directed paths.  
  \item $\epsilon$\textbf{-IFair} \citep{zuo2024interventional}, which aims to reduce the kernel MMD between the interventional distributions that are estimated based on the decomposition formula given by the partial causal ordering (PCO) algorithm \citep{perkovic2020identifying}. We employ the publicly available implementation downloaded from \url{https://github.com/aoqiz/Interventional-Fairness-with-Partially-Known-Causal-Graphs}.
  \item $\ell$\textbf{-IFair} \citep{li2024local}, which estimates a conditional average treatment effect (CATE) of $\biA$ on prediction $\biYhat$ for all observed values of features $\biX$ and then penalizes their maximum to ensure interventional fairness. These CATE estimates are given by multiple adjustment sets, each obtained by the local search over a CPDAG. We use the source codes on \url{https://github.com/haoxuanli-pku/NeurIPS24-Interventional-Fairness-with-PDAGs}.
  \item \textbf{Oracle}, which trains the prediction model using all features except the non-descendant variables of sensitive features $\biA$. This method serves as an oracle baseline, as it chooses such input features based on the true variable-level causal graph, which is unavailable in practice.
\end{itemize}

\subsection{Settings of Each Method} \label{subsec-setup}

\begin{itemize}[leftmargin=0.5cm]
  \item \textbf{Model choice:} As the prediction model for each method, we use a two-layer multi-layer perceptron (MLP) with rectified linear unit (ReLU) activation and $32$ hidden units  for all datasets. For our \textbf{C-IFair} method, we parameterize the propensity score model $\pi_{\bia, \bix^{\mathrm{ad}}}(\biZ) \coloneqq \pr(\biA=\bia, \biX^{\mathrm{ad}} = \bix^{\mathrm{ad}} \mid \biZ)$ using a two-layer MLP with ReLU activation and $64$ hidden units. In all experiments, we obtain IPW weights by applying self-normalization, which divides each IPW weight by its empirical average; in \Cref{subsec-add-sensitivity}, we present sensitivity analyses of our method to weight clipping.
  \item \textbf{Hyperparameters:} 
  \begin{itemize}
  \item \textbf{Training settings for prediction models:}   For all methods, we set the batch size to $256$, the number of training epochs to $1000$,  the learning rate to $\mathrm{lr} = 10^{-3}$. 
  \item  \textbf{Optimization algorithm:} For our \textbf{C-IFair} method, we can use any gradient-based stochastic optimization algorithm. In our experiments, we use the AdamW algorithm \citep{loshchilovdecoupled} in PyTorch.
  \item \textbf{Kernel choice:} For our \textbf{C-IFair} method, we use the Gaussian kernel with multiple bandwidths $c_{\gamma} \gamma$, where $c_{\gamma} \in [0.5, 1.0, 2.0, 4.0, 8.0, 16.0]$, and $\gamma$ is selected using the standard heuristic called the \textit{median heuristic} \citep{gretton2012kernel}. We set the number of RFFs to $d_{\mathrm{RFF}} = 128$.
  \item \textbf{Mellowmax temperature:} Our \textbf{C-IFair} method approximately takes the maximum of the MMDs over adjustment candidate sets using the Mellowmax function \citep{asadi2017alternative}.
    We fix its temperature parameter $\omega$ to $\omega = 10.0$ for all experiments. We provide sensitivity analyses of our method to the choice of $\omega$ in \Cref{subsec-add-sensitivity}.
  \item \textbf{Unfairness penalty: } 
  The unfairness penalty parameter cannot be trained on the training data, as it determines the desired trade-off between accuracy and fairness. 
  For our \textbf{C-IFair}, we select the value of parameter $\lambda$ from $\{0., 2.0, \dots, 20.0\}$ based on the validation set;
  in the accuracy-fairness trade-off curve (\Cref{fig:tradeoff}), we vary $\lambda$ in $[0, 1, 2, 5, 10, 20, 50, 100, 200]$.
  For $\epsilon$\textbf{-IFair} and $\ell$\textbf{-IFair}, we choose their penalty parameter values by following the suggestions by \citet{zuo2024interventional} and \citet{li2024local}.
  Regarding $\epsilon$\textbf{-IFair}, we used the search space commonly used in their paper, which is $\{0, 0.5, 5, 20, 60, 100\}$. For $\ell$\textbf{-IFair}, we used the default value in their official implementation.
  \end{itemize}
  \item \textbf{CPDAG inference:} To infer a cluster CPDAG for our \textbf{C-IFair} method, we apply the CLOC algorithm \citep{anand2025causal} by employing the Fisher-z's test in the Python package \texttt{causal-learn} \citep{zheng2024causal} with significance level $\alpha = 0.01$. 
  To learn a variable-level CPDAG for $\ell$\textbf{-IFair}, we run the Peter-Clark (PC) algorithm \citep{spirtes2000causation} implemented in \texttt{causal-learn} \citep{zheng2024causal}
  using the Fisher-z' test with $\alpha = 0.01$. 
  Regarding $\epsilon$\textbf{-IFair}, following the official implementation with default settings, we run the greedy equivalence search (GES) algorithm \citep{chickering2002learning} implemented in a general causal discovery software named TETRAD \citep{ramsey2018tetrad}.
\end{itemize}

\subsection{Synthetic Data Generation Processes} \label{subsec-synthdata}

This section describes the generation processes of the synthetic datasets used in \Cref{subsec:synthetic_experiments}.

\subsubsection{Causal Graph Generation and Cluster Partition} \label{subsubsec-causal-graph}

We randomly generate the ground truth variable-level causal DAGs $\vDAG$ by sampling the ER model with expected node degree $2$.
To ensure acyclicity, we first sample a random topological ordering of variables and generate directed edges only along this ordering.

To obtain a cluster DAG $\cDAG$, we follow its definition (\Cref{subsec:cluster_causal_graphs}): we draw an edge $\biC_i \to \biC_j$ in the cluster DAG $\cDAG$ if there exists at least one edge $V_i \to V_j$ in the variable-level DAG $\vDAG$ such that $V_i \in \biC_i$ and $V_j \in \biC_j$.
To construct a partition of clusters $\cNode = \{\biC_1, \dots, \biC_d\}$, we assign $3$ variable nodes in $\vDAG$ to each cluster.

We randomly choose the sensitive cluster $\biA$ from the clusters with degree greater than or equal to $2$.
In our experiments with admissible features (\Cref{subsec-add-admissible}), we randomly choose admissible features $\biX^{\mathrm{ad}}$ from the children of the sensitive cluster $\biA$ in the cluster DAG $\cDAG$.
If the initially sampled graph has no such child cluster, we add a valid variable-level edge from $\biA$ to a non-outcome cluster while preserving the variable-level topological ordering.

We treat $\biA$ and $\biX^{\mathrm{ad}}$ as binary clusters and $\biY$ as a continuous cluster.
For the remaining clusters, we randomly choose binary and continuous clusters.

\subsubsection{Linear Datasets} \label{subsubsec-lin-data}

Using the random causal graphs described in \Cref{subsubsec-causal-graph}, we generate linear datasets as follows.

To generate the values of features $\biX$, 
we sample them using a linear SCM associated with the variable-level DAG $\vDAG$.
For each continuous variable $X_v$, we sample its value from the following linear structural equation:
\[
X_{v} \;=\; \sum_{u\in \mathrm{pa}(v)} w_{v,u}\, X_{u} \;+\; \varepsilon_{v},
\]
where the noise is sampled from the standard Gaussian distribution $\varepsilon_{v}\sim\mathcal{N}(0,1)$, and
each linear coefficient $w_{v, u}$ are given by $w_{v,u} = s_{v,u}\, m_{v,u}$, where 
\[
s_{v,u}\sim\mathrm{Unif}\{-1,+1\},
\qquad
m_{v,u}\sim\mathrm{Unif}(0.5,\,2.0).
\]
For each binary variable $X_v$, we sample its value from the following structural equation:
\[
X_{v} \sim \mathrm{Bernoulli}\!\left(\sigma\!\left(\sum_{u\in \mathrm{pa}(v)} w_{v,u}\, X_{u}\right)\right),
\qquad
\sigma(t)=\frac{1}{1+e^{-t}}.
\]

To generate the values of target variable $\biY$, we use the following linear structural equation:
\[
Y_v \;=\; \sum_{u \in \mathrm{pa}(v)} w_{v, u} X_u\;+\; \varepsilon_{v},
\]
where the noise is sampled from the standard Gaussian distribution $\varepsilon_{v}\sim\mathcal{N}(0,1)$, 
the parent set $\mathrm{pa}(v)$ is forced to include all feature clusters, 
and each linear coefficient $w_{v, u}$ is given by $w_{v,u} = s_{v,u}\, m_{v,u}$ when $u$ is not sensitive cluster $\biA$; otherwise, we set $w_{v,u} = 5 s_{v,u}\, m_{v,u}$ to introduce unfair bias.

\subsubsection{Nonlinear Datasets} \label{subsubsec-nonlin-data}

We generate nonlinear datasets in the same way as linear dataset generation processes described in \Cref{subsubsec-lin-data}, except that we apply nonlinear function $\xi(\cdot)$ for all variables that have parents in the variable-level DAG $\vDAG$.

Instead of the linear function $\sum_{u \in \mathrm{pa}(v)} w_{v, u} X_u$, we use the following nonlinear function:
\[
\xi\!\left(\sum_{u \in \mathrm{pa}(v)} w_{v, u} X_u\right),
\]
where nonlinear function $\xi(t)$ is randomly chosen from $\xi(t) = \mathrm{sin}(t)$, $\xi(t) = \mathrm{cos}(t)$, and $\xi(t) = \mathrm{tanh}(t)$.

In our additional experiments for the cluster-size sensitivity analysis (\Cref{subsec-add-sensitivity-cluster-size}), we also consider unbounded nonlinear functions. Specifically, we choose $\xi(t)$ from 
\[ \xi(t)=t+0.05t|t|,\qquad \xi(t)=t+0.01t^3,\qquad \xi(t)=t\sigma(t), \] 
where $\sigma(t)=1/(1+\exp(-t))$ is the sigmoid function. These functions correspond to a signed quadratic function, a mild cubic function, and the sigmoid linear unit (SiLU), respectively.

	\begin{figure}[t]
		\includegraphics[height=5.8cm]{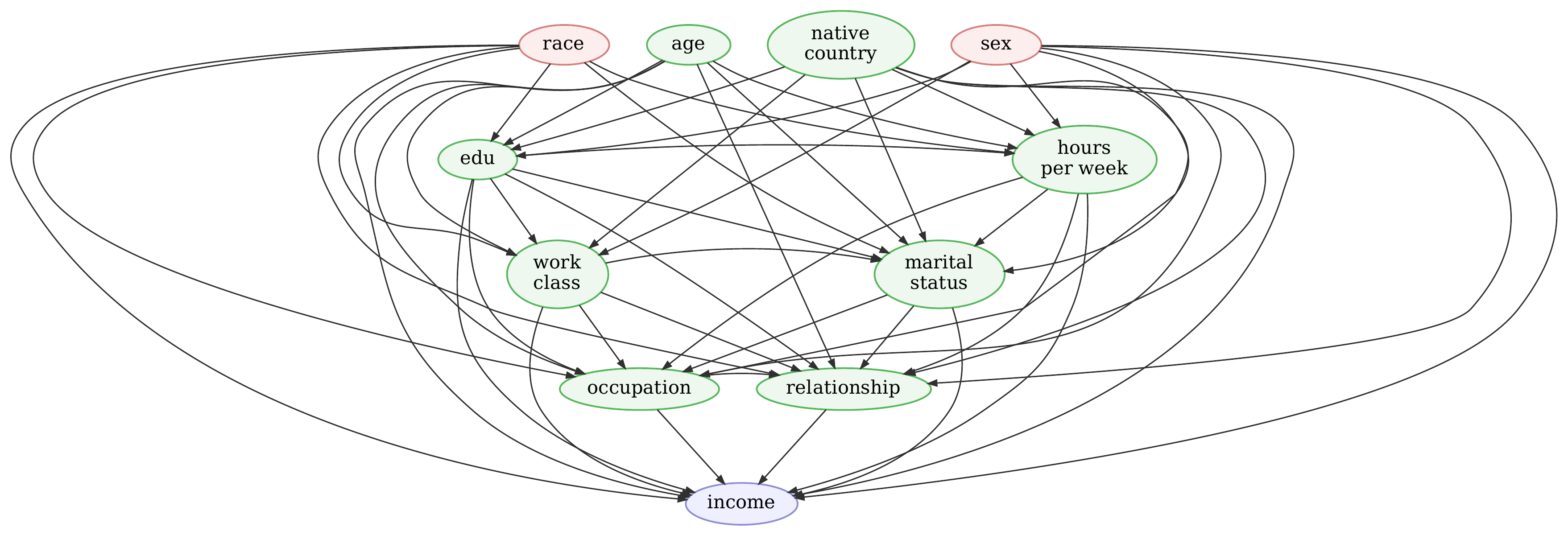}
		\centering 
			\caption{The variable-level causal graph for the Adult dataset \citep{zhang2017achieving}}
		\label{fig:adult-graph}
	\end{figure}

  	\begin{figure}[t]
		\includegraphics[height=4cm]{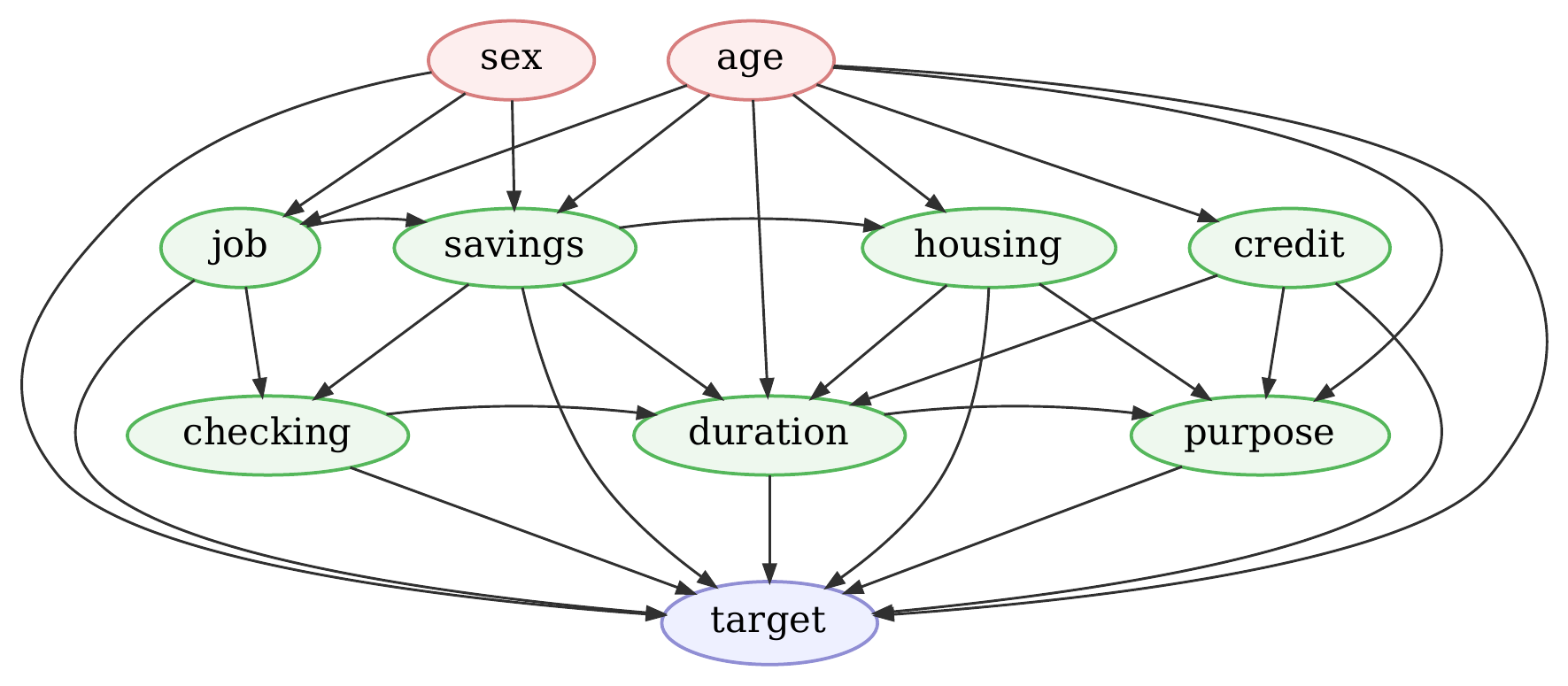}
		\centering 
			\caption{The variable-level causal graph for the German dataset \citep{watson2021local}}
		\label{fig:german-graph}
	\end{figure}

    	\begin{figure}[t]
		\includegraphics[height=4cm]{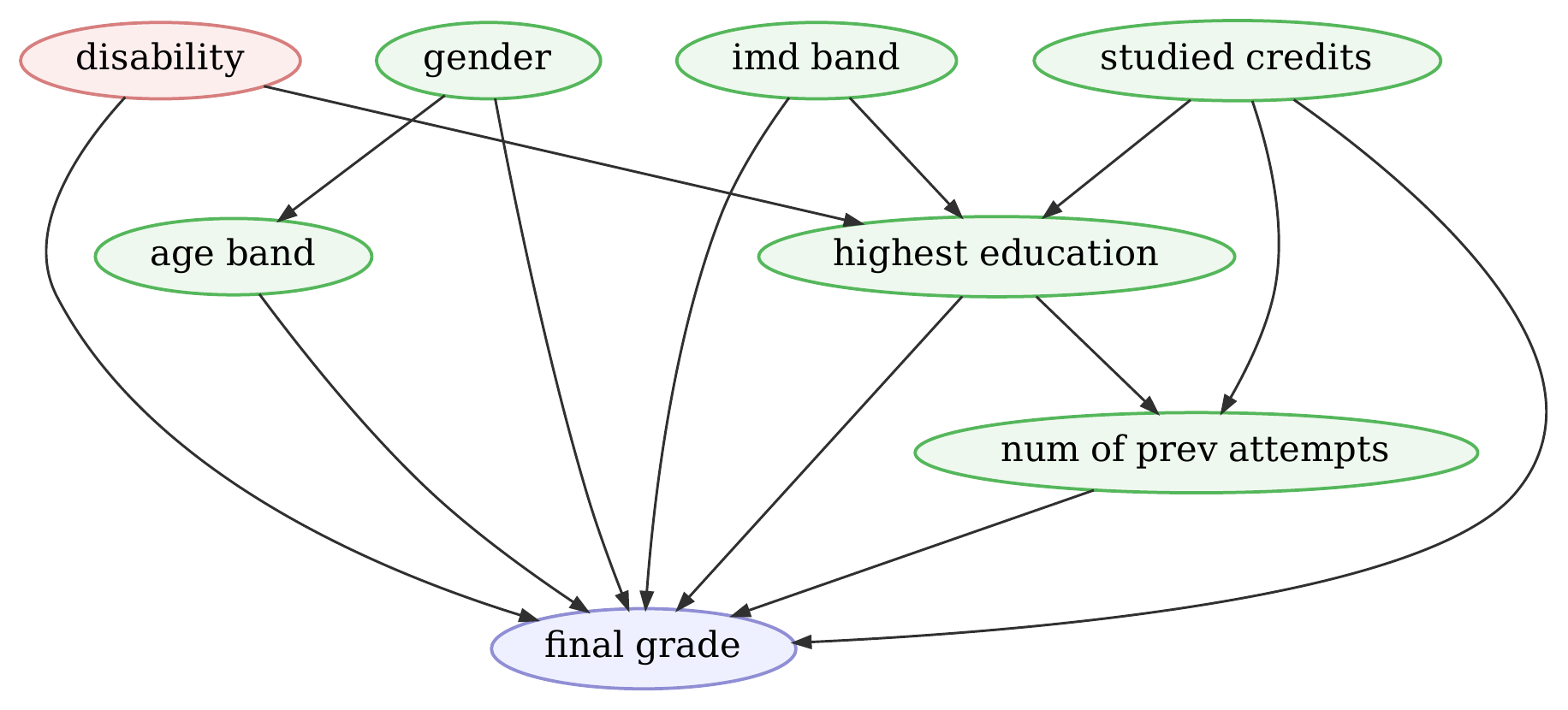}
		\centering 
			\caption{The variable-level causal graph for the OULAD dataset \citep{kim2025counterfactual}}
		\label{fig:oulad-graph}
	\end{figure}

\subsection{Real-World Data} \label{subsec-realdata}

We use three standard benchmarks: Adult, German credit, and the Open University Learning Analytics Dataset (OULAD), downloaded from the University of California, Irvine (UCI) repository \citep{uci}.

\paragraph{Adult Dataset.} The Adult dataset contains $48,842$ instances with $14$ attributes, where the task is to predict whether an individual's income exceeds \$50,000 per year. We treat gender $A_1$ and race $A_2$ as sensitive features $\biA = [A_1, A_2]$,
where $A_1 = 0$ and $A_1 = 1$ represent female and male, respectively, and $A_2 = 0$ and $A_2 = 1$ represent not White (i.e., all other race categories grouped) and White, respectively.
We manually construct a cluster partition based on the domain knowledge of the Adult dataset. Specifically, we group related features into clusters by grouping the attributes of family background (i.e., \texttt{marital-status} and \texttt{relationship}) as one cluster, and the attributes of demographic information (i.e., \texttt{education} and \texttt{native-country}) as another cluster. Other features are treated as singleton-node clusters.

\paragraph{German Credit Dataset.} The German credit dataset contains $1,000$ instances with $20$ attributes, where the task is to classify people described by a set of attributes as good or bad credit risks. We treat gender and age as sensitive features $\biA$.
We group the attributes of financial information (i.e., \texttt{credit amount} and \texttt{duration}) as one cluster. Other features are treated as singleton-node clusters.

\paragraph{OULAD Dataset.} The OULAD dataset contains the records about $32,593$ students with $7$ attributes, where the task is to predict whether a student will pass a course. We treat disability as the sensitive feature $\biA$.
Here we treat all features as singleton-node clusters, as the OULAD dataset contains only a small number of features.

\paragraph{Causal Graphs.} 
Since the ground-truth causal graph structures are \textbf{unknown} for these real-world datasets, we use the causal graphs provided by previous works that have been constructed based on domain knowledge or data-driven causal discovery methods.
For the Adult, German credit, and OULAD datasets,
we use the causal graphs illustrated in \Cref{fig:adult-graph,fig:german-graph,fig:oulad-graph}, which are provided by \citet{zhang2017achieving}, \citet{watson2021local}, and \citet{kim2025counterfactual}, respectively. 
Based on these causal graphs, we identify the back-door adjustment sets to use the IPW-based estimator for the kernel MMD evaluation.

\subsection{Performance Metrics} \label{subsec-metrics}

To evaluate the performance of each method, we use the following metrics:
\begin{itemize}[leftmargin=0.5cm]
  \item Root mean squared error (\textbf{RMSE}), which measures the error between predicted and true target values.
  \item Area under the (receiver operating characteristic) curve (\textbf{AUC}), which measures the classification performance. By focusing on the ranking of predictions rather than absolute accuracy, it remains robust even with imbalanced datasets.
  \item \textbf{Unfairness}, which measures the interventional unfairness based on the kernel MMD between interventional distributions of prediction $\biYhat$ under different sensitive feature values. In our synthetic data experiments, we estimate the empirical kernel MMDs by sampling the values of input features $\biX$ under interventions from the true interventional distributions obtained with the ground-truth SCM parameters. In real-world data experiments, we estimate the kernel MMDs using IPW (\Cref{subsubsec:penalty_formulation}) with valid adjustment set $\biZ$ identified from the provided causal graph structures (\Cref{subsec-realdata}). 
\end{itemize}

\subsection{Computing Infrastructure} \label{sec-computing}
	
In our runtime computation experiments (\Cref{subsec-add-graph-inference}), we used a 64-bit Ubuntu machine with 2.30GHz Xeon Gold 5218 32-core (x2) CPUs, NVIDIA TITAN RTX (x4) GPUs, and 256-GB RAM.

\section{Additional Experiments} \label{sec-add-experiments}

This section presents additional experimental results for further validating the effectiveness of our \textbf{C-IFair} method.

\subsection{Experiments with Admissible Features} \label{subsec-add-admissible}

\begin{table*}[t]
  \centering
  \caption{RMSE and unfairness for linear (top) and nonlinear (bottom) datasets with admissible features on held-out test set.}
  \label{table:synth_rmse_unfairness_admissible}
  \scshape
  \scalebox{0.92}{%
  \begin{tabular}{lcccccc}
    \toprule
    & \multicolumn{2}{c}{$d$=5 ($d_v$=15)} & \multicolumn{2}{c}{$d$=10 ($d_v$=30)} & \multicolumn{2}{c}{$d$=15 ($d_v$=45)} \\
    Linear & \textsc{RMSE}$\downarrow$ & \textsc{Unfairness}$\downarrow$
    & \textsc{RMSE}$\downarrow$ & \textsc{Unfairness}$\downarrow$
    & \textsc{RMSE}$\downarrow$ & \textsc{Unfairness}$\downarrow$ \\
    \midrule
    \textsc{Oracle} 
      & $0.985 \pm 0.056$ & $0.000 \pm 0.000$
      & $0.795 \pm 0.090$ & $0.000 \pm 0.000$
      & $0.879 \pm 0.093$ & $0.000 \pm 0.000$ \\
    \midrule
    \textsc{Full} 
      & $0.559 \pm 0.090$ & $0.244 \pm 0.112$
      & $0.553 \pm 0.148$ & $0.065 \pm 0.022$
      & $0.657 \pm 0.430$ & $0.071 \pm 0.054$ \\
    \midrule
    \textsc{Unaware} 
      & $0.805 \pm 0.104$ & $0.083 \pm 0.062$
      & $0.773 \pm 0.211$ & $0.035 \pm 0.020$
      & $0.917 \pm 0.141$ & $0.034 \pm 0.023$ \\
    \textsc{No-DesCs} 
      & $0.797 \pm 0.112$ & $0.085 \pm 0.065$
      & $0.591 \pm 0.105$ & $0.038 \pm 0.008$
      & $0.759 \pm 0.191$ & $0.064 \pm 0.056$ \\
    $\epsilon$-\textsc{IFair} 
      & $0.786 \pm 0.043$ & $0.099 \pm 0.082$
      & $0.600 \pm 0.064$ & $0.026 \pm 0.011$
      & $0.750 \pm 0.103$ & $0.039 \pm 0.003$ \\
    \textbf{\textsc{C-IFair}} 
      & $0.777 \pm 0.061$ & $0.058 \pm 0.060$
      & $0.581 \pm 0.087$ & $0.015 \pm 0.009$
      & $0.728 \pm 0.076$ & $0.024 \pm 0.026$ \\
    \bottomrule
  \end{tabular}%
  } 

  \scalebox{0.92}{%
  \begin{tabular}{lcccccc}
    \toprule
    & \multicolumn{2}{c}{$d$=5 ($d_v$=15)} & \multicolumn{2}{c}{$d$=10 ($d_v$=30)} & \multicolumn{2}{c}{$d$=15 ($d_v$=45)} \\
    Nonlinear & \textsc{RMSE}$\downarrow$ & \textsc{Unfairness}$\downarrow$
    & \textsc{RMSE}$\downarrow$ & \textsc{Unfairness}$\downarrow$
    & \textsc{RMSE}$\downarrow$ & \textsc{Unfairness}$\downarrow$ \\
    \midrule
    \textsc{Oracle} 
      & $1.099 \pm 0.056$ & $0.000 \pm 0.000$
      & $1.083 \pm 0.086$ & $0.000 \pm 0.000$
      & $1.071 \pm 0.105$ & $0.000 \pm 0.000$ \\
    \midrule
    \textsc{Full} 
      & $0.953 \pm 0.144$ & $0.089 \pm 0.062$
      & $0.973 \pm 0.124$ & $0.062 \pm 0.010$
      & $0.989 \pm 0.226$ & $0.055 \pm 0.006$ \\
    \midrule
    \textsc{Unaware} 
      & $0.994 \pm 0.064$ & $0.074 \pm 0.057$
      & $0.959 \pm 0.086$ & $0.032 \pm 0.010$
      & $1.065 \pm 0.129$ & $0.025 \pm 0.010$ \\
    \textsc{No-DesCs} 
      & $0.991 \pm 0.063$ & $0.061 \pm 0.047$
      & $0.948 \pm 0.085$ & $0.023 \pm 0.011$
      & $1.071 \pm 0.170$ & $0.027 \pm 0.017$ \\
    $\epsilon$-\textsc{IFair} 
      & $0.996 \pm 0.070$ & $0.097 \pm 0.019$
      & $0.960 \pm 0.078$ & $0.025 \pm 0.004$
      & $1.019 \pm 0.046$ & $0.021 \pm 0.009$ \\
    \textbf{\textsc{C-IFair}} 
      & $0.989 \pm 0.062$ & $0.046 \pm 0.081$
      & $0.947 \pm 0.104$ & $0.021 \pm 0.002$
      & $1.003 \pm 0.075$ & $0.019 \pm 0.007$ \\
    \bottomrule
  \end{tabular}%
  } 
\end{table*}

Admissible features are the features through which the effect of sensitive features $\biA$ is \textbf{not} treated as unfair, due to the requirement of the application domain (see the motivating example scenario illustrated in \Cref{subsec:causal_notions}).
Although the use of admissible features can be beneficial for improving the prediction performance, it becomes more challenging to achieve interventional fairness, as the presence of admissible features increases the number of interventional distribution pairs.

Below we evaluate the performance of \textbf{C-IFair} method in the presence of such admissible features. 

\paragraph{Data.} We use linear and nonlinear synthetic datasets generated in the same way as described in \Cref{subsec-synthdata}, except that we randomly choose admissible feature cluster $\biX^{\mathrm{ad}}$ of 3 binary variables from the children of the sensitive cluster $\biA$ in the cluster DAG $\cDAG$.

\paragraph{Baselines.} In this experiment, we do not compare \textbf{C-IFair} method with $\ell$\textbf{-IFair}, as its publicly available implementation does not support the setting with admissible features. For the remaining five baselines presented in \Cref{subsec-baselines}, we use the same settings as described in \Cref{subsec-setup}.

\paragraph{Results.} \Cref{table:synth_rmse_unfairness_admissible} presents the mean and standard deviation of RMSE and unfairness over 20 synthetic datasets with admissible features. 

Despite the increase in the number of interventional distribution pairs, our \textbf{C-IFair} again achieves the best performance in terms of both RMSE and unfairness across all settings, underscoring the effectiveness of our unfairness penalty formulation. 
Notably, compared with $\epsilon$\textbf{-IFair},
which aims to reduce the kernel MMD estimated by approximately sampling interventional datasets for each value pair of intervened variables $\biA$ and $\biX^{\mathrm{ad}}$, 
our \textbf{C-IFair} method consistently yields fairer predictions (particularly on linear datasets), 
highlighting the advantage of our barycenter kernel MMD formulation in handling the increased number of interventional distribution pairs.

\subsection{Experiments under Dense Cluster Graphs} \label{subsec-add-dense}

\begin{table*}[t]
  \centering
  \caption{RMSE and unfairness on synthetic datasets generated using dense causal graphs}
  \label{table:dense_synth_rmse_unfairness}
  \scshape

  \scalebox{0.92}{%
  \begin{tabular}{lcccc}
    \toprule
    & \multicolumn{2}{c}{$d$=10} & \multicolumn{2}{c}{$d$=15} \\
    \textsc{Linear}
    & \textsc{RMSE}$\downarrow$ & \textsc{Unfairness}$\downarrow$
    & \textsc{RMSE}$\downarrow$ & \textsc{Unfairness}$\downarrow$ \\
    \midrule
    \textsc{Oracle} 
      & $0.707 \pm 0.147$ & $0.000 \pm 0.000$
      & $0.727 \pm 0.173$ & $0.000 \pm 0.000$ \\
    \midrule
    \textsc{Full} 
      & $0.470 \pm 0.127$ & $0.099 \pm 0.079$
      & $0.492 \pm 0.113$ & $0.061 \pm 0.045$ \\
    \midrule
    \textsc{Unaware} 
      & $0.730 \pm 0.222$ & $0.048 \pm 0.029$
      & $0.928 \pm 0.066$ & $0.023 \pm 0.049$ \\
    \textsc{No-DesCs} 
      & $0.571 \pm 0.114$ & $0.047 \pm 0.055$
      & $0.504 \pm 0.096$ & $0.026 \pm 0.013$ \\
    $\epsilon$-\textsc{IFair} 
      & $0.689 \pm 0.099$ & $0.124 \pm 0.060$
      & $0.503 \pm 0.073$ & $0.302 \pm 0.109$ \\
    $\ell$-\textsc{IFair} 
      & $0.669 \pm 0.295$ & $0.077 \pm 0.053$
      & $0.767 \pm 0.241$ & $0.044 \pm 0.033$ \\
    \textbf{\textsc{C-IFair}} 
      & $0.550 \pm 0.074$ & $0.040 \pm 0.061$
      & $0.498 \pm 0.121$ & $0.020 \pm 0.011$ \\
    \bottomrule
  \end{tabular}%
  }

  \scalebox{0.92}{%
  \begin{tabular}{lcccc}
    \toprule
    & \multicolumn{2}{c}{$d$=10} & \multicolumn{2}{c}{$d$=15} \\
    \textsc{Nonlinear}
    & \textsc{RMSE}$\downarrow$ & \textsc{Unfairness}$\downarrow$
    & \textsc{RMSE}$\downarrow$ & \textsc{Unfairness}$\downarrow$ \\
    \midrule
    \textsc{Oracle} 
      & $0.976 \pm 0.045$ & $0.000 \pm 0.000$
      & $0.978 \pm 0.041$ & $0.000 \pm 0.000$ \\
    \midrule
    \textsc{Full} 
      & $0.807 \pm 0.051$ & $0.022 \pm 0.010$
      & $0.870 \pm 0.054$ & $0.019 \pm 0.006$ \\
    \midrule
    \textsc{Unaware} 
      & $0.988 \pm 0.059$ & $0.021 \pm 0.024$
      & $0.996 \pm 0.031$ & $0.005 \pm 0.005$ \\
    \textsc{No-DesCs} 
      & $0.990 \pm 0.052$ & $0.016 \pm 0.015$
      & $0.985 \pm 0.057$ & $0.006 \pm 0.002$ \\
    $\epsilon$-\textsc{IFair} 
      & $0.999 \pm 0.019$ & $0.024 \pm 0.005$
      & $0.916 \pm 0.051$ & $0.077 \pm 0.046$ \\
    $\ell$-\textsc{IFair} 
      & $0.989 \pm 0.056$ & $0.014 \pm 0.001$
      & $1.082 \pm 0.002$ & $0.006 \pm 0.000$ \\
    \textbf{\textsc{C-IFair}} 
      & $0.985 \pm 0.027$ & $0.011 \pm 0.002$
      & $0.976 \pm 0.023$ & $0.005 \pm 0.002$ \\
    \bottomrule
  \end{tabular}%
  }
\end{table*}
	\begin{figure}[t]
		\includegraphics[height=4cm]{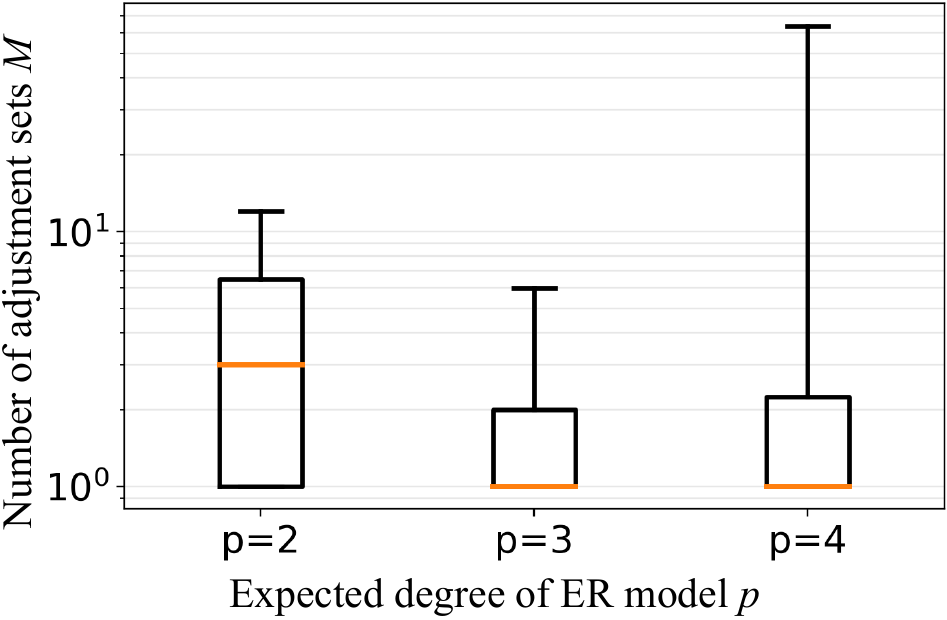}
		\centering 
			\caption{Log-scale box plot of number of adjustment sets $\npp$ for expected node degree $p$ on linear datasets with $d=10$ clusters. Whiskers, boxes, and orange lines denote minimum and maximum,  interquartile range, and median, respectively.} 
		\label{fig:box-plot}
	\end{figure}

To measure unfairness, our \textbf{C-IFair} method infers interventional distributions using adjustment cluster sets identified from the cluster CPDAG (\Cref{subsec:adjustment_sets}). Since these sets rely on the number of possible parent sets for sensitive features $\biA$ (i.e., $S^1, \dots, S^{\npp}$), their number $\npp$ increases when the number of undirected edges incident to $\biA$ grows, which can make it more difficult to achieve interventional fairness. 

To test our \textbf{C-IFair} under such cases, we conduct experiments using synthetic datasets generated from dense graphs.

\paragraph{Data.} We use synthetic datasets generated in the same way as described in \Cref{subsec-synthdata}, except that we generate the underlying variable-level DAGs $\vDAG$ using the ER model with expected node degree $p = 4$ to make the cluster graphs denser.
We set the number of clusters to $d=10, 15$, as setting $d=5$ substantially reduces the randomness of the ER model.  

\paragraph{Graph Density and Number of Adjustment Sets.}
Before making performance comparisons, we analyze the relationship between the density of cluster graphs and the number of adjustment sets $\npp$.

\Cref{fig:box-plot} presents the log-scale box plot of the number of adjustment sets $\npp$ for expected node degree $p = 2, 3$, and $4$ in the ER model on the linear synthetic datasets with $d = 10$ clusters.

Surprisingly, on average, the number of adjustment sets $\npp$ decreases as the expected node degree $p$ increases. 
This is mainly because the number of adjustment sets $\npp$ is determined 
not necessarily by the overall graph density, but rather 
by the number of \textbf{undirected} edges incident to sensitive features $\biA$. 

In particular, when we perform graph refinement (\Cref{subsubsec:additional_clusters}), the number of undirected edges in the cluster CPDAG can sharply decrease,
implying that cluster refinement can also help disambiguate the causal relationships around sensitive features $\biA$ and thus reduce the number of adjustment sets $\npp$.
This observation is consistent with 
the example of variable-level and cluster CPDAGs in \Cref{fig:toy_var_graph}~(c) and (e), where the cluster CPDAG has 2 undirected edges incident to $\biA$, whereas the variable-level CPDAG has only 1 undirected edge incident to $\biA$, despite the increase in the graph size.

However, on some datasets with $p=4$, graph refinement does not occur, and the number of adjustment sets $\npp$ can be as large as $\npp = 64$. Below we compare the performance of each method under $p=4$.

\paragraph{Results.} \Cref{table:dense_synth_rmse_unfairness} presents the mean and standard deviation of RMSE and unfairness over 20 synthetic datasets.

Our \textbf{C-IFair} method again achieves the best performance in terms of both RMSE and unfairness across all settings, 
demonstrating its effectiveness even under dense cluster graphs.
Although $\ell$\textbf{-IFair} also performs parent-set-based adjustment using a variable-level CPDAG,
it produces unfairer predictions than our \textbf{C-IFair} method, particularly on linear datasets, 
which may be attributed to the fact that the variable-level CPDAGs inferred by the PC algorithm are less accurate than the cluster CPDAGs inferred by the CLOC algorithm, as we will describe in \Cref{subsec-add-graph-inference}.

Notably, $\epsilon$-\textbf{IFair} yields the worst performance in terms of unfairness in all settings, 
due to the significant violation of its assumption for adjustment, 
which requires that no features in $\biX$ are connected to sensitive features $\biA$ via undirected edges in the variable-level CPDAG.
By contrast, our \textbf{C-IFair} method does not require such a demanding assumption,
and thus can strike the best balance between prediction accuracy and fairness, even under dense cluster graphs.

\begin{table*}[t]
  \centering
  \caption{RMSE and unfairness on linear synthetic datasets: For our \textbf{C-IFair} (and \textbf{No-DesCs}), we use a cluster CPDAG inferred with an inadmissible partition.}
  \label{table:linear_inadmissible_partition}
  \scshape

  \scalebox{0.92}{%
  \begin{tabular}{lcccccc}
    \toprule
    & \multicolumn{2}{c}{$d$=5} & \multicolumn{2}{c}{$d$=10} & \multicolumn{2}{c}{$d$=15} \\
    \textsc{Method}
    & \textsc{RMSE}$\downarrow$ & \textsc{Unfairness}$\downarrow$
    & \textsc{RMSE}$\downarrow$ & \textsc{Unfairness}$\downarrow$
    & \textsc{RMSE}$\downarrow$ & \textsc{Unfairness}$\downarrow$ \\
    \midrule
    \textsc{Oracle} 
      & $0.949 \pm 0.096$ & $0.000 \pm 0.000$
      & $0.758 \pm 0.115$ & $0.000 \pm 0.000$
      & $0.702 \pm 0.123$ & $0.000 \pm 0.000$ \\
    \midrule
    \textsc{Full} 
      & $0.484 \pm 0.203$ & $0.239 \pm 0.218$
      & $0.522 \pm 0.174$ & $0.100 \pm 0.073$
      & $0.617 \pm 0.207$ & $0.073 \pm 0.062$ \\
    \midrule
    \textsc{Unaware} 
      & $0.709 \pm 0.191$ & $0.071 \pm 0.061$
      & $0.753 \pm 0.156$ & $0.034 \pm 0.051$
      & $0.782 \pm 0.121$ & $0.035 \pm 0.070$ \\
    \textsc{No-DesCs} 
      & $0.694 \pm 0.188$ & $0.069 \pm 0.050$
      & $0.615 \pm 0.135$ & $0.042 \pm 0.039$
      & $0.631 \pm 0.142$ & $0.031 \pm 0.028$ \\
    $\epsilon$-\textsc{IFair} 
      & $0.718 \pm 0.124$ & $0.172 \pm 0.004$
      & $0.654 \pm 0.152$ & $0.097 \pm 0.015$
      & $0.561 \pm 0.093$ & $0.091 \pm 0.016$ \\
    $\ell$-\textsc{IFair} 
      & $0.719 \pm 0.332$ & $0.064 \pm 0.006$
      & $0.703 \pm 0.004$ & $0.127 \pm 0.104$
      & $0.624 \pm 0.030$ & $0.086 \pm 0.027$ \\
    \textsc{C-IFair} 
      & $0.705 \pm 0.189$ & $0.072 \pm 0.072$
      & $0.622 \pm 0.117$ & $0.041 \pm 0.038$
      & $0.637 \pm 0.177$ & $0.027 \pm 0.028$ \\
    \bottomrule
  \end{tabular}%
  }
\end{table*}

\subsection{Empirical Robustness to Assumption Violation} \label{subsec-add-violate}

Our \textbf{C-IFair} method relies on the cluster CPDAG inferred by the CLOC algorithm \citep{anand2025causal}, which requires the three assumptions described in \Cref{subsubsec:overview}. 
Compared with the assumptions required by $\epsilon$\textbf{-IFair} and $\ell$\textbf{-IFair}, Assumption~\ref{asmp:cpdag} is an additional assumption that is not needed by these methods.

This assumption requires that the cluster partition used for inferring the cluster CPDAG is admissible, 
which means that partitioning the nodes in the variable-level causal DAG does not yield any directed cycle in the cluster graph.
Since the partition is given by a user in practical applications,
this assumption can be violated when the domain knowledge of the relationships among features is limited.
Theoretically, CLOC cannot guarantee the identifiability of the cluster CPDAG in such cases, and thus the validity of the adjustment sets identified from the cluster CPDAG is not guaranteed.

Below we empirically evaluate the robustness of our \textbf{C-IFair} to the violation of Assumption~\ref{asmp:cpdag}, using linear synthetic datasets.

\textbf{Data and Causal Graphs.} We generate linear synthetic datasets in the same way as described in \Cref{subsec-synthdata}.
Based on the underlying variable-level DAGs $\vDAG$ generated from the ER model with expected node degree $2$,
we force the cluster partition to be \textit{inadmissible} by taking two steps:
\begin{itemize}
  \item We seek two node pairs $(U_i, V_i)$ and $(U_j, V_j)$ from the variable-level DAG $\vDAG$ such that $U_i \to U_j$ and $V_i \leftarrow V_j$.
  \item Then we force the cluster partition to assign $(U_i, V_i)$ to one cluster $\biC_i$ and $(U_j, V_j)$ to another cluster $\biC_j$, which yields a directed cycle (i.e., $\biC_i \to \biC_j$ and $\biC_i \leftarrow \biC_j$) in the ground-truth cluster graph.
\end{itemize}
Using such partition $\cNode = \{\biC_1, \dots, \biC_d\}$, we infer a cluster CPDAG using the CLOC algorithm, and then apply our \textbf{C-IFair} method (and \textbf{No-DesCs}) using the inferred cluster CPDAG.

To focus on the violation of Assumption~\ref{asmp:cpdag}, we do \textbf{not} use nonlinear synthetic datasets, 
as the complex nonlinearity affects the accuracy of conditional independence tests used for inferring the cluster CPDAG, 
which can make it difficult to isolate the effect of the violation of Assumption~\ref{asmp:cpdag} on the performance of our \textbf{C-IFair} method.

\textbf{Results.} \Cref{table:linear_inadmissible_partition} presents the mean and standard deviation of RMSE and unfairness over 20 linear synthetic datasets.

As expected, the performance of our \textbf{C-IFair} method decreases compared with the results in \Cref{table:synth_rmse_unfairness} (top), due to the inaccurate cluster CPDAG inferred with the inadmissible partition.
However, our \textbf{C-IFair} method still achieves highly comparable performance to the baselines, 
and even outperforms some of them in terms of RMSE and unfairness,
demonstrating its empirical robustness to the violation of Assumption~\ref{asmp:cpdag}.
This might be attributed to the fact that, even with the violation of Assumption~\ref{asmp:cpdag}, the cluster CPDAG inferred by CLOC can still capture some of the true causal relationships among clusters, which can help identify some valid adjustment sets for estimating interventional distributions.

\begin{table*}[t]
  \centering
  \caption{RMSE and unfairness on held-out test set in linear synthetic datasets generated using causal DAG in \Cref{fig:conn_graph}~(a).}
  \label{table:unadjustable_rmse_unfairness}
  \scshape
  \scalebox{1.0}{%
  \begin{tabular}{lcc}
    \toprule
    & \textsc{RMSE}$\downarrow$ & \textsc{Unfairness}$\downarrow$ \\
    \midrule
    \textsc{Oracle} 
      & $0.756 \pm 0.053$ & $0.000 \pm 0.000$ \\
    \midrule
    \textsc{Full} 
      & $0.256 \pm 0.051$ & $0.093 \pm 0.083$ \\
    \midrule
    \textsc{Unaware} 
      & $0.707 \pm 0.158$ & $0.054 \pm 0.001$ \\
    \textsc{No-DesCs} 
      & $0.708 \pm 0.158$ & $0.044 \pm 0.001$ \\
    $\epsilon$-\textsc{IFair} 
      & $0.435 \pm 0.132$ & $0.026 \pm 0.004$ \\
    $\ell$-\textsc{IFair} 
      & $0.753 \pm 0.224$ & $0.027 \pm 0.019$ \\
    \textbf{\textsc{C-IFair}} 
      & $0.320 \pm 0.115$ & $0.025 \pm 0.013$ \\
    \bottomrule
  \end{tabular}%
  }
\end{table*}

	\begin{figure*}[t]
		\includegraphics[height=5.2cm]{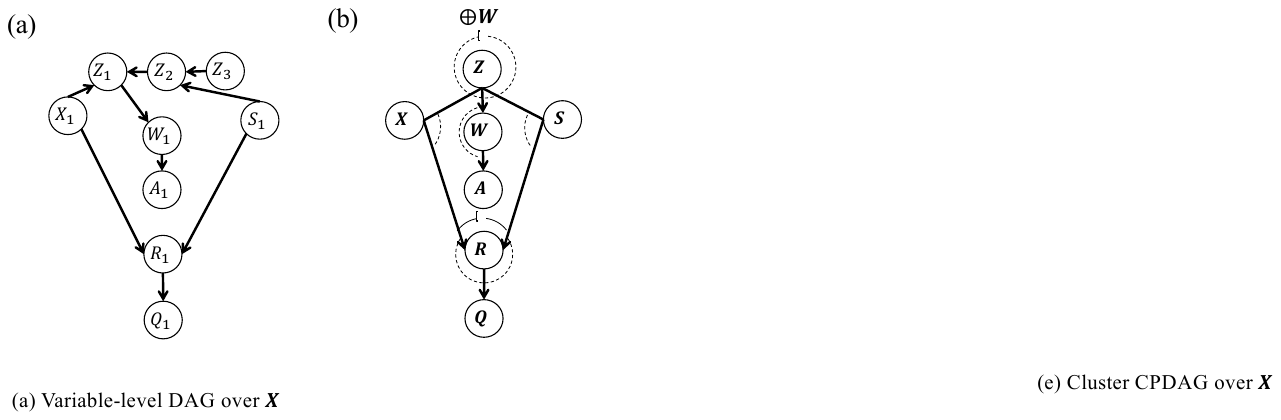}
		\centering 
			\caption{Specially designed causal graphs where parent of sensitive features $\biA$ is annotated in connection mark: (a) variable-level ground-truth DAG over $\biX$, (b) cluster CPDAG over $\biX$ with independence arcs and connection marks.} 
		\label{fig:conn_graph}
	\end{figure*}

\subsection{Experiments involving Connection Marks} \label{subsec-add-unadjustable}

As described in \Cref{subsec:adjustment_sets}, our strategy for obtaining adjustment sets is two-fold. We first enumerate a parent set of sensitive features $\biA$. If this set contains the clusters annotated with connection marks, then we augment it with additional clusters. 

To validate this second stage, we evaluate the performance on synthetic datasets generated using the causal DAG in \Cref{fig:conn_graph}~(a), where the parent of $\biA$ (i.e., $\biW$) appears in a connection mark in the corresponding cluster CPDAG (\Cref{fig:conn_graph}~(b)). 

\paragraph{Data and Causal Graphs.} We generate linear synthetic datasets using the same data generation process described in \Cref{subsec-synthdata}, except that we use the causal DAG in \Cref{fig:conn_graph}~(a) as the ground-truth variable-level DAG $\vDAG$. 

We choose this causal graph from \citet[Figure 4~(c)]{anand2025causal}, as its corresponding cluster CPDAG contains a cluster with a connection mark. We illustrate the cluster CPDAG in \Cref{fig:conn_graph}~(b), as well as the independence arcs and connection marks. As shown in \Cref{fig:conn_graph}, the number of variables and clusters are $d_v=9$ and $d=7$, respectively.

\paragraph{Results.} \Cref{table:unadjustable_rmse_unfairness} presents the mean and standard deviation of RMSE and unfairness over 20 synthetic datasets.

Our \textbf{C-IFair} strikes the best balance between RMSE and unfairness across baselines, 
implying that our adjustment cluster addition algorithm (\Cref{alg:add-adj-cluster-set}) successfully works to identify the valid adjustment sets.

\subsection{Graph Inference Performance Comparison} \label{subsec-add-graph-inference}

To illustrate why our \textbf{C-IFair} yields better performance than the baseline based on the variable-level CPDAG (i.e., $\ell$\textbf{-IFair}), we compare the performance of the CLOC algorithm with the PC algorithm in terms of both accuracy and runtime.

\paragraph{Baseline.} We compare the CLOC algorithm with the \textit{PC-then-cluster} approach in \citet{anand2025causal}, which first applies the PC algorithm to learn a variable-level CPDAG and then constructs a cluster CPDAG by grouping variables into clusters based on the pre-defined cluster partition.

\paragraph{Fairness and Objective of Performance Comparison.} It is important to note that \textbf{we cannot make an entirely fair comparison} between the CLOC and PC algorithms, as their inference targets are \textbf{different}. 
As noted by \citet{anand2025causal}, the PC-then-cluster approach is \textbf{not} guaranteed to ensure that the learned cluster CPDAG belongs to the cluster MEC, even when all conditional independence tests are correct. 

Moreover, we cannot easily jump to the conclusion that the CLOC's better performance implies the better performance of our \textbf{C-IFair}, as the performance of $\ell$\textbf{-IFair} does \textbf{not} depend on the performance of PC-then-cluster approach but on the performance of the PC algorithm itself.

Our objective here is to give an illustration of the potential performance gap between the CLOC and naive PC-then-cluster algorithms, which could be treated as indirect evidence for the better performance of our \textbf{C-IFair} than $\ell$\textbf{-IFair}.

\paragraph{Results.} \Cref{fig:comparison_with_PC} presents the comparison of SHD and runtime between the CLOC and PC-then-cluster algorithms on $20$ linear synthetic datasets.

As expected, the CLOC algorithm outperforms the PC-then-cluster approach in high-dimensional setups ($d=15$), where our \textbf{C-IFair} also achieves better performance than $\ell$\textbf{-IFair}. Notably, the runtime of the PC-then-cluster approach is substantially higher than that of the CLOC algorithm and highly variable across datasets particularly in high-dimensional setups ($d=15$).

Such superior and stable performance justifies the use of the CLOC algorithm for cluster CPDAG inference in our \textbf{C-IFair}, which might be one of the key factors contributing to the better performance of our method than $\ell$\textbf{-IFair}.

 	\begin{figure*}[t]
		\includegraphics[height=4.5cm]{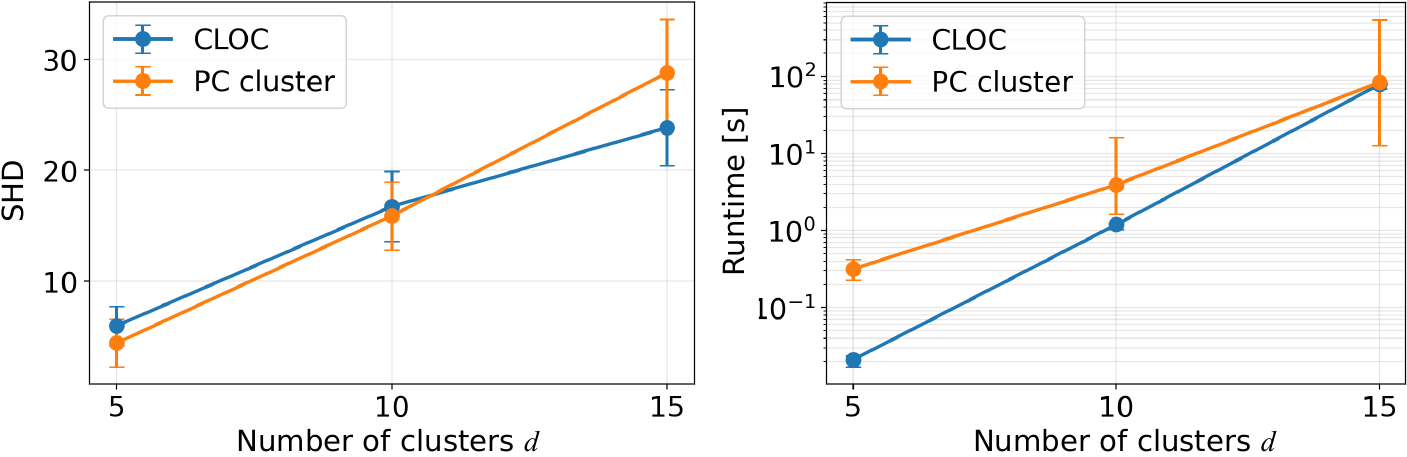}
		\centering 
			\caption{Performance comparison of CLOC with naive PC-based clustering on 20 linear synthetic datasets: (a) Mean and standard deviation of Structural Hamming distance (SHD) between the learned and true cluster DAGs; (b) Median and 25 / 75 percentiles of runtime [s] in a logarithmic scale.} 
		\label{fig:comparison_with_PC}
	\end{figure*}

  \paragraph{Discussion on time complexity.} 
  We conclude this section with a theoretical comparison of the worst-case time complexity of the CLOC and PC algorithms reported in the literature \citep{anand2025causal}.

  \citet{anand2025causal} show that the worst-case time complexity of the CLOC algorithm is $O(d^2 2^{p}(d + p^2))$, where $d$ is the number of clusters and $p$ is the maximum of cluster degrees. By contrast, the standard PC algorithm requires the worst-case time complexity of $O(d_v^2 2^{p_v})$ where $d_v$ is the number of variables, and $p_v$ is the maximum degree of the variable-level DAG.

  As discussed by \citet{anand2025causal},
  these complexity results suggest that 
  CLOC can be more efficient than PC when 
\begin{enumerate}
    \item the number of clusters $d$ is much smaller than the number of variables $d_v$, and
    \item inter-cluster connectivity is sparse (i.e., the maximum cluster degree $p$ is not larger).
\end{enumerate}
In real-world applications for algorithmic fairness, both of these conditions are often satisfied, as the number of clusters is typically much smaller than the number of variables, and inter-cluster connectivity is often sparse.

Hence, these theoretical insights imply that CLOC can be more efficient than the PC algorithm in practical applications. This efficiency justifies our refinement procedure for the cluster CPDAG using CLOC, described in \Cref{subsubsec:additional_clusters}.

\begin{table*}[t]
  \centering
  \caption{Performance comparison with and without oracle CPDAGs on synthetic datasets generated using the ER(2) graphs. An asterisk ($*$) indicates the use of an oracle variable-level or cluster-level CPDAG. Results are means $\pm$ standard deviations over 20 datasets.}
  \label{table:oracle_vs_without_oracle_ER2}
  \scshape

  \scalebox{0.97}{%
  \begin{tabular}{lcccccc}
    \toprule
    & \multicolumn{2}{c}{$d$=5 ($d_v$=15)} & \multicolumn{2}{c}{$d$=10 ($d_v$=30)} & \multicolumn{2}{c}{$d$=15 ($d_v$=45)} \\
    \textsc{Linear}
    & \textsc{RMSE}$\downarrow$ & \textsc{Unfairness}$\downarrow$
    & \textsc{RMSE}$\downarrow$ & \textsc{Unfairness}$\downarrow$
    & \textsc{RMSE}$\downarrow$ & \textsc{Unfairness}$\downarrow$ \\
    \midrule
    $\epsilon$-\textsc{IFair}$^*$
      & $0.655 \pm 0.048$ & $0.012 \pm 0.003$
      & $0.605 \pm 0.062$ & $0.014 \pm 0.009$
      & $0.597 \pm 0.052$ & $0.018 \pm 0.005$ \\
    $\epsilon$-\textsc{IFair}
      & $0.647 \pm 0.030$ & $0.069 \pm 0.031$
      & $0.663 \pm 0.062$ & $0.071 \pm 0.011$
      & $0.671 \pm 0.044$ & $0.040 \pm 0.011$ \\
    \midrule
    $\ell$-\textsc{IFair}$^*$
      & $0.869 \pm 0.039$ & $0.019 \pm 0.009$
      & $0.877 \pm 0.055$ & $0.018 \pm 0.006$
      & $0.712 \pm 0.027$ & $0.010 \pm 0.002$ \\
    $\ell$-\textsc{IFair}
      & $0.875 \pm 0.028$ & $0.069 \pm 0.081$
      & $0.964 \pm 0.049$ & $0.072 \pm 0.052$
      & $0.764 \pm 0.049$ & $0.052 \pm 0.052$ \\
    \midrule
    $C$-\textsc{IFair}$^*$
      & $0.596 \pm 0.055$ & $0.021 \pm 0.008$
      & $0.582 \pm 0.059$ & $0.014 \pm 0.007$
      & $0.629 \pm 0.075$ & $0.013 \pm 0.006$ \\
    $C$-\textsc{IFair}
      & $0.643 \pm 0.127$ & $0.060 \pm 0.054$
      & $0.660 \pm 0.123$ & $0.056 \pm 0.052$
      & $0.669 \pm 0.171$ & $0.020 \pm 0.036$ \\
    \bottomrule
  \end{tabular}%
  }

  \scalebox{0.97}{%
  \begin{tabular}{lcccccc}
    \toprule
    & \multicolumn{2}{c}{$d$=5 ($d_v$=15)} & \multicolumn{2}{c}{$d$=10 ($d_v$=30)} & \multicolumn{2}{c}{$d$=15 ($d_v$=45)} \\
    \textsc{Nonlinear}
    & \textsc{RMSE}$\downarrow$ & \textsc{Unfairness}$\downarrow$
    & \textsc{RMSE}$\downarrow$ & \textsc{Unfairness}$\downarrow$
    & \textsc{RMSE}$\downarrow$ & \textsc{Unfairness}$\downarrow$ \\
    \midrule
    $\epsilon$-\textsc{IFair}$^*$
      & $0.949 \pm 0.021$ & $0.032 \pm 0.021$
      & $1.019 \pm 0.017$ & $0.004 \pm 0.004$
      & $0.975 \pm 0.016$ & $0.001 \pm 0.001$ \\
    $\epsilon$-\textsc{IFair}
      & $0.938 \pm 0.006$ & $0.077 \pm 0.009$
      & $0.953 \pm 0.026$ & $0.027 \pm 0.000$
      & $0.962 \pm 0.016$ & $0.023 \pm 0.001$ \\
    \midrule
    $\ell$-\textsc{IFair}$^*$
      & $1.021 \pm 0.021$ & $0.004 \pm 0.001$
      & $1.193 \pm 0.046$ & $0.003 \pm 0.001$
      & $1.005 \pm 0.016$ & $0.002 \pm 0.001$ \\
    $\ell$-\textsc{IFair}
      & $0.987 \pm 0.065$ & $0.076 \pm 0.042$
      & $0.982 \pm 0.041$ & $0.023 \pm 0.010$
      & $1.042 \pm 0.041$ & $0.017 \pm 0.010$ \\
    \midrule
    $C$-\textsc{IFair}$^*$
      & $0.895 \pm 0.033$ & $0.045 \pm 0.047$
      & $0.875 \pm 0.061$ & $0.011 \pm 0.007$
      & $0.882 \pm 0.041$ & $0.004 \pm 0.002$ \\
    $C$-\textsc{IFair}
      & $0.929 \pm 0.051$ & $0.066 \pm 0.046$
      & $0.948 \pm 0.051$ & $0.021 \pm 0.035$
      & $0.960 \pm 0.036$ & $0.010 \pm 0.012$ \\
    \bottomrule
  \end{tabular}%
  }

\end{table*}

\begin{table*}[t]
  \centering
  \caption{Performance comparison with and without oracle CPDAGs on synthetic datasets generated using the ER(4) graphs. An asterisk ($*$) indicates the use of an oracle variable-level or cluster-level CPDAG. }
  \label{table:oracle_vs_without_oracle_ER4}
  \scshape

  \scalebox{1.0}{%
  \begin{tabular}{lcccc}
    \toprule
    & \multicolumn{2}{c}{$d$=10} & \multicolumn{2}{c}{$d$=15} \\
    \textsc{Linear}
    & \textsc{RMSE}$\downarrow$ & \textsc{Unfairness}$\downarrow$
    & \textsc{RMSE}$\downarrow$ & \textsc{Unfairness}$\downarrow$ \\
    \midrule
    $\epsilon$-\textsc{IFair}$^*$
      & $0.628 \pm 0.033$ & $0.015 \pm 0.013$
      & $0.548 \pm 0.107$ & $0.108 \pm 0.028$ \\
    $\epsilon$-\textsc{IFair}
      & $0.689 \pm 0.099$ & $0.124 \pm 0.060$
      & $0.503 \pm 0.073$ & $0.302 \pm 0.109$ \\
    \midrule
    $\ell$-\textsc{IFair}$^*$
      & $0.711 \pm 0.164$ & $0.032 \pm 0.011$
      & $0.685 \pm 0.117$ & $0.012 \pm 0.008$ \\
    $\ell$-\textsc{IFair}
      & $0.669 \pm 0.295$ & $0.077 \pm 0.053$
      & $0.767 \pm 0.241$ & $0.044 \pm 0.033$ \\
    \midrule
    $C$-\textsc{IFair}$^*$
      & $0.541 \pm 0.132$ & $0.006 \pm 0.001$
      & $0.511 \pm 0.087$ & $0.012 \pm 0.007$ \\
    $C$-\textsc{IFair}
      & $0.550 \pm 0.074$ & $0.040 \pm 0.061$
      & $0.498 \pm 0.121$ & $0.020 \pm 0.011$ \\
    \bottomrule
  \end{tabular}%
  }

  \scalebox{1.0}{%
  \begin{tabular}{lcccc}
    \toprule
    & \multicolumn{2}{c}{$d$=10} & \multicolumn{2}{c}{$d$=15} \\
    \textsc{Nonlinear}
    & \textsc{RMSE}$\downarrow$ & \textsc{Unfairness}$\downarrow$
    & \textsc{RMSE}$\downarrow$ & \textsc{Unfairness}$\downarrow$ \\
    \midrule
    $\epsilon$-\textsc{IFair}$^*$
      & $1.003 \pm 0.023$ & $0.002 \pm 0.001$
      & $0.976 \pm 0.001$ & $0.033 \pm 0.008$ \\
    $\epsilon$-\textsc{IFair}
      & $0.999 \pm 0.019$ & $0.024 \pm 0.005$
      & $0.916 \pm 0.051$ & $0.077 \pm 0.046$ \\
    \midrule
    $\ell$-\textsc{IFair}$^*$
      & $1.011 \pm 0.018$ & $0.001 \pm 0.001$
      & $1.046 \pm 0.041$ & $0.003 \pm 0.001$ \\
    $\ell$-\textsc{IFair}
      & $0.989 \pm 0.056$ & $0.014 \pm 0.001$
      & $1.082 \pm 0.002$ & $0.006 \pm 0.000$ \\
    \midrule
    $C$-\textsc{IFair}$^*$
      & $0.986 \pm 0.031$ & $0.003 \pm 0.001$
      & $0.972 \pm 0.014$ & $0.002 \pm 0.001$ \\
    $C$-\textsc{IFair}
      & $0.985 \pm 0.027$ & $0.011 \pm 0.002$
      & $0.976 \pm 0.023$ & $0.005 \pm 0.002$ \\
    \bottomrule
  \end{tabular}%
  }
\end{table*}

\subsection{Performance Comparison with/without Oracle CPDAGs} \label{subsec-add-oracle}

To investigate the performance difference due to the use of inferred variable-level or cluster CPDAGs, we test each of $\epsilon$-\textbf{IFair}, $\ell$-\textbf{IFair}, and \textbf{C-IFair} with and without oracle CPDAGs on synthetic datasets.

\paragraph{Data and Causal Graphs.} We use the linear and nonlinear synthetic datasets generated in \Cref{subsec-synthdata}. 
To examine robustness to graph structure, we use both the ER(2) and ER(4) models, i.e., ER models with expected node degrees $2$ and $4$, to generate the underlying variable-level DAGs.

\paragraph{Results.} \Cref{table:oracle_vs_without_oracle_ER2,table:oracle_vs_without_oracle_ER4} present the mean and standard deviation of RMSE and unfairness over 20 synthetic datasets.

Overall, using oracle CPDAGs tends to improve fairness performance, especially in terms of unfairness, because it removes errors from causal graph estimation and allows each method to construct adjustment sets from the correct Markov equivalence class.
This trend is observed for both variable-level methods, $\epsilon$-\textbf{IFair} and $\ell$-\textbf{IFair}, and our cluster-level method, \textbf{C-IFair}.
The improvement is more apparent in settings where causal discovery is more challenging, such as nonlinear datasets or denser ER(4) graphs.
These results support the importance of accurate causal graph information for achieving interventional fairness under graph uncertainty.

\subsection{Sensitivity Analyses} 

This section presents two sensitivity analyses: (1) the sensitivity of our \textbf{C-IFair} method to the choice of cluster sizes, and (2) the sensitivity to the IPW weight estimation and Mellowmax temperature parameter used in our unfairness evaluation procedure.

\subsubsection{Cluster Sizes} \label{subsec-add-sensitivity-cluster-size}

To investigate the sensitivity of our \textbf{C-IFair} method to the choice of cluster sizes, we evaluate its performance on synthetic datasets with different cluster sizes.

\paragraph{Data and Causal Graphs.} We use the nonlinear synthetic datasets generated by unbounded nonlinear functions  as described in \Cref{subsec-synthdata}. 

We randomly generate the variable-level causal graphs from the ER(2) model, and then construct cluster-level causal graphs by grouping variables into clusters based on the pre-defined cluster partition. 

We fix the number of variables $d_v$ to 30, and vary the number of clusters $d$ to 5, 10, and 15 by adjusting the cluster sizes to 6, 3, and 2, respectively.
Since this cluster-size adjustment also changes the number of variables in the sensitive cluster $\biA$ and the outcome cluster $\biY$, 
the performance of \textbf{all} methods can be affected by the choice of cluster size.

\paragraph{Results.} \Cref{table:synth_rmse_unfairness_cluster} presents the mean and standard deviation of RMSE and unfairness over 20 nonlinear synthetic datasets with different cluster sizes.

Overall, \textbf{C-IFair} achieves low unfairness across all cluster sizes, 
indicating that its fairness performance is reasonably robust to the choice of cluster granularity.
In particular, \textbf{C-IFair} obtains the lowest unfairness for cluster size $6$.
For cluster sizes $3$ and $2$, it achieves substantially lower RMSE than $\epsilon$-\textbf{IFair} and $\ell$-\textbf{IFair}, while maintaining comparable unfairness.

\begin{table*}[t]
  \centering
  \caption{Sensitivity analysis of cluster size on 20 nonlinear synthetic datasets generated by unbounded nonlinear functions (quadratic, cubic, and Sigmoid Linear Unit (SiLU)). Cluster size denotes the number of variables in each cluster and is set to 6, 3, and 2 for $d=5$, $d=10$, and $d=15$, respectively.}
  \label{table:synth_rmse_unfairness_cluster}
  \scshape

  \scalebox{0.92}{%
  \begin{tabular}{lcccccc}
    \toprule
    & \multicolumn{2}{c}{$d$=5 (cluster size 6)} & \multicolumn{2}{c}{$d$=10 (cluster size 3)} & \multicolumn{2}{c}{$d$=15 (cluster size 2)} \\
    \textsc{Method}
    & \textsc{RMSE}$\downarrow$ & \textsc{Unfairness}$\downarrow$
    & \textsc{RMSE}$\downarrow$ & \textsc{Unfairness}$\downarrow$
    & \textsc{RMSE}$\downarrow$ & \textsc{Unfairness}$\downarrow$ \\
    \midrule
    \textsc{Oracle} 
      & $0.910 \pm 0.149$ & $0.000 \pm 0.000$
      & $0.761 \pm 0.085$ & $0.000 \pm 0.000$
      & $0.648 \pm 0.118$ & $0.000 \pm 0.000$ \\
    \midrule
    \textsc{Full} 
      & $0.434 \pm 0.089$ & $0.262 \pm 0.226$
      & $0.526 \pm 0.087$ & $0.080 \pm 0.070$
      & $0.431 \pm 0.133$ & $0.016 \pm 0.013$ \\
    \midrule
    \textsc{Unaware} 
      & $0.718 \pm 0.181$ & $0.048 \pm 0.036$
      & $0.716 \pm 0.082$ & $0.031 \pm 0.009$
      & $0.605 \pm 0.108$ & $0.012 \pm 0.010$ \\
    \textsc{No-DesCs} 
      & $0.724 \pm 0.184$ & $0.036 \pm 0.024$
      & $0.611 \pm 0.070$ & $0.028 \pm 0.014$
      & $0.504 \pm 0.116$ & $0.013 \pm 0.012$ \\
    $\epsilon$-\textsc{IFair} 
      & $0.719 \pm 0.113$ & $0.015 \pm 0.012$
      & $0.671 \pm 0.045$ & $0.033 \pm 0.031$
      & $0.628 \pm 0.105$ & $0.013 \pm 0.011$ \\
    $\ell$-\textsc{IFair} 
      & $0.797 \pm 0.564$ & $0.039 \pm 0.026$
      & $0.813 \pm 0.057$ & $0.049 \pm 0.038$
      & $0.784 \pm 0.088$ & $0.015 \pm 0.019$ \\
    \textsc{C-IFair} 
      & $0.680 \pm 0.164$ & $0.012 \pm 0.013$
      & $0.556 \pm 0.060$ & $0.022 \pm 0.012$
      & $0.475 \pm 0.116$ & $0.012 \pm 0.009$ \\
    \bottomrule
  \end{tabular}%
  }
\end{table*}

\subsubsection{IPW Weight Estimation and Mellowmax Temperature Parameter} \label{subsec-add-sensitivity}

Our \textbf{C-IFair} method measures the unfairness of predictions by employing the weighted kernel MMD estimators and taking their maximum over adjustment sets. This unfairness evaluation procedure involves two mechanisms: (\myra) the use of IPW weights for estimating interventional distributions, and (\myrb) the use of the Mellowmax function \citep{asadi2017alternative} as a differentiable approximation of the maximum function.

To analyze the sensitivity to these two mechanisms, we evaluate the performance of our \textbf{C-IFair} method by varying the IPW clipping threshold and the Mellowmax temperature parameter $\omega$.

\textbf{Data.} We use exactly the same linear synthetic datasets as described in \Cref{subsec-synthdata}.

\textbf{Evaluation Protocol.} To evaluate the sensitivity to the IPW weights, we perform the clipping of (self-normalized) IPW weights. We clip the IPW weights using their $q$-th quantile in each mini-batch, where $q \in \{0.95, 0.975, 0.99\}$ and compare the performance with the case without clipping (i.e., $q=1.0$).

To test the sensitivity to the Mellowmax function, 
we vary the temperature parameter $\omega$ of the Mellowmax function, which controls the smoothness of the approximation
and approaches the maximum function as $\omega$ increases.
We evaluate the performance with $\omega \in \{2.0, 10.0, 100.0\}$ and compare it with the variant of our method that does not use Mellowmax but takes the maximum over adjustment sets. 

As described in \Cref{subsec-setup}, 
in our main experiments, we do not perform IPW clipping and use the Mellowmax function with $\omega=10.0$ as a default setting.

\textbf{Results.} \Cref{fig:IPW-clipping,fig:Mellowmax-temperature} present the average performance of our \textbf{C-IFair} method on 20 linear synthetic datasets when the IPW clipping threshold and the Mellowmax temperature parameter $\omega$ are varied.

We observe that the performance of our \textbf{C-IFair} method is not very sensitive to the IPW clipping or to the Mellowmax temperature value. However, our \textbf{C-IFair} method consistently achieves lower RMSE than the variant without Mellowmax (illustrated as the black dashed line in \Cref{fig:Mellowmax-temperature}), indicating that the use of Mellowmax function as differentiable approximation can help stabilize the gradient-based training and strike better balance between accuracy and fairness.

	\begin{figure}[t]
		\includegraphics[height=6cm]{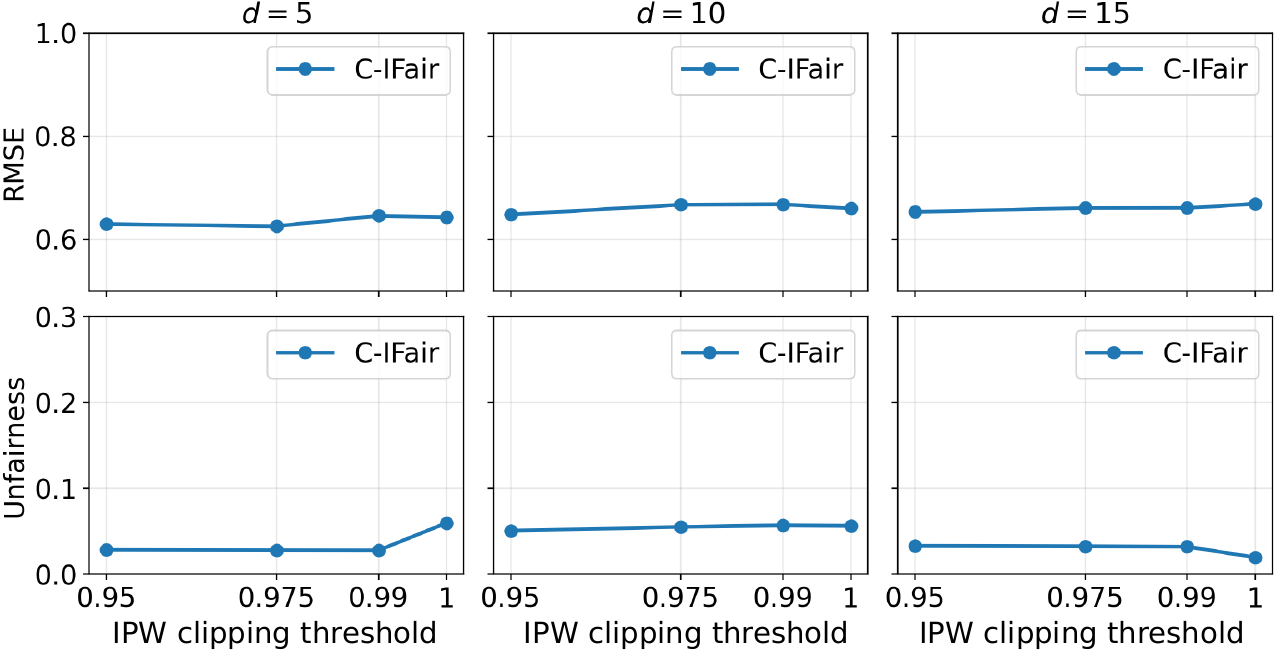}
		\centering 
			\caption{Effect of IPW clipping threshold on performance  on linear synthetic datasets with $d=5, 10, 15$ clusters}
		\label{fig:IPW-clipping}
	\end{figure}
  	\begin{figure}[t]
		\includegraphics[height=6cm]{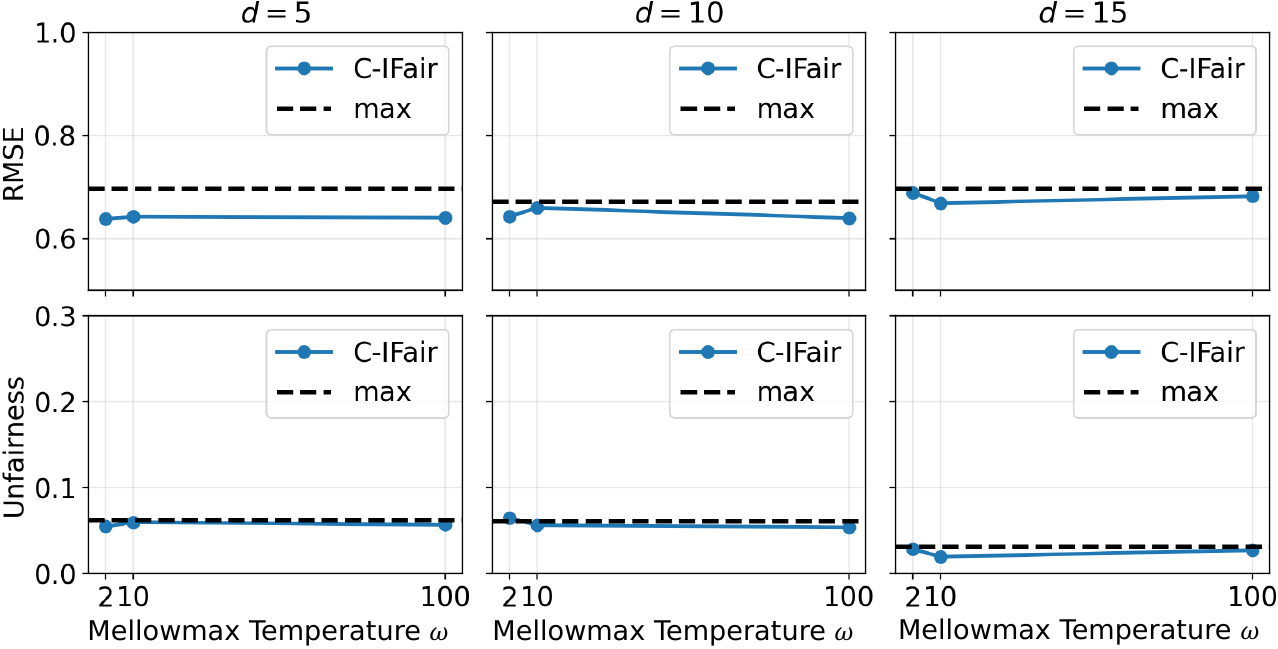}
		\centering 
			\caption{Effect of Mellowmax temperature on linear synthetic datasets with $d=5, 10, 15$ clusters. Black dashed line denotes variant of our \textbf{C-IFair} method that takes maximum over adjustment sets without using Mellowmax.}
		\label{fig:Mellowmax-temperature}
	\end{figure}

\section{Further Discussion on Cluster Formation and Predictive Modeling}
\label{app:cluster-formation}

\subsection{Relation between Admissible Partitions and Tier-Based Background Knowledge}

Our framework assumes that the user-specified cluster partition is admissible, i.e., the induced cluster-level graph does not contain directed cycles.
This assumption is related to, but different from, tier-based background knowledge commonly used in causal discovery \citep{andrews2020completeness,bang2023wiser}.

Tier-based background knowledge can be formalized as a partition of variables into ordered tiers $\biT_1,\dots,\biT_K$.
It requires that, for any directed edge $X_i\to X_j$ in the underlying DAG, if $X_i\in \biT_k$ and $X_j\in \biT_l$, then $k\leq l$.
Equivalently, directed edges from later tiers to earlier tiers are forbidden.
Such knowledge can help users construct a cluster partition that is likely to be admissible; for example, variables measured earlier in time, demographic attributes, intermediate socioeconomic variables, and outcome-related variables may naturally form different tiers.

However, admissibility does \textbf{not} require the user to provide such ordered tiers in advance.
It only requires that the induced cluster-level graph admits \textbf{some} acyclic ordering.
Thus, tier-based background knowledge is one practical way to guide cluster formation, but it is \textbf{not} identical to the admissible partition assumption.
Recent work also studies this distinction, showing that Cluster-DAG-based background knowledge can represent groupwise causal constraints beyond tier-based background knowledge \citep{vargas2026clusterdags}.

In our setup, forming clusters based on user-specified sensitive and admissible features must not introduce directed cycles in the cluster-level graph.
This is a limitation of the current fairness framework based on admissible partitions.
As discussed in \Cref{sec:related_work}, recent theoretical work has begun to relax partition admissibility in cluster DAGs \citep{yvernes2026relaxing}.
Extending our framework to such more general cluster causal representations is an important direction for future work.

\subsection{Toward More Effective Predictive Modeling}

Whereas our method does not exploit local causal structure inside the outcome cluster $\biY$, 
if we have access to the domain knowledge about such local structure, we can potentially improve predictive performance by leveraging specialized predictive models (e.g., graphical models using the local graph knowledge).
Developing such predictive models while preserving interventional fairness guarantees is an interesting direction for future work.

\end{document}